\documentclass{article} 
\PassOptionsToPackage{table}{xcolor}
\usepackage{iclr2027_conference,times}

\usepackage{hyperref}
\hypersetup{hidelinks}
\usepackage{url}

\usepackage{amsmath,amssymb,bm}
\usepackage{graphicx}
\usepackage{booktabs}
\usepackage{multirow}
\usepackage{longtable}
\usepackage{xcolor}
\usepackage{subcaption}
\usepackage{enumitem}
\usepackage{amsthm}
\usepackage{float}
\newtheorem{proposition}{Proposition}
\graphicspath{{./figures/}}

\newcommand{\decisive}{\textsc{decisive}}
\newcommand{\directional}{\textsc{directional}}
\newcommand{\boundarylbl}{\textsc{boundary}}
\newcommand{\why}{} 

\title{Stable and Counterfactually Robust Physical World Models from Imposed Structure and Learned Physics}

\author{Yufeng Wang \thanks{These authors contributed equally to this work}\\
  Stony Brook University \\
  \And
  Parivesh Priye $^*$\\
  Georgia Institute of Technology \\
  \And
  Lu Wei $^*$\\
  Stony Brook University \\
  \And
  Haibin Ling \\
Westlake University \\}

\iclrfinalcopy
\makeatletter
\g@addto@macro\@maketitle{\lhead{}}
\makeatother

\begin{document}

\maketitle

\begin{abstract}
  A world model learns to forecast how a physical system evolves from recorded trajectories, yet the systems it imitates obey physical laws that are neither fully supplied nor reliably respected. The model may create energy, drift or diverge over long rollouts, and answer a changed-law query using the law observed during training. We ask how much general physical structure must be hard-coded into a world model, and how much system-specific physics can then be learned from data, for four properties to hold simultaneously: second-law-compatible dissipation, correct responses to interventions on physical parameters, stability out to one hundred times the training horizon, and robustness to disturbances. The imposed structure is general: dynamics are generated from the gradient of a learned energy through a fixed reversible operator, the energy is restricted to a confining class, a one-way port can remove energy but never inject it, the drive channel is known, and the intervened parameter enters through a separable map. The model learns the energy functional, constitutive relations, dissipation rate, and couplings. Across an electromagnetic cavity, a particle-in-cell grid, and a shallow-water fluid, models with roughly nine thousand parameters recover constitutive functions with unit slope, separate conserving from dissipating worlds by four orders of magnitude using a single set of weights, and transfer changes in sign, magnitude, rate, and gravity to unseen values, where equal-capacity models without the same structure perform at chance or worse. A nonlinear constitutive law is recovered with its curvature preserved and predicts a held-out intervention $2$--$17\times$ better than a converged linear model. The same structured model also refuses interventions for which no parameter map was supplied, producing errors $7\times$ to $10^{9}\times$ larger than in-distribution, while oracle models given the missing map succeed; its one-way port likewise cannot represent anti-dissipation by construction.
\end{abstract}

\section{Introduction}
\label{iclr:intro}

A world model is a learned simulator: it predicts the next state of a system and iterates that prediction into a trajectory. Such models now achieve impressive fidelity on their training distributions, from graph-network and mesh-based simulators
\citep{sanchezgonzalez2020gns,pfaff2021meshgraphnets} and neural operators
\citep{li2021fno} to video-scale world models that plan in latent space
\citep{assran2025vjepa2,nam2026causaljepa}. Fidelity alone, however, does not establish that a model obeys the laws of the world it simulates. Physical-realism benchmarks show that even strong video models violate conservation, create energy, and fail on ``anti-physics'' probes
\citep{gu2026phyworldbench,bansal2025videophy2}. The same failures recur across scales: rollouts drift or diverge beyond the training horizon, energy appears where the true system dissipates it, and a changed-law query is answered using the law observed during training.

A natural remedy is to encode physical structure directly into the model. Hamiltonian and Lagrangian neural networks
\citep{greydanus2019hnn,cranmer2020lnn}, their control and generalised-coordinate variants
\citep{zhong2020symoden,lutter2019delan}, dissipative and port-Hamiltonian extensions
\citep{zhong2020dissipative,sosanya2022dhnn,desai2021phnn}, compositional port-Hamiltonian learning
\citep{neary2023compositional}, and GENERIC or metriplectic networks
\citep{hernandez2021,lee2021gfinn,gruber2023} conserve or dissipate a learned energy, and benchmarks show the numerical integrator can matter as much as the model form
\citep{zhong2021benchmark}. These methods nevertheless fit one system at a time, do not determine from data whether dissipation is present, give no rule for which interventions are answerable, state guarantees for continuous-time flows rather than the implemented discrete update
\citep{hairer2006gni}, and do not separate what the modeller supplies from what the data identify. Symbolic regression instead recovers equations directly
\citep{brunton2016sindy,messenger2021weak} but requires a candidate dictionary and reliable derivatives. We therefore ask how much general physics must be \emph{imposed} on a learned world model, and how much system-specific physics can then be \emph{learned} from data, for four behaviours to hold simultaneously: second-law-compatible dissipation, correct responses to interventions on physical parameters, stability to one hundred times the training horizon, and robustness to disturbances. We impose a general form and learn the specific content, testing this division across an electromagnetic ring cavity, a particle-in-cell grid, and a shallow-water fluid, against equal-capacity baselines and controls that remove one structural element at a time.

This paper makes four contributions. First, a single model in each regime determines from trajectories whether its world dissipates and recovers constitutive functions within the supplied structural form, while making explicit what is identifiable and how estimation error propagates into downstream predictions. Second, we attribute each of the four target behaviours to the structural component whose removal destroys it, and formalise the two guarantees provided by the framework, one for the dissipative port and one for form selection, with their precise scope. Third, we give a simple criterion for when an intervention on a physical parameter is answerable by construction, and show that the same models fail, as predicted, on interventions outside that criterion. Fourth, supporting studies test whether these conclusions persist under increased scale, observation noise, and a broader collection of everyday physical systems.

\section{Related work}
\label{iclr:related}

\textbf{Structure-preserving learned dynamics.} Hamiltonian neural networks
\citep{greydanus2019hnn} learn a scalar energy whose symplectic gradient defines the
vector field, while Lagrangian networks \citep{cranmer2020lnn} learn the corresponding
dynamics through a Lagrangian formulation. Symplectic ODE-Net
\citep{zhong2020symoden} and Deep Lagrangian Networks
\citep{lutter2019delan} extend these ideas to controlled systems and generalised
coordinates, and differentiable contact models further incorporate impacts
\citep{zhong2021contact}. Dissipative SymODEN
\citep{zhong2020dissipative}, Dissipative HNNs
\citep{sosanya2022dhnn}, and port-Hamiltonian networks
\citep{desai2021phnn} introduce irreversible dynamics, while compositional
port-Hamiltonian learning assembles larger systems from learned components
\citep{neary2023compositional}. GENERIC and metriplectic networks impose the
reversible/irreversible decomposition together with its degeneracy conditions
\citep{hernandez2021,lee2021gfinn,gruber2023}, and benchmark studies show that the
numerical integrator and evaluation horizon can matter as much as the model form
\citep{zhong2021benchmark,hairer2006gni}. These approaches generally fit one system
at a time. When a dissipative component is present, its sign is either fixed by the
model class or left unconstrained, and none jointly determines from data whether
dissipation is present, carries a continuous-time guarantee to the implemented
discrete update, and explicitly separates what the modeller supplies from what the
data identify (App.~\ref{app:related}, Table~\ref{tab:related-compare}).

\textbf{Identification, counterfactuals, selection, and scale.} Sparse symbolic
regression identifies governing equations from a dictionary of candidate terms
\citep{brunton2016sindy}, while its weak Galerkin form reduces dependence on clean
derivative estimates \citep{messenger2021weak}. These methods are particularly strong
identifiers when the candidate dictionary contains the true dynamics, and we report
the regimes in which they outperform a learned simulator. Whether an intervention
outside the training support can be identified is closely related to the positivity
condition in causal inference \citep{pearl2009}, which motivates our distinction
between extrapolation, interpolation, and composition. Split-conformal prediction
\citep{lei2018conformal} provides the finite-sample coverage statement used when
selecting among candidate structural forms. At larger scale, graph-network and
mesh-based simulators \citep{sanchezgonzalez2020gns,pfaff2021meshgraphnets}, Fourier
neural operators \citep{li2021fno}, and video world models that predict and plan in
latent space \citep{assran2025vjepa2,nam2026causaljepa} define the current competence
frontier. Compute-optimal scaling \citep{hoffmann2022chinchilla} motivates the broader expectation that increased capacity improves generalisation, while physical-realism benchmarks test how far that expectation extends to consistency with physical laws \citep{gu2026phyworldbench,bansal2025videophy2}. Our scaling study probes this question using the same intervention applied to the structured models.

\section{Methods: regimes, models, guarantees, and the criterion}
\label{iclr:program}

\begin{figure}[h]
  \centering
  \includegraphics[width=\linewidth]{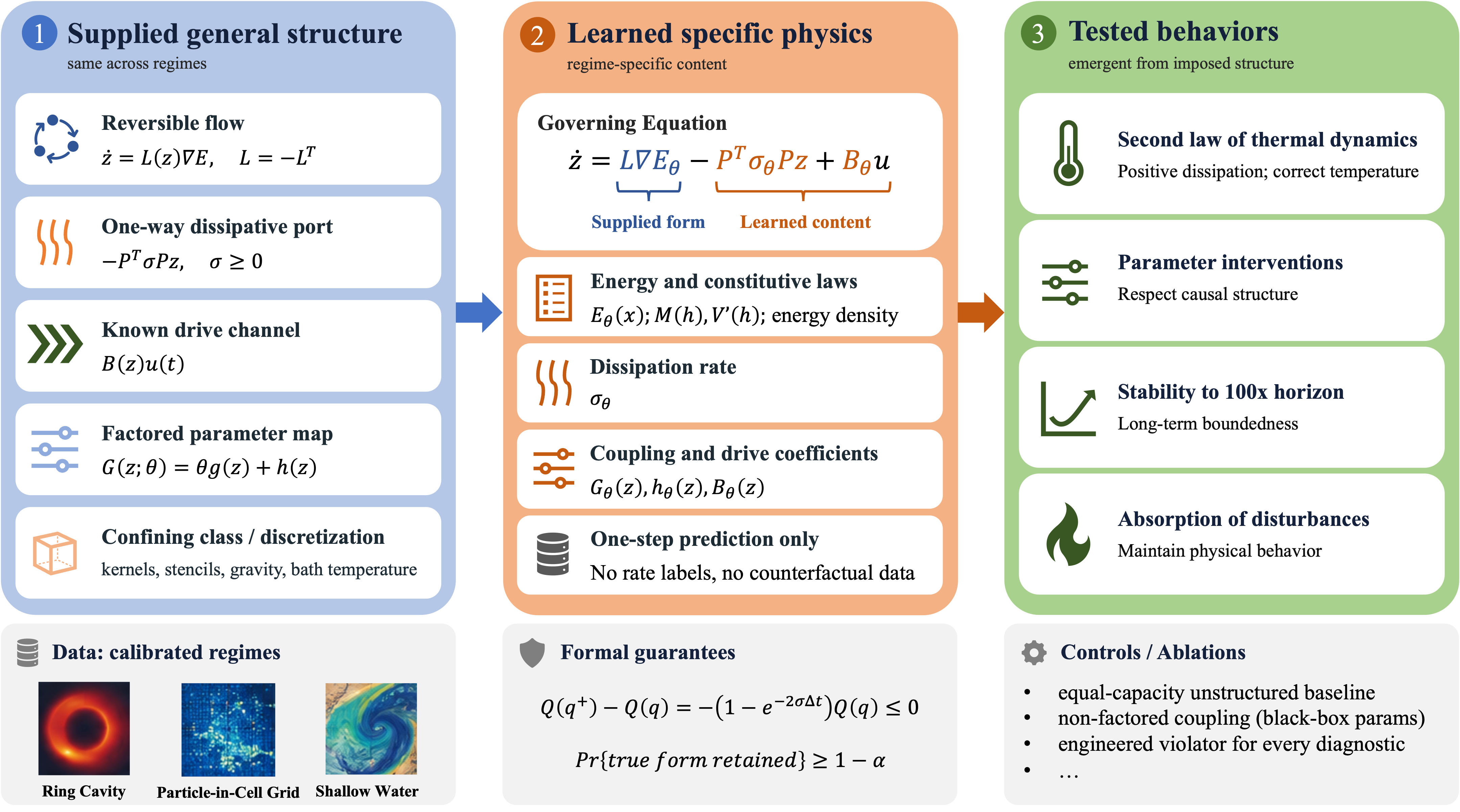}
  \caption{Overview of the approach: the general physical structure supplied by the modeller (left), the system-specific physics learned from data (centre), and the four behaviours supported by each structural element together with the
  interventions the structure refuses (right).}
  \label{fig:overview}
\end{figure}

This section follows a simple division of labour: the modeller specifies the general
form of the dynamics, the data determine the system-specific physics within that form, and each target behaviour is evaluated against a control that removes one structural component, so the resulting capability can be attributed to the component it depends on (Figure~\ref{fig:overview}).

\subsection{Three physical test systems, audited before any training} \label{sec:regimes} 

A structural claim is most convincing on systems where it could fail and where the ground truth can be checked independently. We therefore evaluate three deliberately different physical systems, each small enough to audit yet each stressing the imposed structure in a distinct way. The \emph{ring cavity} is a chaotic particle-field system in which charged particles on a periodic ring exchange energy with $K=16$ electromagnetic modes through a fixed smoothing kernel, integrated with a palindromic Strang splitting that is time-reversible and energy-preserving. The \emph{particle-in-cell grid} is a genuine field theory with a conserved charge through a Gauss-law invariant, implemented on a $64\times64$ periodic Yee finite-difference grid with Boris particle pushes and charge-conserving Esirkepov current deposition. The \emph{shallow-water fluid} is non-electromagnetic and exactly conserves mass, with water height $h(x)$ and momentum evolved on a periodic line using a spectral implicit-midpoint integrator. Together, these systems span a chaotic particle-field model, a conserved-charge field theory, and a fluid, reducing the chance that a shared result is specific to one physical setting. Training episodes contain $500$ steps for the cavity and grid and $200$ steps for the fluid. Every stability claim is evaluated for $50{,}000$ steps, corresponding to $100\times$ and $250\times$ the training horizons, respectively. 

Before any model is trained, each reference simulator must pass conservation tests over the full evaluation horizon, because long-horizon model stability cannot be interpreted below the numerical drift of the reference itself. All three simulators pass with margin, with energy drift at the $10^{-5}$ level or better and structural invariants preserved near machine precision. This calibration stage also exposed two integrator defects and a static self-force that would otherwise have appeared as learned-model instability at long horizon (App.~\ref{sec:systems}).

\subsection{The imposed form and what is learned}
\label{sec:form}

Every structured model evolves a state $z$ through a supplied antisymmetric operator acting on the gradient of a learned energy, a scalar one-way port on a designated coordinate block, and a known drive channel:
\begin{equation}
  \dot z=L(z)\nabla E_\theta(z)-P^\top\sigma_\theta Pz+B_\theta(z)u(t),
  \qquad L=-L^\top,\quad \sigma_\theta=\mathrm{raw}_\theta^2\ge0 .
  \label{eq:form}
\end{equation}
Here $P$ selects the dissipated coordinates, with $q=Pz$ and $PP^\top=I$. Antisymmetry implies $\dot E_\theta=0$ for the isolated continuous reversible dynamics; interpreting $L$ as a Poisson bracket additionally requires the Jacobi identity. The port is a scalar contraction of $q$, not an entropy gradient. For a quadratic diagnostic energy on the dissipated coordinates,
\begin{equation}
  Q(q)=\tfrac12\,q^\top Kq,\quad K\succ0,\qquad
  \dot Q\big|_{\rm port}=-\sigma_\theta\,q^\top Kq\le0,
  \label{eq:portQ}
\end{equation}
and the frozen-rate exponential step used by the implemented integrator contracts $Q$ exactly (Prop.~\ref{prop:discrete2law}). This is the guarantee, and its scope is restricted to this diagnostic and channel. It is not a statement about the learned energy $E_\theta$, since coordinate contraction can increase a non-quadratic energy (App.~\ref{sec:coverage}); nor does it apply to drives or thermal noise, which inject energy by design, or to an arbitrary damping matrix, which does not inherit this property merely from being positive semidefinite.

A physical parameter $\theta$ that may be intervened on, such as charge, gravity, or a rate, enters through a supplied factored map,
\begin{equation}
  G_\vartheta(z;\theta)=\theta\,g_\vartheta(z)+h_\vartheta(z),
  \label{eq:factored2}
\end{equation}
so a new value of $\theta$ evaluates a prescribed continuation of the learned content rather than requiring a new function. Each modal energy is confining. When the total energy, including its coupling terms, is coercive, its level sets are compact and exact conservative trajectories remain bounded. We state this condition explicitly as an assumption and distinguish it from the finite-horizon boundedness observed experimentally (Prop.~\ref{prop:A}, App.~\ref{app:theory}).

The division of labour between modeller and data is explicit throughout the paper. The modeller supplies the bracket $L$, the discretization and field kernels for each regime, the confining energy class, the location and sign of the port, the drive channel $B_\theta$, the factored map in Eq.~\eqref{eq:factored2}, the fluid gravity $g$, the bath temperature, and, for form selection, a finite menu of candidate structures. The data determine the system-specific content: the energy functional and constitutive functions, the dissipation rate, the drive-response and coupling coefficients, and the parameters within each candidate form. Each learned quantity is evaluated separately in the results, with the concrete learned content for each regime reported in App.~\ref{sec:models}.

\subsection{Two guarantees, with their scope}

The imposed structure provides two guarantees, stated and proved formally in App.~\ref{sec:coverage}. The first concerns the dissipative port at the level of the implemented discrete update. For a quadratic diagnostic energy on the port channel, the exact exponential update contracts that energy at every step size and never injects it (Prop.~\ref{prop:discrete2law}); under fluctuation-matched stochastic forcing, the isolated port also has the corresponding Gibbs distribution as its invariant law. Composition with the reversible step and the use of state-dependent rates fall outside this guarantee, so the full-model temperature results reported below remain empirical. The second is a finite-sample coverage guarantee for the form selector. Under exchangeability, the true in-family form is retained with probability at least $1-\alpha$ (Prop.~\ref{prop:fourway}). This is a marginal coverage statement. It does not bound the error of an individual selective decision or guarantee coverage after the data-generating process changes, both of which we evaluate separately.

\subsection{Controls that isolate the structure, and how claims are measured}

Every claim is evaluated against a control that removes one structural component, so the resulting behaviour can be attributed to the component whose removal destroys it. An equal-capacity \emph{unstructured baseline} removes all imposed structure and tests whether parameter count alone explains the result. A \emph{non-factored control} retains locality and field mediation but couples the intervened parameter through a black box, so behaviours it preserves are credited to locality while failures isolate the separable parameter map. A \emph{sign-free port} allows the dissipation rate to take either sign, separating the guarantee of non-injective dissipation from the ability to represent its violation. An \emph{oracle} receives the intervened physics directly, showing that a refused intervention is learnable when the missing structure is supplied. All models remain small, with approximately $8$ to $12$ thousand parameters, and are trained only for next-state prediction. Architectures, exact parameter counts, and training settings are given in App.~\ref{sec:models}.

Three measurements recur throughout the results. Rollout accuracy is measured by normalised mean-squared error against the reference trajectory from the same initial state; energy behaviour is measured by the relative drift of the true energy along a rollout; and a learned rate is evaluated by its relative error against the true rate. Every experiment is repeated across independent random initialisations, and we report the resulting variation through error bars or standard deviations rather than a single run. Each diagnostic is also paired with an engineered case that it must detect. Formal metric definitions, the repetition protocol, and the three evidence labels used to grade findings in the appendices are provided in App.~\ref{sec:models} and App.~\ref{sec:program}.

\subsection{When is a counterfactual answerable by construction?}
\label{iclr:criterion}

Let the true generator be $G_*(z;\theta)=\theta\,g_*(z)+h_*(z)$ and the learned generator $G_\vartheta$ be defined as in Eq.~\eqref{eq:factored2}. If the learned content matches the truth on a state region $\mathcal R$, then for every admissible
$\theta'$,
\begin{equation}
  g_\vartheta=g_*\ \text{and}\ h_\vartheta=h_*\ \text{on}\ \mathcal R
  \quad\Longrightarrow\quad
  G_\vartheta(z;\theta')=G_*(z;\theta')\ \text{for all}\ z\in\mathcal R,
  \label{eq:transfer}
\end{equation}
so the intervention is answered exactly wherever the learned content remains accurate on the states visited by the rollout (Prop.~\ref{prop:B}; a known nonlinear parameter map would provide the same principle). Without a restriction on how the parameter enters the dynamics, observations confined to the training support cannot distinguish functions that agree on that support but differ outside it. No amount of additional in-support data can therefore determine the off-support response.

Two consequences organize our experiments. First, a counterfactual test is informative only when the queried parameter value leaves its marginal training support. In-support values and novel combinations of previously observed values instead test interpolation and composition, capabilities that locality and field mediation can already provide. Second, the criterion predicts its own failures. An intervention on a parameter for which no map is supplied, such as a kernel width inside a Gaussian or an exponent inside a power law, should fail, while an oracle supplied with the missing map should succeed. Such a failure therefore identifies missing intervention structure rather than insufficient data or model capacity.

\section{Results}
\label{iclr:results}

\subsection{One set of weights decides whether a world dissipates and recovers its functions}
\label{iclr:discovery}

The first question is whether a learned simulator can decide from trajectories whether its world loses energy and recover the constitutive functions that govern it. Existing structured models do not pose this directly: conservation is fixed by the class or a dissipative head may take either sign, so conservative and weakly dissipative worlds need not be distinguished. The one-way port in Eq.~\eqref{eq:form} makes ``conservative'' the $\sigma=0$ member of the dissipative family, turning the fitted rate into a measurable quantity. We train one model jointly on conservative, dissipative ($\sigma=0.5$), driven, and external-field cavity episodes, giving only the condition identity.

The resulting model separates the rates by four orders of magnitude: $\hat\sigma_{\rm cons}=1.2\times10^{-5}$, corresponding to an implied spurious loss of $0.6\%$ over $50{,}000$ steps, $\hat\sigma_{\rm diss}=0.481$, and exactly zero for the external-field condition. The fluid reproduces the same separation, $\hat\sigma_{\rm inviscid}=1.4\times10^{-4}$ versus $\hat\sigma_{\rm viscous}=0.500$. The controls isolate the source of this result. Without the port, the model cannot fit the dissipative world, with rate error $0.99$; the informed baseline fails outright, with rollout error ${\sim}10^{6}$; and the sign-free port learns a small \emph{negative} rate for the conservative condition, creating a spurious energy source that the one-way port forbids by construction. The estimated rate is not merely decorative. Clamping the dissipative rate to incorrect values and retraining the remaining components leaves the fit minimum at the true value, showing that the learned energy does not absorb the missing dissipation. On fresh data, the conservative and dissipative rates remain separated by orders of magnitude, and a label-free encoder recovers the same distinction from only the first $100$ steps of a rollout (App.~\ref{sec:discovery}).

\begin{table}[h]
  \centering
  \caption{Recovered constitutive functions. Learned against the reference over each
    operating range, the grid's vacuum-energy derivatives are linear, implying a
    quadratic energy, while the fluid's mass density $M(h)$ and potential force $V'(h)$
    are recovered at unit slope. Gravity is supplied, so the fluid coefficient is an
    effective coefficient rather than discovered gravity, and $V'$ is identifiable only
    up to an additive constant.}
  \label{tab:discovery}
  \begin{tabular}{llcc}
    \toprule
    system & learned function & slope vs.\ reference & $R^2$ \\
    \midrule
    grid  & field constitutive derivative & $1.043$ & $1.0000$ \\
    grid  & field constitutive derivative & $1.001$ & $1.0000$ \\
    fluid & mass density $M(h)$           & $1.000$ & $1.00$ \\
    fluid & potential force $V'(h)$       & $1.000$ & $1.00$ \\
    \bottomrule
  \end{tabular}
\end{table}

The same weights recover functions rather than only scalar coefficients. On the grid, with no functional form assumed beyond confinement, the learned constitutive derivatives are linear, so the model recovers the quadratic form of the vacuum energy. A residual $4\%$ slope-ratio systematic remains, which we report as a constitutive diagnostic rather than interpret as an independent wave-speed estimate. In the fluid, the mass density $M(h)$ and potential force $V'(h)$ are recovered at unit slope (Table~\ref{tab:discovery}). Because gravity is supplied, the fitted coefficient is effective rather than an independently discovered gravity, and $V'$ is recovered only up to a Casimir gauge that leaves the trajectories unchanged. A genuinely nonlinear truth, $V'(h)=h^{2}$, is recovered with its curvature. Compared with an explicitly linear constitutive model trained to convergence on identical data, the nonlinear model reduces one-step, rollout, and held-out gravity-intervention errors by $6$--$17\times$ at small excitation and $2$--$5\times$ at larger excitation, while retaining the same long-horizon stability. The advantage appears primarily in downstream prediction rather than uniform function error because a linear law approximates $h^2$ closely over the sampled operating range (Table~\ref{tab:nonlinear-recovery}, App.~\ref{sec:discovery}).

At the next level, the model can also select among candidate structural forms. A Bayesian information criterion over structural channels identifies the true form on all $40$ stress-ladder worlds and never selects a non-nested incorrect form. On the cavity, it selects the dissipative and driven forms by $\Delta$BIC $2257$ and $1054$, while also recovering the dissipation rate. The only near-degenerate case is the conservative world, where the one-step objective cannot cleanly separate exact conservation from vanishingly small dissipation, even though those alternatives differ strongly over the full rollout horizon. The main selection result therefore uses a frozen conformal selector evaluated across four independently implemented generator families. It reaches four-way accuracy $0.84$--$0.89$ and true-candidate coverage $0.73$--$0.90$ (Table~\ref{tab:selector-full}). Coverage below the nominal $0.90$ on two families occurs under generator shift and therefore does not contradict the exchangeable coverage guarantee (App.~\ref{sec:coverage}). Taken together, one model per regime identifies whether its world dissipates, recovers the constitutive content up to stated non-identifiabilities, and carries its estimation error into the predictions that use that content.

\subsection{The second law holds by construction, and the guarantee has a price}

An unconstrained model gives no structural reason for the sign of a passive energy exchange to be correct. Our solution is architectural: the port in Eq.~\eqref{eq:form} can only remove the quadratic diagnostic energy of its channel (Prop.~\ref{prop:discrete2law}), so within that scope non-injective dissipation is a property the model cannot violate rather than one it must learn. The remaining questions are whether it recovers the amount of dissipation quantitatively and at what representational cost, which we test on the cavity with a learned rate, a driven condition, a fluctuating bath, an engineered violator, and a sign-free port that exposes what the constraint excludes.

\begin{figure}[h]
  \centering
  \includegraphics[width=\linewidth]{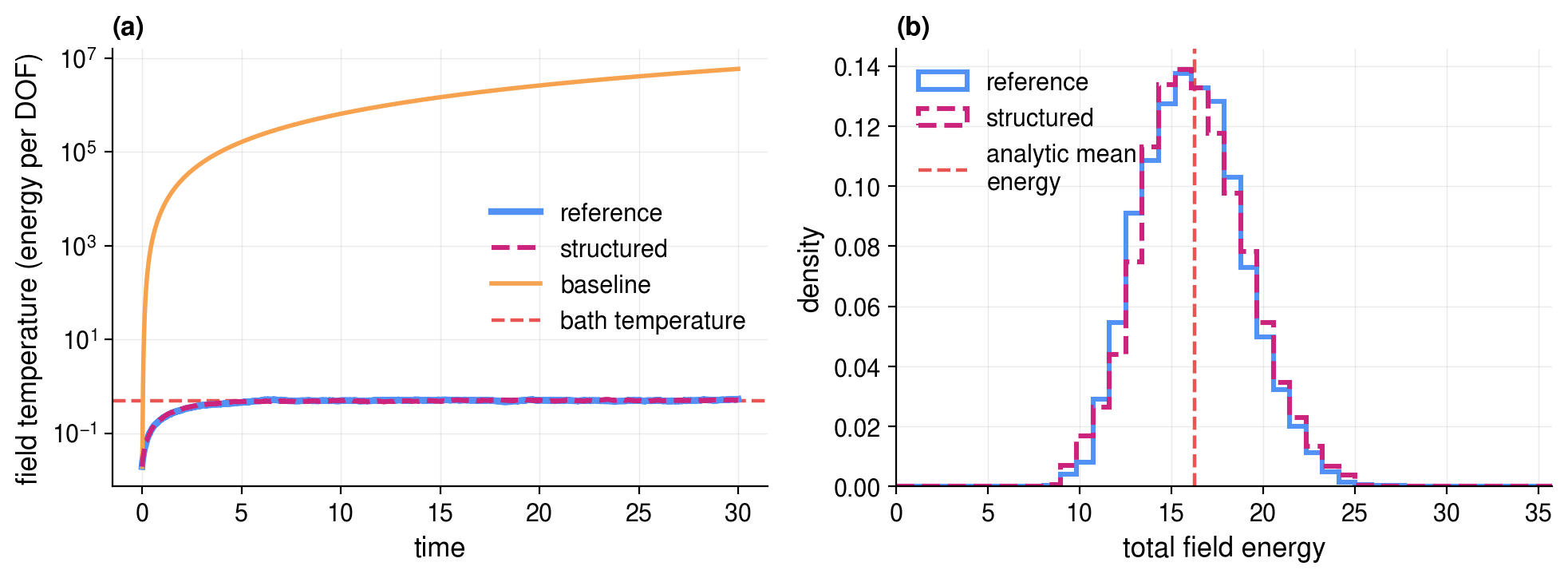}
  \caption{The stochastic second law under a fluctuating bath: field temperature over time (a) and the stationary distribution of the field energy (b). Reference in blue, structured model in dashed magenta, equal-capacity baseline in orange, and the bath temperature and analytic mean as dashed red lines; the baseline is off-scale in (b).}
  \label{fig:secondlaw}
\end{figure}

The deterministic decay rate is recovered to within $0.4\%$, and the driven steady state is reproduced correctly. Figure~\ref{fig:secondlaw} shows the fluctuating-bath experiment. The left panel reports field temperature over time, while the right panel shows the stationary field-energy distribution. From a cold start, the structured model relaxes with the reference to the bath temperature, with both field sectors agreeing, whereas the equal-capacity baseline reaches a temperature roughly six orders of magnitude too high. A noise-doubled violator is flagged reliably while the structured model never triggers the diagnostic. The stronger evidence comes from the magnetic sector, which is neither directly damped nor driven. It can thermalise only through the learned reversible coupling, so its convergence to the same temperature shows that the learned conservative core transfers energy with the required balance, something a dissipation-side fit alone cannot produce. The representational price is equally explicit. When asked to run a negative dissipation rate, the constrained port clamps at zero while the sign-free port tracks the divergent world. A model family that guarantees the sign of dissipation therefore cannot represent the opposite sign through that channel, and genuine energy injection must instead enter through the explicit drive (App.~\ref{sec:secondlaw}). This constraint matters most near $\sigma=0$, where the data provide the weakest sign information and the sign-free control in the previous section invents a small energy source.

\subsection{Interventions on physical parameters transfer where the map is supplied and are refused where it is not}
\label{iclr:cf}

\begin{table}[h]
  \centering
  \caption{A charge-magnitude counterfactual on the cavity. Rollout error (nMSE at $1000$
    steps, mean $\pm$ s.d.\ over repeated runs) at three unseen magnitudes, for a model
    trained at $|q|=1$ and a control trained on the wider range $|q|\in\{1,1.5\}$; both
    share the supplied charge map. The two rows coincide across all test values, showing
    that wider parameter coverage provides no measurable benefit. Residual error is set by
    the state region visited by the rollout rather than by distance from the training
    value.}
  \label{tab:counterfactual}
  \begin{tabular}{lccc}
    \toprule
    training range & $|q|=1.5$ & $|q|=2.0$ & $|q|=2.5$ \\
    \midrule
    $|q|=1$            & $0.46\pm0.12$ & $0.45\pm0.14$ & $0.46\pm0.12$ \\
    $|q|\in\{1,1.5\}$  & $0.45\pm0.09$ & $0.45\pm0.11$ & $0.44\pm0.09$ \\
    \bottomrule
  \end{tabular}
\end{table}

The second behaviour asks whether a model can answer a ``what if'' about a physical parameter at a value absent from training, which learned simulators often answer with the training law. The criterion in \S\ref{iclr:criterion} says when a response is available by construction: the parameter must enter through a supplied map, and the query must leave its training support. We test both sides: transfer should hold for a flipped charge sign and for unseen magnitudes, rates, and gravity when their dependence is factored, and should be refused when no map is supplied, with an oracle given the missing map showing the refused physics is learnable.

The factored model follows the flipped-sign world on both electromagnetic regimes ($p=0.001$), and the result replicates on fresh data. The baseline and non-factored control instead perform at or \emph{below} chance. Every training trajectory rotates in one direction, so the black box learns this rotation as a property of the dynamics and retains it after the sign flip. Magnitude, rate, and gravity also transfer when the generator is linear in the intervened parameter. A doubled charge matches an oracle trained directly at that value, while the non-factored control reverses its rotation; a dissipation rate trained at $0.5$ transfers to $0.25$; the fluid viscosity transfers with comparable accuracy; and changing gravity from $1.0$ to $1.5$ yields error $6.8\times10^{-11}$, indistinguishable from the in-distribution value, while the non-separable control and baseline fail by three to twelve orders of magnitude. Increasing the training coverage of the parameter provides no additional benefit (Table~\ref{tab:counterfactual}). The residual error is determined by the region of state space visited by the new trajectory rather than by the numerical distance of the parameter from training. Two observations sharpen this distinction. First, composition and extrapolation separate cleanly: both field-mediated models compose an unseen mixed-charge configuration, whereas only the factored model extrapolates the governing law. Locality therefore supports composition, while the supplied parameter map supports the support-moving intervention. Second, the remaining error is limited by state-space coverage rather than parameter-space distance: the flipped grid trajectory eventually leaves its trained state envelope, while the cavity remains accurate for a doubled charge through $1000$ steps.

The refusals are equally informative. In each regime, an intervention whose parameter map was not supplied causes the factored model to fail while an oracle given the missing map succeeds. A particle-shape width inside a Gaussian produces error $111\times$ the in-distribution value, a pressure exponent inside $h^\gamma$ produces approximately $10^{9}\times$ the in-distribution error, and the grid width produces a $7.0\times$ increase, below the pre-specified $10\times$ threshold. The corresponding oracle succeeds in every case (App.~\ref{sec:falsifiers}). Misspecification of the broader function class can also be detected empirically: a confining-quadratic energy fitted to a quartic world raises its fit floor by four orders of magnitude and, once the quartic contribution becomes sufficiently strong, fits worse than a capacity-matched black box even without access to the true model class. These are implementation-level failures rather than an impossibility theorem for nonlinear maps. Their significance is that the criterion predicts where the tested structure should stop transferring, turning the successful interventions from isolated examples into a falsifiable rule.

\subsection{Long-horizon stability needs a confining energy and an energy-consistent coupling}
\label{iclr:behaviours}

The third behaviour is remaining physically bounded far beyond the training horizon. Unconstrained simulators fail because small one-step errors accumulate without a structural mechanism that bounds the state. A conserved energy alone might appear sufficient, but the ablations show that two ingredients are required. With an unconstrained learned field energy, the cavity model fits the training dynamics and passes the counterfactual tests, yet its $50{,}000$-step rollout diverges, reaching deviations of $2.9\times10^{7}$. Restricting the learned energy to the confining class changes the result from uniformly diverging to uniformly bounded. Figure~\ref{fig:behaviours} shows the full rollout for three models. The structured model remains near $10^{-2}$ relative true-energy deviation, the equal-capacity baseline exceeds $10^{8}$ early in the rollout, and the non-factored control reaches $10^{4}$ despite sharing the same confining energy.

\begin{figure}[h]
  \centering
  \includegraphics[width=0.8\linewidth]{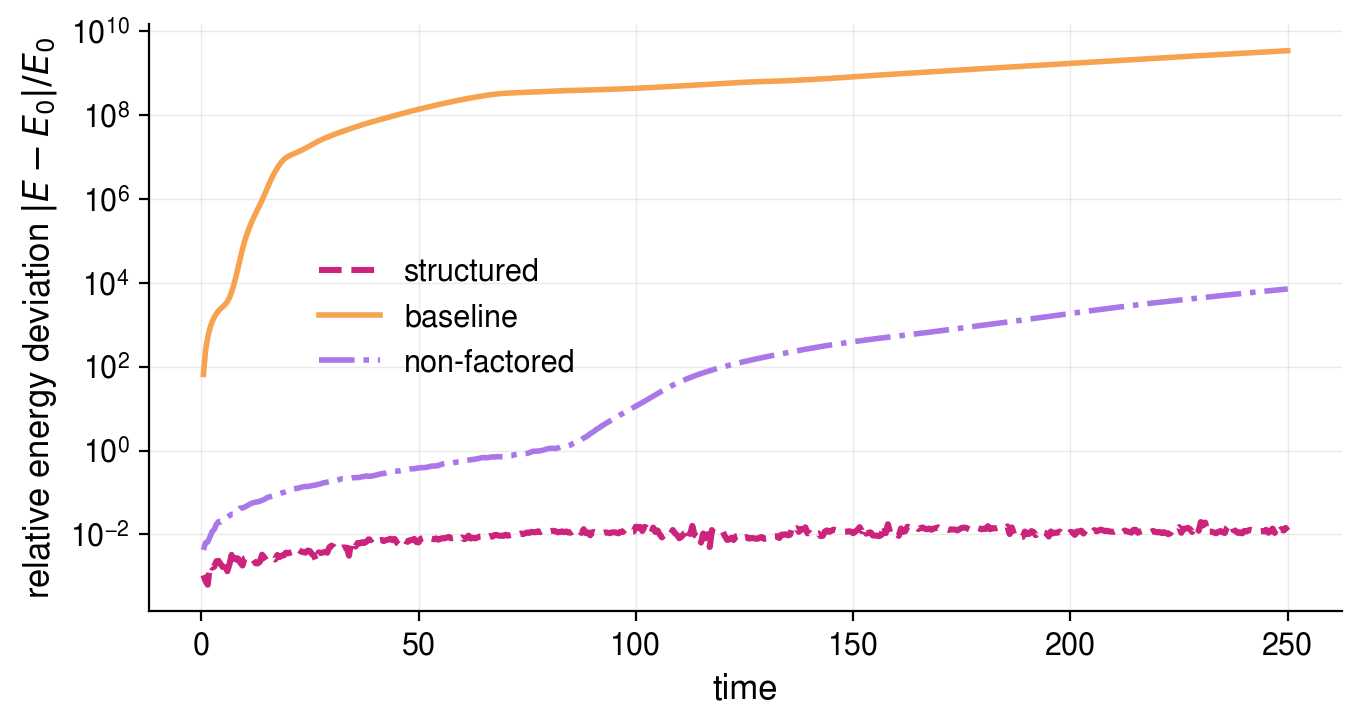}
  \caption{Long-horizon energy drift on the conservative cavity over $50{,}000$ steps
    ($100\times$ the training horizon, log scale): structured model in dashed magenta, equal-capacity baseline in orange, and the non-factored control in dash-dotted purple.}
  \label{fig:behaviours}
\end{figure}

Confinement alone is therefore insufficient. The non-factored control still diverges because its learned coupling injects net energy at the particle-field interface. In the true dynamics, the particle force and its induced field current are derivatives of the same interaction energy, so their energy exchange cancels. A black-box coupling learns the two sides independently, allowing a small mismatch to perform net work each cycle, which accumulates over a long rollout. The fluid provides the complementary control. Its non-separable variant black-boxes only the gravity dependence while preserving the energy-consistent flux coupling. It therefore remains bounded but fails the gravity counterfactual. Stability thus follows coupling consistency, whereas counterfactual transfer follows the supplied parameter map. Boundedness should not be confused with accuracy: rare cavity episodes exceed the $5\%$ worst-case energy criterion even though typical trajectories remain well within it, while the grid closes this gap. We therefore report boundedness, energy fidelity, and rate accuracy as distinct outcomes.

\subsection{Disturbances, scale, noise, and everyday physics}
\label{iclr:disturbance}
\label{iclr:scaling}

The fourth behaviour is absorbing a disturbance, and three supporting studies then ask whether the preceding effects could instead be explained by scale, absence of noise, or the idealised systems. A mid-rollout velocity kick is absorbed because it moves the state along the reachable manifold, whereas a coherent field perturbation is off-manifold and remains the sensitive direction. Conserved invariants double as validity meters, transporting an injected violation without amplification (App.~\ref{app:perception}). Scale does not recover the missing response: across four black-box families from $10^{4}$ to $10^{8}$ parameters, no competent regime shows a reliable sign-flip counterfactual (App.~\ref{app:scaling-probe}). The advantage also survives $5\%$ training noise, and where a clean, correctly specified dictionary exists symbolic regression is the more accurate tool, so the motivation for learned structure is high dimensionality, partial observation, unknown dictionaries, and long-horizon stability rather than noise alone. Finally, an eleven-system suite of everyday physics and the four scope exceptions are evaluated in App.~\ref{sec:zoo}, including the one pre-specified selection gate the full matrix misses and why (App.~\ref{sec:supporting}).

\section{Conclusion and Limitations}
\label{iclr:conclusion}
\label{iclr:limitations}

This paper asks how little structural inductive bias a learned world model needs in order to recover identifiable physics and behave reliably outside its training distribution. Fully unconstrained models cannot infer true dynamics from trajectories alone: they often diverge over long rollouts and collapse under interventions on physical parameters. Adding a small amount of geometric and thermodynamic scaffolding changes this picture entirely. A Poisson-style bracket, a confining energy class, a port-Hamiltonian dissipation channel, and factored parameter maps together make the remaining constitutive relations identifiable and accurate. The resulting model behaves like a calibrated instrument rather than a black box: it cleanly separates conservative and dissipative regimes, learns nonlinear constitutive laws, transfers interventions where equally expressive baselines fail, and stays stable over long rollouts. The general geometric structure helps rather than hurts, because it provides the coordinate frame within which data can pin down the specific physics.

However several limitations remain. The dissipation guarantee applies only to quadratic port-channel energy, and boundedness is proven only for conservative dynamics (else supported empirically). Potential force is identifiable only up to an unobservable Casimir constant. Our work is not unsupervised law discovery - brackets, channels, and kernels are user-supplied, which also blocks unspecified parameter interventions. The results in this work rely on synthetic classical simulators rather than experiments, with edge-case failures (e.g., cavity energy tails), generator shifts, and inconclusive scaling results defining necessary operating boundaries.

\section*{AI use statement}

A large language model assisted the authors in polishing the language of the manuscript. The authors verified that all claims, proofs, mathematical formulations, and reported values are valid, checked them against the implementation and results, and take full responsibility for the manuscript.

\section*{Ethics statement}
This work uses only synthetic physics simulations; no human subjects, personal
data, or scraped datasets are involved. We do not foresee direct harmful
applications; the contribution is a measurement discipline and structural
analysis for physical world models. Integrity-relevant practices
(pre-registration, recorded verdicts, disclosure of overturned verdicts and
process defects) are documented in Appendices~\ref{app:integrity}
and~\ref{app:record}.

\section*{Reproducibility statement}
The reference simulators, calibration gates, model families, and training
protocols are specified in Appendices~\ref{sec:program}--\ref{sec:zoo}; the theory
and proofs in Appendices~\ref{sec:coverage} and~\ref{app:theory}; the
per-experiment record with per-seed results in Appendix~\ref{app:record}; and the
verdict-integrity protocol in Appendix~\ref{app:integrity}, under which the
constants that determine every reported verdict are fixed by a pre-registration
committed before the affected runs and the result tables are generated directly from
the stored measurements rather than transcribed by hand. All data are regenerated
deterministically from seeds. Our implementation, the pre-registration documents with
their dated amendment history, and the result artifacts are provided as supplementary
material.

\bibliography{refs}
\bibliographystyle{iclr2027_conference}

\appendix
\section*{Overview of the appendices}
These appendices are organised by purpose so that each can be read on its own. The first
group describes \emph{how} the work is measured and \emph{on what}:
Appendix~\ref{sec:program} states the measurement methodology and the discipline used to
turn observations into claims, and Appendix~\ref{sec:systems} specifies the three
physical systems, their calibration, and the model and control families with their
training details. The second group presents the \emph{experimental evidence}, one
appendix per scientific question: the criterion for counterfactual transfer and its
refusal tests (Appendix~\ref{sec:criterion}); the two ingredients of long-horizon
stability (Appendix~\ref{sec:stability}); dissipation, driving, and stochastic
consistency (Appendix~\ref{sec:secondlaw}); discovery of the physics and selection of
its form (Appendix~\ref{sec:discovery}); and generality across a different physics and a
symbolic-regression competitor (Appendix~\ref{sec:generality}). The third group steps
back: Appendix~\ref{sec:synthesis} synthesises which structural element buys which
behaviour, the stated scope exceptions and an eleven-system suite are tested in
Appendix~\ref{sec:zoo}, and Appendix~\ref{sec:discussion} discusses what the evidence
does and does not support. The final group is \emph{reference material}: the
verdict-integrity protocol (Appendix~\ref{app:integrity}), the formal propositions and
their proofs (Appendix~\ref{app:theory}), the complete per-experiment record
(Appendix~\ref{app:record}), and an extended related-work discussion
(Appendix~\ref{app:related}).

\section{The measurement program: behaviours, controls, and claim discipline}
\label{sec:program}

\subsection{Why a measurement program at all}
The failure mode of the field is not a shortage of architectures but a shortage of
\emph{discriminating tests}. A model can score well on trajectory error while
violating energy conservation catastrophically at longer horizons. It can ``pass''
a counterfactual test that any interpolating model would pass. And its energy drift
can look flat because it predicts trivial motion. Each behaviour below therefore
comes with three elements: (i) a confound-resistant metric, (ii) a \emph{positive
control} (an engineered case the diagnostic must flag, so a silent diagnostic cannot
masquerade as a passing model), and (iii) a control model that isolates which
structural element the behaviour credits.

\subsection{The four behaviours}
\label{sec:behaviours}
\begin{table}[h]
  \centering
  \setlength{\tabcolsep}{4pt}
  \caption{The behaviours, structured model against controls. Each row's control
    lacks exactly one structural element. Seeds are $5$ unless stated; flips are
    seed-level exact binomials over $10$ seeds.}
  \label{tab:behaviours}
  \begin{tabular}{@{}p{0.29\linewidth}p{0.19\linewidth}p{0.18\linewidth}p{0.27\linewidth}@{}}
    \toprule
    Behaviour (regime) & Structured & Unstructured baseline & Ablated control \\
    \midrule
    Bounded seeds, $100\times$ horizon (cavity) & $5/5$; worst case $5$--$12\%$ & $0/5$ & non-factored coupling: $1/5$ \\
    Decay-rate error; stationary $T$ error (cavity) & $0.4\%$; $0.3\%$ & injects power; hot ${\sim}10^{6}$ & sign-free port: negative rate on $3/5$ seeds \\
    Charge-sign flip, seeds (cavity; grid) & $10/10$; $10/10$ & $2/10$; $4/10$ & non-factored: $1/10$; $5/10$ \\
    Gravity $1.0\to1.5$, nMSE (fluid) & $6.8\times10^{-11}$ & $2.5\times10^{3}$ & non-separable: $1.06$ \\
    Refusal of an unsupplied map (cavity; fluid; grid) & $111\times$; $10^{9}\times$; $7\times$ & --- & oracle with the map: $1.4\times$; $10^{-6}$; $0.34\times$ \\
    Fitted rate, conserving / dissipating (cavity, $10$ seeds) & $1.2\times10^{-5}$ / $0.481$ & informed: rollout ${\sim}10^{6}$ & no port: rate error $0.99$ \\
    Velocity kick, post/pre error (cavity; grid) & $0.85$; $1.14$ & divergent & non-factored passes insertion (locality) \\
    \bottomrule
  \end{tabular}
\end{table}

\subsection{Disturbances, scale, and noise: the numbers behind the main-text summary}
\label{sec:supporting}
A mid-rollout velocity kick is absorbed (post/pre error ratio $0.85$ on the cavity,
$1.14$ on the grid) because it moves the state along the dynamics' reachable
manifold. A coherent field bump, by contrast, manufactures an off-manifold
configuration and remains the sensitive direction. Inserting a third particle
mid-rollout continues on the correct law, and because the non-factored control also
passes this test, the credit goes to locality rather than to the factored map.

Re-identifying the state from a single rendered frame misses velocity (continuation
error $22.8$), whereas three strided frames recover it by differencing ($4.5$;
oracle floor $0.013$). The boundary is therefore momentum, not perception in
general (App.~\ref{app:perception}).

Scaling four black-box families from $10^{4}$ to $10^{8}$ parameters under a
competence-gated protocol produced no rung with a reliable sign flip. The result is
inconclusive on its own terms, since only one rung was competent. A
width-controlled study with the representation fixed finds no gain from width
(one-sign error $8.16$--$11.83$, Holm $p=0.105$; App.~\ref{app:scaling-probe}).

Under $5\%$ training noise the flip stays intact and the stability envelope loosens
gracefully ($0.045\to0.285$ at $10$k steps). Symbolic regression with the same
dictionary is most accurate on the noise-free cavity (worst coefficient error $0.07\%$). However,
its central-difference form collapses under noise ($0.02\to0.23\to0.94$ at $0.5$,
$2$, $5\%$), while the weak form stays competitive
($0.007\to0.028\to0.10$). The suite comprises eleven systems with twelve pass cards,
eight clean and four with one identified limit (App.~\ref{sec:zoo}).
\textbf{The second law.} The model must dissipate at the true rate to the true
terminal state, track genuine external injection without manufacturing it, and
(the statistical half) relax to the correct stationary distribution and temperature
under a fluctuating bath. Metrics are rates and distributions, never trajectory
error, because in chaotic or stochastic systems trajectory error is meaningless (a
lesson enforced throughout). Required controls
include a genuinely driven system the model must \emph{not} force to zero, and an
engineered second-law violator the production diagnostic must catch.

\textbf{Counterfactuals.} Trained at one parameter setting, tested at a never-seen setting
(sign flip, unseen magnitude, unseen rate, unseen coupling constant). Metrics: a
regime/orbit-sense match \emph{plus} error against the true counterfactual trajectory at a stated
horizon. A test only counts if the metric has \emph{power} on the model under test (the model
must get the \emph{trained} setting right on held-out episodes). Task validity is defined
by reference accuracy, intervention magnitude, and measurement sensitivity, not by the
comparator's outcome: an equally-structured non-separable control is always run and its
result reported, and a task on which both models succeed is recorded as ``not
discriminating between these methods'', an informative outcome that locates where the
structural constraint is unnecessary, rather than an exclusion rule.

\textbf{Extreme horizon.} Rollouts to $\ge 100\times$ the training horizon. Energy-drift
slope and boundedness are co-reported with rollout error, because a low energy drift can be achieved by trivial motion and
boundedness alone hides excursions; worst-case per-seed deviations are reported, not only means.
Control: the equal-capacity baseline is always run at the same horizon and reported; a horizon
on which it also stays bounded is recorded as not discriminating between the methods, not
discarded, since finite trajectories are not necessarily accurate ones and both fidelity and
boundedness are reported for every model.

\textbf{Disturbance robustness.} Graded sub-tests, scored separately because they credit
different structure: (a) reset to unseen-but-valid states, swept small to large; (b) structural
intervention \emph{given} the new state (add or teleport an object mid-rollout); and (c)
re-identification of the change from lossy \emph{observation}, carried through this paper as a
boundary to be mapped, not a capability to be claimed (\S\ref{app:perception}).

\subsection{Claim discipline}
\label{sec:discipline}
Every quantitative claim carries one of three evidence labels, used consistently
throughout the appendices. \decisive{} marks a large effect with non-overlapping
confidence intervals that replicates across independent training seeds.
\directional{} marks a scoped finding whose intervals are wide, or in which one
pre-registered arm falls below its threshold. \boundarylbl{} marks a mapped limit of
the method, reported as a result of equal standing rather than hidden. For outcomes that
are essentially pass-or-fail, we test at the level of whole seeds rather than individual
episodes, since episodes within one seed are correlated (a broken control once scored a
perfect 32/32 episode block, which retired per-episode averaging permanently;
Appendix~\ref{app:integrity}).

Credit assignment is fixed \emph{in advance}: tests passed by any field-mediated
model (composition, intervention-given-state) credit \emph{locality}, and only
support-moving counterfactuals and energy-consistent-coupling results credit
\emph{factoring}. The verdict text is generated directly from the pre-registered constants rather than transcribed by hand.
Headline verdicts require an escalation (ten seeds and fresh data draws) before any
guard moves. Two preliminary scaling verdicts failed their escalation and are
reported as such, with the full ledger of metric corrections in
Appendix~\ref{app:integrity}.

\section{Three regimes, calibrated before any training}
\label{sec:systems}

No model in this paper is trained before its reference simulator passes conservation gates over
the \emph{full evaluation horizon}. The reason is arithmetic: our stability claims run to
$100\times$ the training horizon, so reference drift at that horizon is a direct floor on every
number we report. The calibration-first requirement also caught real defects before they could
contaminate results, and we report them because they are part of why the numbers can be trusted.

\subsection{Electromagnetic ring cavity (modal)}
Charged particles on a 1D periodic ring coupled to $K=16$ field modes through a fixed smoothing
kernel, with a palindromic (Strang) symplectic integrator. \textbf{Calibration}: over
50{,}000 steps, worst relative energy drift $1.38\times10^{-5}$, which is $7.25\times$ under the
$10^{-4}$ gate on the binding (mixed-charge) configuration, and the Gauss-law residual
$\le 2.0\times10^{-14}$, which is exactly conserved. Two defects were caught by the
pre-registered test battery before any training. The first was a first-order operator
splitting (energy error $O(\Delta t)$ and secular, instantly diagnosed by the
halve-$\Delta t$ test that distinguishes first- from second-order), fixed by the
palindromic composition. The second was a tensor-view aliasing bug introduced by the
fix itself (a symplectic pair update through in-place slices mutated its own inputs),
fixed by snapshotting.
\emph{Why this matters beyond hygiene:} both defects would have shown up as ``model instability''
at long horizon, exactly the phenomenon under study, had they survived.

\subsection{2D electromagnetic grid (particle-in-cell)}
A $64\times64$ periodic Yee-FDTD grid, Boris particle pusher, charge-conserving (Esirkepov)
current deposition, staggered-consistent force gather, binomial smoothing applied identically to
sources and gathered fields. \textbf{Calibration}: energy drift
$\le4.6\times10^{-5}$ (gate $10^{-2}$) and discrete Gauss residual $3.1\times10^{-13}$ (gate
$10^{-10}$) at 50{,}000 steps, so the particle-in-cell numerical-heating risk did not materialize
at this configuration. The battery caught a static self-force (a stationary charge felt
$0.08$--$0.2$ of its own field; a moving charge lost kinetic energy to spurious drag),
root-caused in two layers: the force gather must drop one B-spline order per staggered axis to
pair with Esirkepov currents, and the remainder is the lattice self-energy gradient, suppressed
$40\times$ by the smoothing. \emph{Why it matters:} an uncaught self-force is a phantom
dissipation channel that would have corrupted every second-law measurement on this system.

\subsection{1D shallow-water fluid (non-electromagnetic)}
Water height $h(x)$ and momentum on a periodic line: conservative core (mass, energy
$=$ kinetic $+\ \tfrac12 g h^2$), viscous drag (the physical analog of the dissipative port),
and wind forcing (the drive), advanced by a spectral, symmetric implicit-midpoint integrator.
\textbf{Calibration}: over 50{,}000 steps ($250\times$ the 200-step training
horizon), mass is exact ($3\times10^{-15}$), inviscid energy drift is $1.2\times10^{-7}$ (linear) and
$7.5\times10^{-5}$ at the nonlinear operating amplitude ($h$ varying $\pm10\%$), gravity-wave
speed is within $0.6\%$ (worst mode, gate $1\%$), and viscous decay matches the analytic $2\nu$ law to
$10^{-10}$. The system passes 18 unit tests. Scope: the smooth (pre-shock) regime, where randomly seeded multi-mode
initial data disperses and recurs rather than steepening into a coherent shock at our amplitude.
This is what makes a $250\times$ clean horizon available at a genuinely nonlinear operating
point. \emph{Why a fluid:} it changes everything incidental, namely the conservation structure
(mass instead of charge), the field content (scalar height instead of vector fields), and the
physical domain, while keeping the imposed form, so any behaviour that reproduces here is a
property of the form, not of electromagnetism.

\subsection{Structured models and the control family}
\label{sec:models}
All structured models share one imposed form. The state evolves by a fixed energy-conserving
flow generated by a \emph{learned} energy functional: per-mode confining quadratics with learned
bounded corrections on the cavity; a learned constitutive energy density entering fixed
curl-form (Yee) stencils on the grid; and learned mass-metric $M(h)$ and potential-derivative
$V'(h)$ functions in the fluid, each parameterized as a linear term plus a bounded
nonlinear correction,
\begin{equation}
  \begin{aligned}
    M(h)&=\mathrm{softplus}(a)\,h+\alpha_M\tanh\!\big(\mathrm{MLP}_M(h)\big),\\
    V'(h)&=\mathrm{softplus}(c)\,h+\alpha_V\tanh\!\big(\mathrm{MLP}_V(h)\big),
  \end{aligned}
  \label{eq:sw-constitutive}
\end{equation}
together with a viscosity port. Irreversibility exists only as a \emph{learned non-negative}
port (rate $\sigma\ge0$ multiplying the exact dissipative structure); energy injection exists
only as \emph{known explicit} drives. Coupling constants that the criterion requires to be
counterfactual-ready (charge $q$, gravity $g$) enter multiplicatively. Because gravity $g$
is supplied, a recovered fluid ``gravity'' of $1.000$ is the supplied value times the
learned slope of $V'$, an effective coefficient rather than a discovered constant.

The control family is the paper's inference engine:

\textbf{Unstructured baseline} (DeepSet on the cavity, CNN on grid and fluid), capacity
matched to a disclosed few percent in these comparisons. Its reported failures
show that parameter count alone does not ensure success under this training protocol;
they do not exclude alternative unstructured architectures or optimization methods.

\textbf{Non-factored control}: identical locality and field mediation, coupling through a
black box, with \emph{slightly more} parameters. The comparison tests the effect
of supplying the parameter map within the evaluated implementation; individual
successes or failures do not uniquely attribute a mechanism.

\textbf{Oracle variants}: the counterfactual physics baked in at matched training budget,
showing that the regime can be learned with that additional information.
This is an implementation-level control, not a universal transfer impossibility.

\textbf{Sign-free port}: dissipation rate free to go negative, dissociating the
\emph{guaranteeing} of the second law from the \emph{representing} of its violation.

\textbf{Informed baseline} (for the discovery claims): the unstructured model \emph{plus}
every input the structured model gets (condition index, drive and field features, the auxiliary
loss). This controls the listed inputs, although architecture, optimization,
and the use of those inputs can still affect the comparison.

Models are $8.6$--$11.5$k parameters throughout the behavioural program (grid:
$8{,}711$ structured, $8{,}591$ baseline, $11{,}306$ non-factored); the scaling study
(\S\ref{app:scaling-probe}) is where capacity varies. The unstructured baseline is
capacity-matched to a disclosed few percent and certified by a three-rate
learning-rate sweep, rollout-stabilizing noise, and the best competent configuration
reported. The informed baseline used for the discovery claims is the unstructured
model plus every input the structured model receives (condition index, drive and field
features, and the auxiliary energy loss).

\paragraph{Training.} All models minimise one-step prediction error on per-dimension
standardized states with Adam for $120$--$400$ epochs by regime. Ports use a
$10\times$ learning rate and, in the shared-content protocol, an $L_1$ penalty of
$10^{-5}$ on the rates and a $50$-step episode-energy match read out with the reference
energy functional; that last term is supplied supervision, which the label-free
variant removes entirely.

\paragraph{Metrics.} With $\hat x_{0:T}$ a model rollout, $x_{0:T}$ the reference from
the same initial state, $E$ the true energy of a state, and $\hat\sigma$ a learned
rate,
\begin{equation}
  \mathrm{nMSE}=\frac{\lVert\hat x_{0:T}-x_{0:T}\rVert^2}{\mathrm{Var}(x_{0:T})},\qquad
  D(t)=\frac{\lvert E(\hat x_t)-E(\hat x_0)\rvert}{\lvert E(\hat x_0)\rvert},\qquad
  \epsilon_\sigma=\frac{\lvert\hat\sigma-\sigma\rvert}{\sigma},
  \label{eq:metrics}
\end{equation}
where the energy drift $D(t)$ is reported as the per-seed median and the worst
episode. \emph{Regime match} is the fraction of held-out episodes whose orbit sense
agrees with the true counterfactual, and the stochastic second law is scored by the
stationary temperature and an energy-histogram Wasserstein distance. Near-binary
outcomes use seed-level exact binomial tests, since episodes within a seed are
correlated. Headline numbers use five seeds, the sign-flip and unified-model results
ten, and the rate-transfer and anti-dissipation arms three; $\pm$ denotes a $95\%$
$t$-interval over seeds. Every finding in the appendices carries an evidence label,
replicated, directional, or boundary, whose criteria are given in
App.~\ref{sec:program}, \ref{app:record}, and \ref{app:integrity}.

\section{A structural criterion for counterfactuals, with its refusal tests}
\label{sec:criterion}

\subsection{A sufficient construction and an unrestricted-class limit}
Suppose a true generator and a learned generator share the same supplied
parameter dependence. Accurate learned content then transfers through that
dependence at new parameter values, on the state region where the content
remains accurate. Multiplicative coupling is the construction tested here.
Known nonlinear maps can also transfer, so separability is not necessary.

A different statement concerns unrestricted functions of state and parameter.
Observations restricted to the training support cannot distinguish functions
that agree there and differ off support. This is not a converse to the
sufficient construction. Leaving marginal parameter support defines the
extrapolation tests here; in-support interventions and novel combinations
remain meaningful tests of interpolation and composition.
These test categories are not identified with a causal positivity condition.
Proposition~\ref{prop:B} (App.~\ref{app:theory}) states the two mathematical claims
separately. The refusal experiments measure failures of the tested
implementations, not an impossibility for every nonlinear parameter map.

\subsection{Coverage and separation for the four-way selector}
\label{sec:coverage}
The selector that turns the criterion into a decision rests on a finite-sample
guarantee rather than on an informal equivalence. Two statements make this
precise. The first bounds the chance that the true in-family candidate is
retained; the second states when empirical risk separates the candidates and when
two candidates are indistinguishable in principle.

The first is the coverage statement.
\begin{proposition}[Marginal true-candidate retention]
  \label{prop:fourway}
  For candidate $k$ let $s_{k,1},\ldots,s_{k,n_k}$ be calibration scores,
  $m_k=\lceil(n_k+1)(1-\alpha)\rceil$, and $q_k$ the $m_k$th smallest calibration
  score, with $q_k=+\infty$ if $m_k>n_k$. Define the retained set
  $\mathcal C(D)=\{k:s_k(D)\le q_k\}$. If the new score of the true candidate $k^*$
  is exchangeable with its calibration scores, then
  $\Pr\{k^*\in\mathcal C(D)\}\ \ge\ 1-\alpha$.
\end{proposition}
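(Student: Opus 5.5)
The plan is to reduce the claim to the standard split-conformal rank argument applied to the true candidate alone. The other candidates are irrelevant, because the event $\{k^*\in\mathcal C(D)\}$ coincides with $\{s_{k^*}(D)\le q_{k^*}\}$, and that event depends only on the $n_{k^*}+1$ scores of candidate $k^*$. Write $n=n_{k^*}$, $m=m_{k^*}$, $S_i=s_{k^*,i}$ for $i\le n$, and $S_{n+1}=s_{k^*}(D)$. By hypothesis $(S_1,\ldots,S_{n+1})$ is exchangeable.

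First I dispose of the trivial case. If $m>n$, then $q_{k^*}=+\infty$, so $k^*$ is retained with probability one.

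Otherwise $m\le n$. The key deterministic step is the equivalence
\[
S_{n+1}\le q \iff S_{n+1}\le S_{(m)}^{+},
\]
where $q$ is the $m$th smallest of $S_1,\ldots,S_n$ and $S_{(m)}^{+}$ is the $m$th smallest of the augmented sample $S_1,\ldots,S_{n+1}$. I verify it in both directions:
\begin{itemize}
\item If $S_{n+1}\le q$, then inserting $S_{n+1}$ into the calibration list does not change the $m$th order statistic, or replaces it by a value that is still at least $S_{n+1}$. Either way $S_{n+1}\le S_{(m)}^{+}$.
\item If $S_{n+1}>q$, then at least $m$ of the augmented values, namely the calibration values up to $q$, lie strictly below $S_{n+1}$. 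Hence $S_{(m)}^{+}\le q<S_{n+1}$.
\end{itemize}

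Exchangeability then gives the bound. Conditional on the multiset of the $n+1$ values, $S_{n+1}$ is equally likely to occupy each of the $n+1$ positions. The event $S_{n+1}\le S_{(m)}^{+}$ holds whenever its rank is at most $m$, with ties broken in its favour. Therefore
\[
\Pr\{k^*\in\mathcal C(D)\}\ \ge\ \frac{m}{n+1}\ \ge\ 1-\alpha,
\]
using $m=\lceil (n+1)(1-\alpha)\rceil$.

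The main obstacle is ties. I would handle them by a uniform random tie-breaking argument: any rank-based statement with ties broken uniformly at random is valid by exchangeability, and ties only enlarge the event $\{S_{n+1}\le S_{(m)}^{+}\}$, so the inequality survives. The alternative is to state the equivalence with weak inequalities and show that the count of augmented values that are at most $S_{n+1}$ is at most $m$ with probability at least $m/(n+1)$. I would also remark that the guarantee is marginal over calibration and test draws, and that it concerns only retention of $k^*$, not the size of $\mathcal C(D)$.
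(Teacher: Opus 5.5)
Your proposal is correct and follows the same route as the paper's proof: the paper likewise uses that exchangeability makes the augmented rank of the new score uniform, and that accepting scores at or below the threshold keeps the bound conservative under ties. Your explicit $m>n$ case and the deterministic equivalence $S_{n+1}\le q\iff S_{n+1}\le S^{+}_{(m)}$ supply steps the paper's sketch leaves implicit. One small slip: in your optional ``alternative'' tie-handling, the event to count is that fewer than $m$ augmented values lie strictly below $S_{n+1}$, not that at most $m$ values are $\le S_{n+1}$, which fails when all scores tie. Your main argument, conditioning on the multiset, already covers ties, so the alternative is not needed.
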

\begin{proof}[Proof of Proposition~\ref{prop:fourway}]
  With continuous scores, exchangeability makes the augmented rank uniform.
  The selected order statistic excludes the new score with probability at
  most $\alpha$. With ties, accepting scores at or below the threshold
  gives the same conservative lower bound; randomized tie-breaking is
  an alternative construction \citep{lei2018conformal}.
\end{proof}
Set cardinality defines a decision rule, not a truth certificate.
A singleton can be wrong, a nonempty set can accept an out-of-family world,
and an empty set can exclude an in-family truth. The excitation-and-conditioning
gate detects its specified numerical/data failures, but it does not certify global
optimization. Coverage of the raw candidate set does not imply selective
coverage after conditioning on that gate or on singleton selection.

\paragraph{Reported calibration protocol.}
Calibration uses $n_k=100$ worlds per candidate and $\alpha=0.10$.
The finite-sample construction above therefore uses the 91st order statistic.
The supplied score description combines the held-out residual of a
three-coefficient linear acceleration fit with $0.01$ times squared standardized
distances from candidate coefficient regions. The gate requires condition
number $\le10^6$, at least eight held-out samples, and charge/velocity feature
standard deviations $\ge0.02$. Confirmation uses 50 worlds per label per family.
The implementation realises this specification exactly: the empirical quantile is the
$\lceil(n_k+1)(1-\alpha)\rceil$-th order statistic (the $91$st of $100$), ties at the
threshold are retained by the $\le$ rule, calibration and confirmation worlds use
disjoint seeds with thresholds frozen before any confirmation world is drawn, the score
standardises each coefficient distance by that world's own fitted standard error, and
the gate is evaluated before the risk set is read. Each of these properties is verified
independently, including the order-statistic identity, empirical marginal coverage of
$0.90$ on exchangeable heavy-tailed scores, tie retention, and monotonicity in $\alpha$.
The family evaluations include generator changes. Exchangeability with
calibration is not established by procedural freezing or a new seed offset.
The observed family coverages are therefore empirical shift assessments;
they are not asserted to inherit the nominal $90\%$ guarantee.
The raw family-level confusion matrices and calibration provenance are
needed to evaluate that assumption and quantify each decision's uncertainty.

\begin{proposition}[Separation and ambiguity]
  \label{prop:separation}
  Suppose a finite candidate family has population risks $R_k$. If one candidate
  $k^*$ has margin $R_k-R_{k^*}>2\epsilon$ for every $k\ne k^*$, and every
  empirical risk is within $\epsilon$ of its population value, then empirical
  risk separates $k^*$. If two candidates have identical observable
  distributions on the available support, no selector using only those
  observations can distinguish them uniformly.
\end{proposition}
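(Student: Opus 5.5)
The proof has two independent halves, and I would prove them separately. Write $\hat R_k$ for the empirical risk of candidate $k$.

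\textbf{Separation.} Assume $\lvert\hat R_k-R_k\rvert\le\epsilon$ for every $k$ in the finite family, and that $R_k-R_{k^*}>2\epsilon$ for every $k\ne k^*$. Then for each $k\ne k^*$ the triangle inequality gives
\begin{equation*}
\hat R_k-\hat R_{k^*}\ \ge\ (R_k-\epsilon)-(R_{k^*}+\epsilon)\ =\ R_k-R_{k^*}-2\epsilon\ >\ 0 .
\end{equation*}
So $k^*$ is the unique minimiser of empirical risk. Because the family is finite, there are no measurability issues and no issue of whether the minimum is attained. If the uniform $\epsilon$-deviation event is only given with probability $1-\delta$, the same argument yields separation on that event. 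The statement as written is deterministic given the deviation bound, so I would not need a concentration inequality.

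\textbf{Ambiguity.} Formalise a selector as a (possibly randomised) measurable map $S$ from observations on the available support to a candidate index. Suppose candidates $k_1\ne k_2$ induce the same distribution $\mathbb P$ of observations. Then the law of $S(D)$ is the same whichever of the two is the truth, so
\begin{equation*}
\Pr_{k_1}\{S=k_1\}+\Pr_{k_2}\{S=k_2\}\ =\ \Pr_{\mathbb P}\{S=k_1\}+\Pr_{\mathbb P}\{S=k_2\}\ \le\ 1 .
\end{equation*}
Hence $\min_{j\in\{1,2\}}\Pr_{k_j}\{S=k_j\}\le\tfrac12$, so no selector identifies the true candidate with probability above $1/2$ uniformly over the two. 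Equivalently, the minimax error is at least $1/2$, which is what ``cannot distinguish them uniformly'' means here.

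\textbf{Main obstacle.} The mathematics is elementary; the care goes into the formalisation of the second half. The selector must be restricted to functions of observations on the available support, including any internal randomisation independent of the truth. I would also note that this is the same non-identifiability mechanism cited in \S\ref{iclr:criterion} for off-support parameter responses, and in the Casimir gauge of $V'$.
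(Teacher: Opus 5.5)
Your proposal is correct and follows the paper's proof essentially line for line. The separation half is the same two-sided deviation chain $\hat R_k\ge R_k-\epsilon>R_{k^*}+\epsilon\ge\hat R_{k^*}$, and the ambiguity half uses the same observation that a selector's output has identical law under both candidates, so it cannot have error below $1/2$ on both; your explicit bound $\Pr_{k_1}\{S=k_1\}+\Pr_{k_2}\{S=k_2\}\le 1$ and your allowance for randomised selectors only make the paper's one-line argument more precise.
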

\begin{proof}
  For every $k\ne k^*$,
  \begin{equation*}
    \hat R_k\ \ge\ R_k-\epsilon\ >\ R_{k^*}+\epsilon\ \ge\ \hat R_{k^*},
  \end{equation*}
  so $k^*$ is the unique empirical minimiser. For the second statement, a selector is
  a measurable function of the observations; if two candidates induce the same
  distribution on the observations, the selector's output has the same
  distribution under both, so no output rule can have error below $1/2$ on both
  candidates simultaneously.
\end{proof}

The second is the second-law guarantee for the dissipative port at the level of the
implemented discrete update.
\begin{proposition}[Quadratic contraction and the isolated OU update]
  \label{prop:discrete2law}
  Let $Q(q)=\tfrac12q^\top Kq$ with fixed $K\succ0$, fixed scalar $\sigma\ge0$, step
  $\Delta t>0$, and $a=e^{-\sigma\Delta t}$. The deterministic update $q^+=aq$
  satisfies $Q(q^+)-Q(q)=-(1-a^2)\,Q(q)\le0$, with equality only if $\sigma=0$ or
  $q=0$. If instead $q^+=aq+\xi$ with independent
  $\xi\sim\mathcal N\big(0,(1-a^2)k_BT\,K^{-1}\big)$, then $\mathcal N(0,k_BT\,K^{-1})$
  is invariant for this isolated update.
\end{proposition}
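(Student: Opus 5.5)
The proof splits into the deterministic contraction claim and the Gaussian invariance claim; both reduce to direct computation because the update is linear with a scalar contraction factor.

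\textbf{Deterministic part.} Since $a$ is a scalar, $Q(q^+)=\tfrac12 a^2 q^\top Kq=a^2Q(q)$. Hence $Q(q^+)-Q(q)=-(1-a^2)Q(q)$.

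Nonpositivity then follows from two facts. First, $\sigma\ge0$ and $\Delta t>0$ give $a=e^{-\sigma\Delta t}\in(0,1]$, so $1-a^2\ge0$. Second, $K\succ0$ gives $Q(q)\ge0$.

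For the equality case, the product $(1-a^2)Q(q)$ vanishes iff one factor vanishes. The factor $1-a^2$ is zero iff $a=1$, which holds iff $\sigma\Delta t=0$, i.e.\ $\sigma=0$. By positive definiteness, $Q(q)=0$ iff $q=0$. Note that no step-size restriction appears anywhere, which is the point of using the exact exponential rather than an Euler step $1-\sigma\Delta t$.

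\textbf{Stochastic part.} Suppose $q\sim\mathcal N(0,\Sigma)$ with $\Sigma=k_BT\,K^{-1}$, and $\xi$ is independent of $q$. Then $q^+=aq+\xi$ is a sum of independent centred Gaussians, so it is centred Gaussian. Its covariance is
\[
a^2\Sigma+(1-a^2)k_BT\,K^{-1}=k_BT\,K^{-1}=\Sigma .
\]
The law of $q^+$ therefore equals the law of $q$, which is exactly invariance for the Markov kernel.

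\textbf{Edge cases.} We must check that the noise covariance is well defined. It is symmetric PSD because $K^{-1}\succ0$, $k_BT\ge0$, and $1-a^2\ge0$. When $\sigma=0$ it degenerates to a point mass at zero, and the update is the identity, which trivially preserves the Gaussian.

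\textbf{Main obstacle.} Nothing here is hard; the only step needing care is stating invariance precisely. We will phrase it as: if $q\sim\mathcal N(0,k_BT\,K^{-1})$ then $q^+$ has the same law. We also note explicitly that the claim covers only this isolated port update with frozen scalar $\sigma$. Composition with the reversible step or state-dependent $\sigma$ is excluded, consistent with the scope stated in the main text.
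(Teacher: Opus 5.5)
Your proposal is correct and follows essentially the same route as the paper's proof: homogeneity $Q(aq)=a^2Q(q)$ gives the contraction identity and its equality cases, and the zero-mean covariance computation $a^2k_BT\,K^{-1}+(1-a^2)k_BT\,K^{-1}=k_BT\,K^{-1}$ shows the Gaussian law is preserved. Your added checks on $a\in(0,1]$ and on the degenerate $\sigma=0$ noise covariance are sound but not needed beyond what the paper states.
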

\begin{proof}[Proof of Proposition~\ref{prop:discrete2law}]
  Homogeneity gives $Q(aq)=a^2Q(q)$, so $Q(q^+)-Q(q)=-(1-a^2)Q(q)$, which is
  non-positive and vanishes only if $a=1$ ($\sigma=0$) or $Q(q)=0$ ($q=0$).
  For Gaussian input with covariance $k_BT\,K^{-1}$, the independent-noise
  update has zero mean and covariance
  \begin{equation*}
    a^{2}k_BT\,K^{-1}+(1-a^{2})k_BT\,K^{-1}=k_BT\,K^{-1},
  \end{equation*}
  so the complete Gaussian law is preserved.
\end{proof}
The covariance reduces to $(1-a^2)k_BT I$ in unit-stiffness coordinates.
The stochastic invariant-law statement requires a fixed rate; a
state-dependent frozen-rate update does not generally inherit it.
No property of a reversible splitting step is needed for the isolated
contraction proof. Composition with other updates requires separate analysis;
numerically, the implemented quadratic-class port never increases the channel
energy at any tested $(\sigma,\Delta t)$, and the fluctuation-dissipation-matched
stochastic step has an analytic stationary-temperature error of $1.7\times10^{-3}$
at $\Delta t=0.1$, falling to $1.7\times10^{-5}$ at $\Delta t=0.01$.

Coordinate contraction need not dissipate an arbitrary learned energy.
For example, the confining energy
\begin{equation}
  E(x)=x^{2}+2\tanh x+2
\end{equation}
increases at $x=-0.5$ under the contraction $\dot x=-x$, with
$\dot E\approx0.286$. This discrepancy persists in the continuous
generator and cannot be removed by refining the time step.
The reported full-model temperature and distribution checks are empirical;
they do not establish a general entropy law, detailed balance, or the
invariant distribution of the complete coupled nonlinear sampler.

\subsection{Composition and extrapolation dissociate three levels of structure}
\label{sec:dissociation}
\textbf{Result (cavity, grid).} Three models, graded structure, two
tests. An unseen \emph{mixed-charge configuration of seen pieces} (composition): the unstructured
baseline is at chance (grid regime match $0.469\pm0.16$); the non-factored control passes
($0.896\pm0.045$, just clearing its $0.90$ gate, disclosed); and the factored model passes
($0.979\pm0.045$). A \emph{support-moving} test (the charge-sign flip): only the factored model
passes, and the non-factored control fails despite its composition success. \decisive{} for the
three-way pattern, on both electromagnetic systems.

\why Composition succeeds without factoring because a field-mediated, shared-per-particle law is
an interpolator on the state manifold: when every marginal (each particle's charge, each field
amplitude) stays in-support, a novel joint configuration is reached by the same local functions
evaluated at seen argument values, so locality converts a combinatorially new scene into
pointwise-familiar inputs. Extrapolation fails without factoring because the flipped charge is a
\emph{point off the marginal support}: the black-box coupling has literally no training signal
distinguishing the true odd-in-$q$ continuation from any other, and gradient-trained
interpolators resolve that freedom by continuing the function they fit, which is what
\S\ref{sec:cf-sign} measures as rotation lock-in. The dissociation is therefore not a mere tally
but evidence for the tested design: supplied locality supports composition,
and the supplied multiplicative channel carries the intervened value in these
experiments. Other correctly specified parameter maps can also transfer.

\subsection{Support-moving transfers: sign, magnitude, rate, gravity}
\label{sec:cf-sign}
\textbf{Charge sign (cavity, grid).} Trained at $q=+1$ in an external
field, tested at $q=-1$; scored at the seed level with exact binomial statistics. Cavity: the
factored model reverses circulation on \textbf{10/10} seeds (pooled 318/320 episodes,
$p=0.001$); the controls are not at chance but \emph{below} it (2/10 and 1/10 seeds). Grid:
10/10 (pooled 315/320), controls at or below chance. The cavity result survives fresh independent
data draws 15/15 (pooled 479/480). \decisive{}.

\why The factored model reverses by construction: $q$ multiplies the coupling, the force is odd
in $q$, so the flip is a symmetry of its learned action rather than a datum to fit. The controls'
\emph{below}-chance behaviour is the mechanism made visible: all $q=+1$ training orbits rotate
one way in the external field, so the black box learns ``rotate that way'' as a property of the
\emph{dynamics}, not of the charge input; on flipped charge it keeps rotating the trained way
and is therefore systematically wrong, not randomly wrong. (This lock-in is
  configuration-specific, since it requires a single global rotation sense to exist, and we scope it
as such.) The dissociation between at-chance and below-chance matters later: in the scaling
study the same lock-in signature appears at every capacity (\S\ref{app:scaling-probe}).

\label{sec:cf-fidelity}
\textbf{What transfers at what fidelity (grid, cavity).} The flip
shows a reported $0.8\%$ energy error across the tested horizons. Energy is even in $q$,
so the factored energy functional is flip-invariant by construction. Orbit \emph{kinematics}
(speed, kinetic energy, gyration radius) transfer in-envelope ($4.6\%$ speed error at 25 steps)
and degrade out-of-band ($36\%$ at 200): the flipped trajectory eventually leaves the band of
states the learned content saw. This is the same envelope limit that magnitude extrapolation
meets (below), \emph{not} phase decorrelation, and it is system-dependent: the cavity holds a
never-seen $|q|=2$ to horizon 1000 (speed error $0.027\%\to1.9\%$; energy error $0.12\%$; the
trajectory phase decorrelates while every phase-insensitive observable transfers), and widening
the training range changes nothing ($1.00\times$), so no benefit from widening is detected in this tested range. The
grid's $|q|=2$ divergence is therefore grid-specific (its plausible mechanism, coupling to
grid-scale modes at higher particle speed, is disclosed as an untested hypothesis).
\decisive{} for the transfers; the envelope is a \boundarylbl{}.

\why Proposition~\ref{prop:B} separates a correctly supplied parameter map from content
accuracy on the visited states. Only when both are correct, with the stated
regularity assumptions, does its exact-generator conclusion apply.
Factoring alone does not imply accurate off-support content, conserved-energy
error, or trajectory fidelity. The different empirical fidelity patterns are
consistent with this distinction but are not predicted quantitatively by the proposition.

\textbf{Charge magnitude (cavity).} At never-seen $|q|=2$ the factored model matches
an oracle trained on $|q|=2$ physics to a ratio $\approx1.09$ (nMSE arm), while the non-factored
control does not merely degrade: it \emph{reverses its rotation} (gyration-frequency ratio
$-0.46$ against the correct $+1$), the unidentified-off-support signature again. A
pre-registered gyration-frequency pass criterion was \emph{retired} when the oracle itself
failed it on 4/5 seeds (chaotic phase makes the frequency estimator ill-posed), a failed
pre-registration of a \emph{metric}, reported as such, with the verdict resting on the surviving
arm. \decisive{} (nMSE arm).

\textbf{Dissipation rate (cavity; fluid).} A rate is linear at the
generator level ($\dot E=-\sigma E$), hence factorable even though ``friction'' feels
non-invertible. Trained at $\sigma=0.5$ the port transfers to unseen $\sigma=0.25$ with rate
error $0.0017$, and the fluid's viscosity transfers $\nu=0.5\to0.25$ with rate error
$2.9\times10^{-8}$. The learned port scale is $1.0045\pm0.0007$, a $+0.45\%$ systematic with
the truth outside the interval, disclosed rather than rounded away. \decisive{}.
\why The common intuition that dissipation cannot be extrapolated conflates the \emph{rate} (a
linear generator coefficient, factorable) with the \emph{sign} (bounded by the second-law
guarantee, \S\ref{sec:secondlaw}). The experiment separates them cleanly.

\textbf{Gravity (fluid).} Trained at $g=1.0$, tested at never-seen $g=1.5$:
held-out error $6.8\times10^{-11}$, indistinguishable from in-distribution
($7.7\times10^{-11}$), showing agreement to the reported numerical precision; the non-separable control fits
in-distribution ($3.2\times10^{-5}$) and fails extrapolation at $1.06$, $\sim10^{10}\times$
worse. \decisive{}.
\why $g$ multiplies the potential term $\tfrac12 h^2$ in the learned energy,
providing the required parameter map in this configuration. The very small
reported errors do not establish exact transfer for general fluid flows or
identify smoothness or absence of chaos as their cause.

\subsection{Implementation-level counterfactual stress tests}
\label{sec:falsifiers}
We also evaluate interventions whose correct parameter dependence is not
supplied to the tested model. The reported pre-registered predictions are that
the model's counterfactual error increases and an oracle control succeeds.
The oracle uses the same class and matched budget but includes or trains on
the intervened physics. Success of this control shows that the configuration
can be learned with the additional information; it does not prove that the
intervention is unidentifiable for every model or isolate a unique failure cause.

\begin{center}
  \resizebox{\linewidth}{!}{%
  \begin{tabular}{lllll}
    \toprule
    regime & unsupplied parameter map & refusal & oracle control & label \\
    \midrule
    cavity & particle-shape width (inside a Gaussian) & $111\times$ & $1.4\times$ & \decisive{} \\
    fluid & pressure exponent $\gamma$ (inside $h^\gamma$) & $\sim10^{9}\times$ & $10^{-6}$-level fit & \decisive{} \\
    grid & particle-shape width & $7.0\times$ (bar: $10\times$) & $0.34\times$ & \directional{} \\
    \bottomrule
  \end{tabular}}
\end{center}

Cavity: the width enters every field-gather and current-deposit inside a Gaussian; the
$\sigma_s\!=\!0.10$-trained model misses $\sigma_s\!=\!0.16$ physics at $111\times$ its
in-distribution error while the oracle-kernel control reaches $1.4\times$ (a first oracle run
  at half the training budget failed its bar; the confound was found and the control re-run
size-matched, on record). Fluid: trained at $\gamma=2$, refused at $\gamma=1.7/2.3$ with errors
$0.75/0.27$ ($\sim\!10^{9}\times$ in-distribution) while the $\gamma$-oracle fits each refused
law at $1.3\times10^{-6}/3.7\times10^{-7}$. Grid: refused at $7.0\times$, real but below the
pre-registered $10\times$ bar, reported \directional{}, with the milder effect plausibly
reflecting the gentler generator perturbation (a smoothing-order step rather than a kernel-width
shift).

\why These interventions change parameter dependence that the tested model
does not correctly supply. The resulting failures delimit this implementation.
They do not establish that nonlinear parameter dependence is intrinsically
unidentifiable: known or restricted nonlinear maps can support transfer.

\begin{figure}[t]
  \centering
  \includegraphics[width=\linewidth]{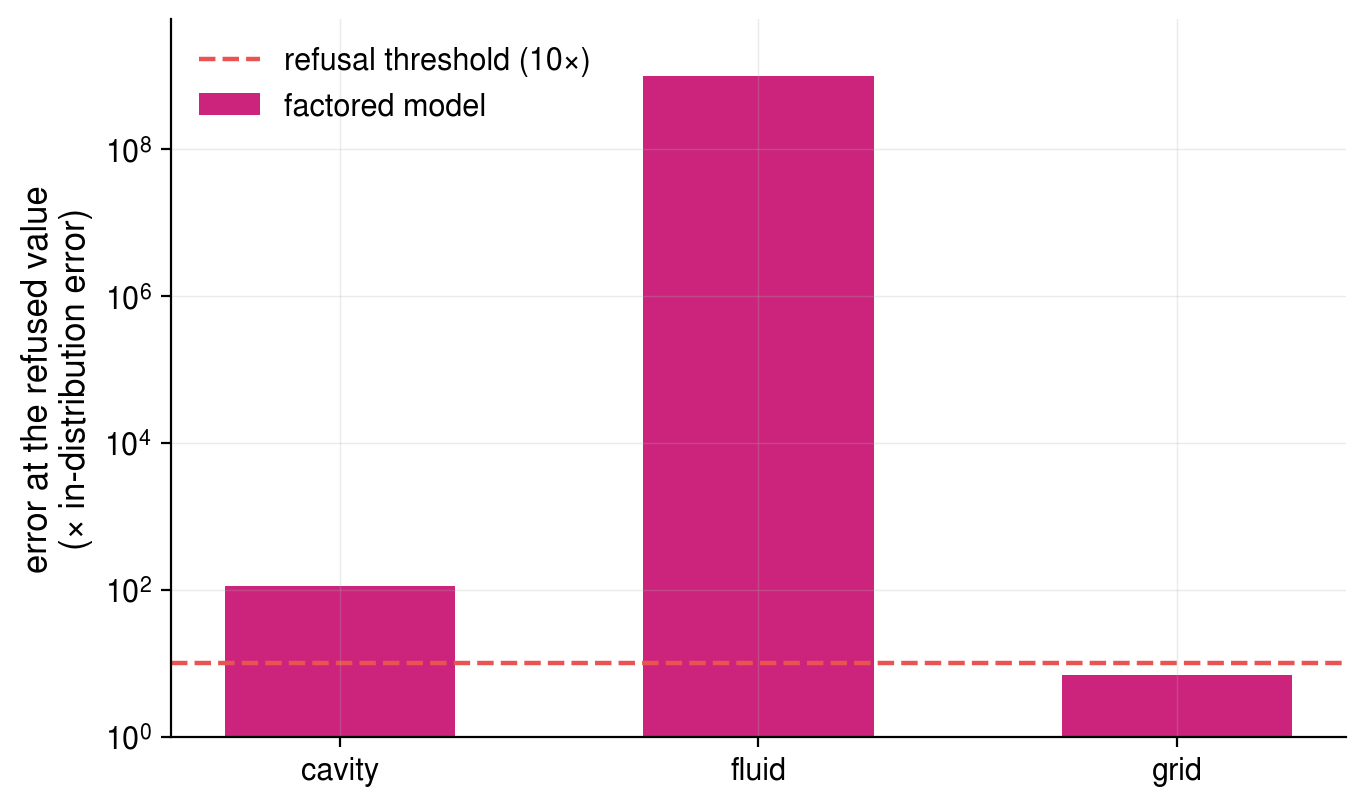}
  \caption{Refusal tests with oracle controls. Bars: the factored model's error at the
    refused intervention, as a multiple of its in-distribution error (log scale), in the
    three regimes; the dashed red line is the $10\times$ refusal threshold fixed before the
    run. The numbers and the matched oracle controls are discussed in
    \S\ref{sec:falsifiers}.}
  \label{fig:falsifier}
\end{figure}

\textbf{An empirical fit-gap alarm.}
A one-degree-of-freedom oscillator test compares a trained quadratic-plus-bounded
energy parameterization on a quartic truth with a quadratic control.
The one-step fit error is $2.7\times10^{-3}$ versus $9.0\times10^{-8}$,
and the fitted curvature at $x=2$ is $-4.5$ rather than $13.0$.
A bounded correction can reproduce quartic behaviour on compact support.
Consequently, these results establish fitting difficulty for the tested
implementation, not global representational impossibility or an effect
caused by coercivity.

Across the reported quartic-strength sweep, the fit gap grows, and the
structured model changes from $42\times$ better than an
equal-capacity generic model at $\beta=0$ to $213\times$ worse at $\beta=4$.
This is an empirical alarm. Optimization error and differences between
nonnested function classes can also produce such a gap.
The single quadratic control does not establish a general false-positive rate.

A complementary ambiguity test compares $G$ with $|G|$ on positive
training support. These maps are observationally equivalent there and
can differ under a negative intervention. The reported procedure returns
ambiguity in 40 of 40 such development worlds. This outcome does not
establish universal refusal: the frozen confirmation selector also
accepts some out-of-family worlds.

\begin{figure}[t]
  \centering
  \includegraphics[width=\linewidth]{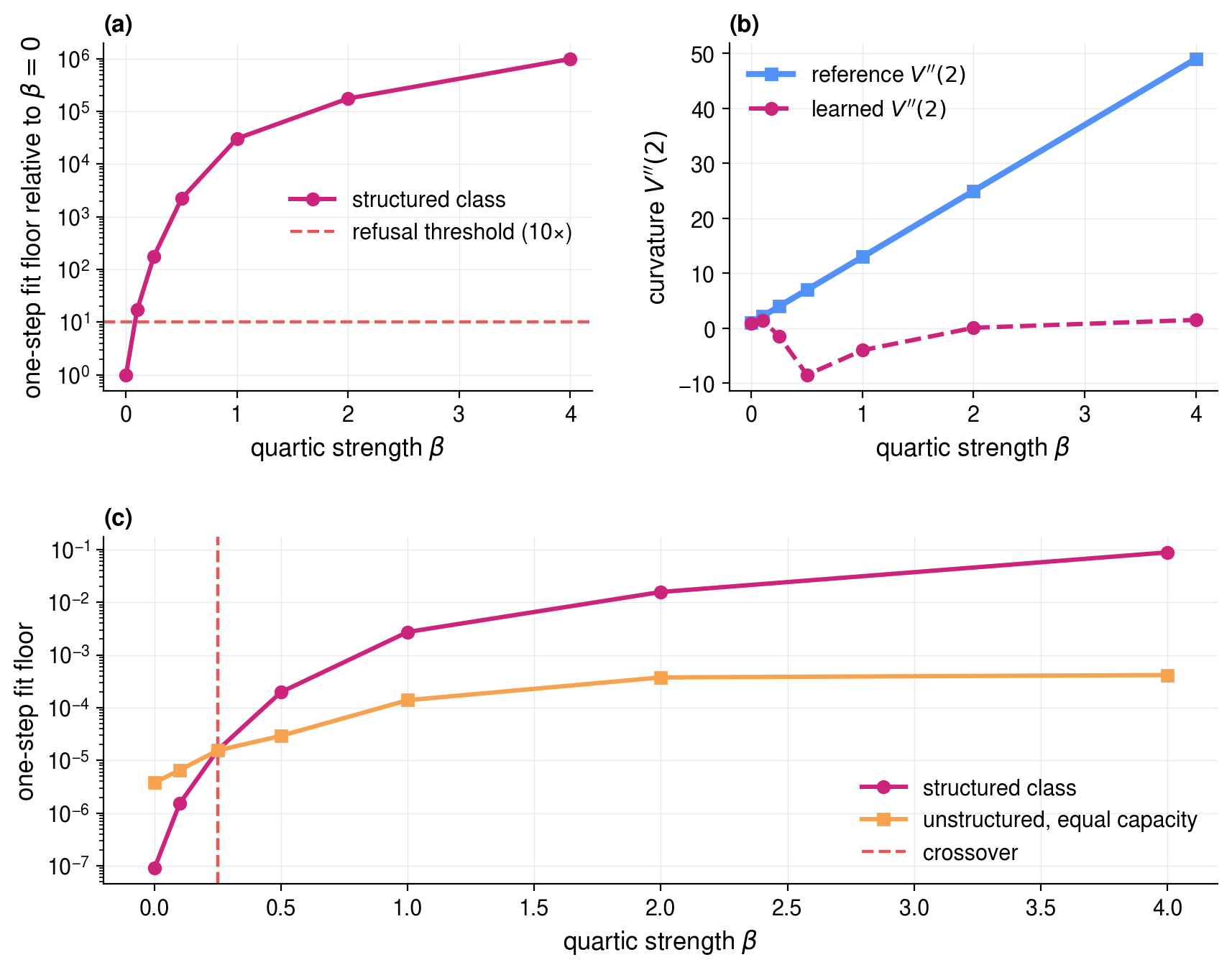}
  \caption{Fit-gap diagnostics on the quartic test, for a confining-quadratic energy class
    trained on a quartic world of strength $\beta$. (a) One-step fit floor relative to
    $\beta=0$ with the $10\times$ threshold (dashed). (b) Curvature $V''(2)$ of the
    reference (blue) and of the learned energy (dashed magenta). (c) One-step fit floor of
    the structured class (magenta) against a capacity-matched unstructured network
    (orange), with the crossover marked (dashed vertical line). The three panels are read
    in the text below.}
  \label{fig:g36}
\end{figure}

\section{What long-horizon stability structurally requires}
\label{sec:stability}

Long-horizon stability decomposed, experimentally, into two separately necessary conditions,
one about the learned energy's \emph{shape} and one about the coupling's \emph{bookkeeping}, plus
one residual boundary we characterized to its correlate.

\subsection{Condition 1: a confining learned energy}
\textbf{Result (cavity).} With an unconstrained learned field
energy the model passes fit and counterfactuals but fails the stability test: every seed's 50{,}000-step rollout
diverges (deviations to $2.9\times10^{7}$). Imposing a confining form (per-mode quadratics
with learned positive stiffness plus bounded learned corrections, on per-mode-scaled inputs)
takes the same suite from $0/5$ to $5/5$ seeds bounded with a flat drift slope, an eight-order
improvement, with no regression in fit or counterfactuals. \decisive{}.

\why A symplectic integrator conserves the \emph{learned} energy to backward-error order, and
that constrains the state only if the energy's level sets are compact (and the total energy, including the
coupling, is coercive; Prop.~\ref{prop:A}). A tanh-MLP energy flattens far from data: out there, level
sets open up, and the flow, faithfully conserving a flat function, can drift to infinity at
constant learned ``energy.'' Probing the failed models confirmed exactly this geometry (the
learned potential saturates where the divergences escape). Confinement restores compact level sets,
so excursions must return. The learned correction only reshapes the landscape \emph{inside} the
confined region, where data constrain it. The scope is stated in Proposition~\ref{prop:A}: this is
\emph{vacuum-field} stability, since energies unbounded below (gravitational or bare-Coulomb
attraction) are the recorded exception, inherited from physics itself rather than from learning.
One parameterization lesson travelled program-wide: learned physical coefficients must live in
units where their target is $O(1)$ (the raw-input stiffness spanned $0.5$--$128$ and could not be
traversed by training; per-mode scaling fixed it), the same lesson later reappearing in the
dissipation port's $\sigma=\mathrm{raw}^2$ form (\S\ref{sec:discovery}).

\subsection{Condition 2: energy-consistent coupling, the mechanism, measured}
\textbf{Result (cavity, grid, fluid).}
Confinement alone is \emph{not} sufficient: the non-factored control, given the identical
confining energy, still diverges where its \emph{coupling} is black-boxed (4/5 seeds unbounded
on the cavity; diverges on the grid). The fluid is the informative exception that makes the
rule precise: its non-separable control black-boxes only the gravity \emph{entry} while the flux-form
coupling remains energy-consistent, and it stays bounded ($5/5$ seeds, max
$|\Delta E|/E_0=9.2\times10^{-5}$) yet fails extrapolation at $10^{10}\times$, exactly the
dissociation Propositions~\ref{prop:A} and~\ref{prop:B} jointly predict: stability tracks coupling consistency,
counterfactuals track separable entry (the $0/5$-bounded fluid model is the unstructured
baseline, whose coupling \emph{is} black-boxed). The mechanism was measured directly: instrumenting the coupling's two ends (the force on particles and the current into
fields) shows the non-factored model injects $\sim9\times$ the net energy per step of the
factored model under identical conditions, a phantom power source at the particle--field
interface. \decisive{}, and the measurement corrected our own earlier account (the original
failure had \emph{two} components, not one; on record).

\why In the true physics, the force on the particle and the particle's back-reaction current are
two derivatives of \emph{one} interaction energy; their energy exchange cancels identically.
Factoring enforces this by making both ends share one coefficient, so the model \emph{cannot}
represent a force without its energetically matched back-reaction. A black-box coupling learns
the two ends as independent functions; any mismatch between them is a non-conservative loop that
does net work every cycle, and a long rollout is an integrator of exactly such work. This is why
the failure grows with horizon while fit metrics stay excellent, and why no amount of
confinement (which bounds the \emph{learned} energy, not the injected physical energy) can
rescue it. For the fluid the same role is played by the flux form: mass conservation to
$4.5\times10^{-16}$ is structural, and functions as the validity meter that charge conservation
provides in the electromagnetic systems.

\subsection{The residual worst-case energy tail}
\textbf{Result (cavity).} The structured configuration reported throughout is
bounded and trend-free at $100\times$ horizon, but rare episodes spike to $5$--$12\%$ energy
deviation (typical episodes hold $1$--$3\%$), missing a pre-registered $5\%$ worst-case bound by
$\sim2\times$. We report this as a \boundarylbl{} and characterized it. The long-carried
``close-encounter'' attribution was \emph{refuted} by direct test: across 400 episodes the
correlation of closest approach with worst-case deviation is $r=+0.13$ (the wrong sign), and
on attracting pairs that genuinely reach zero separation there is still no correlation
($-0.06$); the strongest actual correlate is \emph{initial kinetic energy} ($r=+0.24$). Both
pre-registered levers fail: multi-step (pushforward) training widens seed variance and costs
fit; $67\times$ close-encounter oversampling changes nothing ($0.124$ vs $0.091$
natural). The grid closes the same bound outright ($0.4\%$ worst case), so the tail is cavity-specific content strain, not a defect of the form.

\why What the refutation teaches is as useful as a fix: the tail is not a rare-\emph{event}
problem (which coverage would cure) but a high-\emph{energy} problem, since fast episodes traverse
the learned content's least-constrained region at velocity, so small content errors integrate
into transient energy excursions that then relax (the excursions return to the $1$--$3\%$ band;
the flow remains confined). Multi-step training fails because its window ($\ll$ the excursion
timescale) never sees the error it is meant to shrink; oversampling fails because the strained
region is reached by \emph{energy}, not by proximity. The mechanism statement is: the
boundary is set by where data constrain the learned content, and its currency is kinetic energy.

\subsection{Metric discipline for stability claims}
Energy-drift alone can be satisfied trivially (a model predicting trivial motion has perfect drift), so every
stability claim co-reports rollout error; boundedness alone hides excursions, so worst-case
per-seed deviations are reported alongside means; and reported means never silently drop
diverged seeds (survivor-conditioning is disclosed wherever aggregation could hide a divergence).
Intervals retain their originally reported construction; they are not
reinterpreted as min--max ranges. Small-seed summaries are descriptive and are
not used as calibrated evidence without a justified uncertainty analysis.
The counterfactual sign-flip tests report their exact binomial tests separately;
non-overlap of error bars is not itself a significance test.
These rules exist because each was, at some point in this program, the difference between a true
and a flattering number (Appendix~\ref{app:integrity}).

\section{Dissipation, driving, and stochastic consistency}
\label{sec:secondlaw}

\subsection{Nonnegative damping and its same-channel representational limit}
\textbf{Result (cavity).} Two ports, identical everywhere except the sign
constraint. The \emph{guarantee} port (rate $\ge0$ by construction) transfers a never-seen
dissipation magnitude with the reported rate error (\S\ref{sec:cf-sign}) and \emph{refuses}
anti-dissipation: asked to run $\sigma=-0.5$, it clamps at zero (rate error $1.03$), failing
\emph{by construction}, as intended. The \emph{sign-free} port represents the
anti-dissipative dynamics faithfully: it tracks the genuinely divergent trajectory (true energy
growth $\sim22{,}000\times$ over 2{,}000 steps) to $\sim1\%$ energy error across 3 seeds. This
is representation fidelity of a divergent process, not a stability claim, and not a
second-law violation by the model of a stable world.

\why The sign constraint restricts the same scalar damping channel:
$\sigma\ge0$ contracts its specified positive quadratic diagnostic, whereas
$\sigma<0$ expands it away from zero. This is not an entropy-production
theorem for everything the model can express. The sign-free control
measures the representational consequence of relaxing that constraint.
External work and thermal noise remain separate sources of subsystem energy.

\subsection{Decay, terminal state, and explicit driving}
\label{sec:decay}
\textbf{Result (cavity).} Trained on dissipative trajectories, the port recovers the
medium's resistivity to $0.4\%$ ($\sigma=0.502$, truth $0.5$), matches the field-energy decay
rate to $0.0038$ relative error and the driven steady state (tail ratio $0.998$; injection
  tracked at $6.04$ against a true $6.25$, a discrepancy of about $3.4\%$); the entropy-production diagnostic flags an engineered violator $5/5$ and the
structured model $0/5$, and itself flags the unstructured baseline. \decisive{}. Metric scope,
disclosed: the decay-rate is computed on \emph{field} energy, since kinetic energy is undamped in
this medium and would swamp a total-energy terminal ratio.

\why Identification is easy \emph{because} the structure is right: the port is a single scalar
multiplying the exact resistive structure, so fitting $\sigma$ is a one-parameter regression on
an exponential, and the architecture converts a behaviour (``obey the second law at the right
rate'') into an estimation problem the data can actually pin. The driven control matters because
a model could ``pass'' dissipation tests simply by damping everything; tracking a known
antenna's injection at $6.04/6.25$ while never injecting through the port shows injection and
dissipation live in the right components (drive vs port), which is the design rule
(represent the known external input separately from passive damping) tested here.

\subsection{The stochastic second law: recovering the correct temperature}
\textbf{Result (cavity).} Under a fluctuating thermal bath
(fluctuation--dissipation-matched noise on a charge-0 linear reservoir), the trained model
relaxes from a cold start to the correct stationary temperature: $T=0.502\pm0.007$ against a
truth of $0.5$ ($0.3\%$), \emph{both} field sectors within $0.4\%$, energy-histogram Wasserstein
distance $0.016$ (gate $0.10$), with independently learned $\sigma=0.497$. Controls: the same
model in a noise-doubled (FDT-mismatched) bath equilibrates near $2T$ and is flagged $5/5$; the
correct model is flagged $0/5$; the unstructured baseline reaches a temperature about $10^{6}\times$ too high
(finite, a gross temperature failure, not a numerical divergence). The reference Langevin system
was calibrated first ($6/6$ tests; Gibbs temperature to $0.6\%$). \decisive{}. Scope: one
$(T,\sigma)$ pair, a charge-0 linear bath; the reservoir's $T$ and its FDT relation are supplied
as known physics, and the claim is that the \emph{learned} $\sigma$ and reversible core are
quantitatively consistent with them.

\why The magnetic sector is undamped and undriven in this test, so its
equilibration probes energy exchange through the learned coupling.
Agreement of sector temperatures and the reported energy histogram is
consistent with the supplied bath in this charge-zero linear setting.
It does not uniquely identify the coupling, prove detailed balance,
or establish the invariant law of the full nonlinear simulator.
The noise-doubled control tests sensitivity to a specified FDT mismatch.

\subsection{The fluid's second law: same structure, different physics}
\textbf{Result (fluid).} Viscous decay rate matched to $1.8\times10^{-8}$; the
no-port control cannot dissipate at all (rate error $1.0$, so the port is necessary content, not
decoration); wind injection tracked exactly (factor $17.4$, equal to ground truth); the violator
flagged $5/5$; anti-friction refused by the guarantee (rate clamps to zero exactly).
\decisive{}. A measurement lesson that generalises: the diagnostic must read the port's
\emph{fluctuation} energy, not total energy, since the constant $\tfrac12 g h_0^2 L$ background would
otherwise swamp the signal, the fluid analog of the kinetic-vs-field scoping in
\S\ref{sec:decay}.

\section{One model that estimates the physics and selects its form}
\label{sec:discovery}

\begin{figure}[t]
  \centering
  \includegraphics[width=\linewidth]{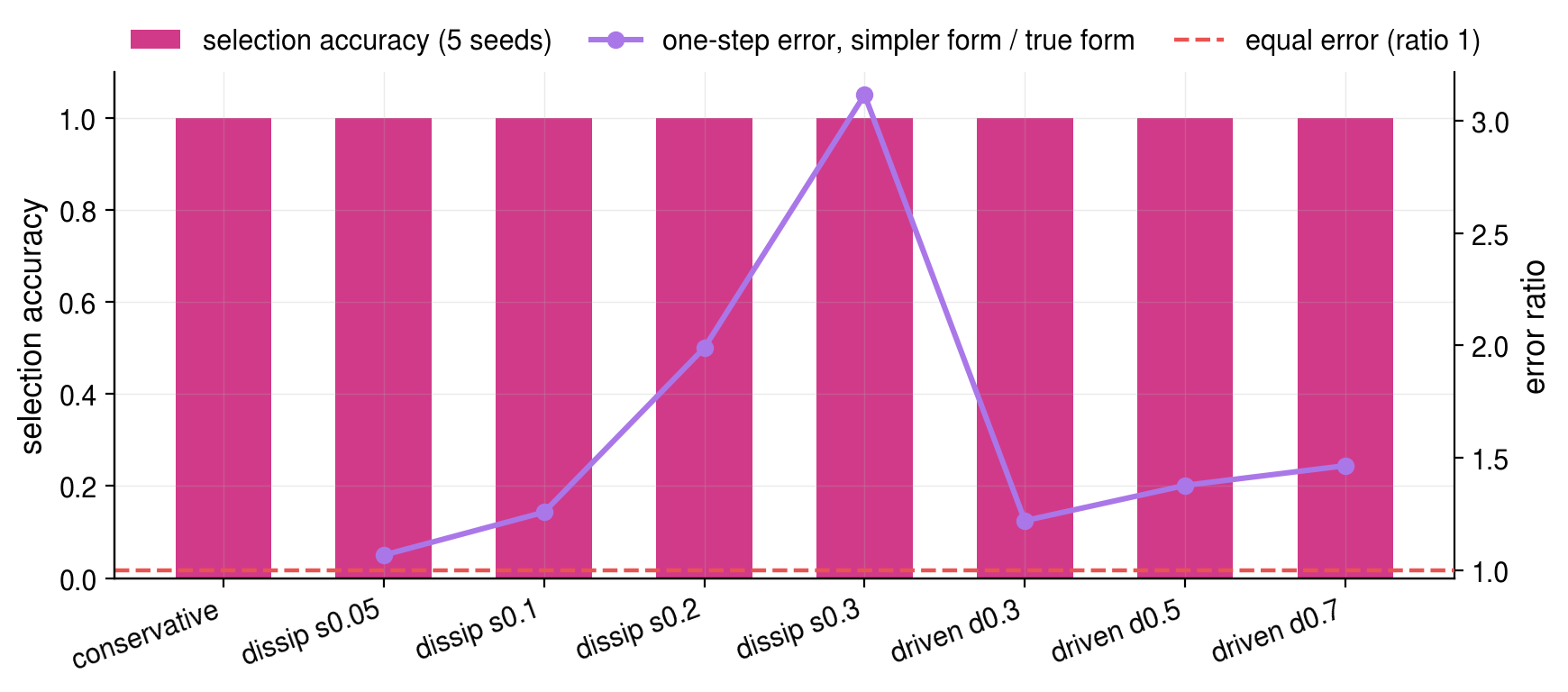}
  \caption{Automated structure identification on the one-degree-of-freedom stress ladder
    (development instrument). Bars: fraction of runs on which the selection loop picks the
    true general form, across a conservative world, dissipative worlds of increasing rate,
    and driven worlds of increasing amplitude. Line (right axis): one-step error of the
    next-simpler form relative to the true form, with equal error as the dashed line. The
    ladder result and the flagship-cavity margins are given in the text.}
  \label{iclr:fig-loop}
\end{figure}

The reversible/irreversible split has a consequence that turns architecture into epistemology:
``conservative'' is not a different model class but the $\sigma=0$ member of the dissipative
family. A single model can therefore \emph{decide from data} whether each world it watches
dissipates, the smallest possible instance of discovering \emph{which laws apply} (one bit per
system: does the second law bind? plus one number: how fast). This section earns that claim in
three escalating forms (supervised by a condition index, tested by rate-clamping controls, and
label-free) and then extends discovery from constants to \emph{functions}.

Section~\ref{sec:form} of the main text states the division of labour component
by component: what the architecture supplies as general form, what the data must
estimate as specific content, and how each estimated quantity is tested. The value
of the approach rests on that division, since a wrong fit of the estimated content
gives wrong numbers under the same imposed form.

\subsection{Supervised discovery: one checkpoint, four worlds}
\label{sec:unified}
\textbf{Result (cavity; 10 seeds; one checkpoint per seed, scored as-is).} One
model (shared learned energy, coupling, and scales; per-condition rates
  $\sigma_c=\mathrm{raw}_c^2$; a 10$\times$ port learning rate; and an \emph{energy-trajectory
  auxiliary loss}, using 50-step windows matching the model's episode energy series to the data's;
  its energy \emph{readout} uses the reference functional, disclosed known-physics supervision,
  given to the informed baseline too, and removed entirely in the label-free variant of
\S\ref{sec:discovery}.4, which is the answer to this soft spot) trained jointly on
conservative, dissipative ($\sigma=0.5$), driven, and external-field episodes, told only
\emph{which condition} each episode belongs to, never what any $\sigma$ is.
It drives the rates apart by over four orders of magnitude, per system, correctly:
$\sigma_{\rm cons}=1.2\times10^{-5}$ (implied spurious energy loss over the full 50{,}000-step
horizon: $0.6\%$); $\sigma_{\rm diss}=0.481$; the external-field condition reaches \emph{exact}
zero. On that single set of weights, the four behaviours hold: composition match $1.0$; the
charge-sign flip $10/10$ seeds ($p=0.001$); stability bounded $10/10$ with flat slope (the $5\%$
  worst-case bound fails on the same characterized tail as the specialist,
\S\ref{sec:stability}; disclosed, not averaged away); second-law decay error $3.6\%$ (median), driven injection
$6.33$ vs true $6.25$, violator flagged $10/10$, model clean. Controls: without the port the
same model cannot fit the dissipative world (decay error $0.99$); an \emph{informed}
equal-capacity baseline (given the condition index, drive and field features, \emph{and} the
auxiliary loss) still fails everything (rollout $\sim10^{6}$, second-law rate error $\sim7$); and the
sign-free port learns a small \emph{negative} conservative rate on 3/5 seeds
($-1.6$ to $-3.2\times10^{-4}$), positing a spurious energy source on ambiguous data, exactly
the failure the one-way port forbids by construction. \decisive{} at the behaviour level. The
fluid reproduces the entire result: $\sigma_{\rm inviscid}=1.4\times10^{-4}$ vs
$\sigma_{\rm viscous}=0.500$, with the sign-mode control showing no spurious
anti-dissipation.

\why Two design decisions carry the identification, and both are general lessons rather than one-off tricks.
\emph{Signal:} a single step constrains $\sigma$ only through the factor
\begin{equation}
  e^{-\sigma\,\Delta t}\approx1-\sigma\,\Delta t,
\end{equation}
a small sensitivity for a short step, while the
\emph{episode-level energy series} carries $\sigma$ with signal proportional to horizon; and
unlike full-state multi-step training (which fails here, \S\ref{sec:stability}), the energy is
smooth and non-chaotic, so rollout-training the \emph{invariant} extracts the long-horizon
signal without the long-horizon chaos. \emph{Parameterization:} $\sigma=\mathrm{raw}^2$ can
reach zero \emph{exactly} (at $\mathrm{raw}=0$), where a softplus approaches it only
asymptotically; the gradient $d\sigma/d\mathrm{raw}=2\,\mathrm{raw}$ itself vanishes at zero,
so the parameterization buys exact representability of $\sigma=0$, not healthy gradients
there, and the training signal for small $\sigma$ comes from the episode-level energy
series (whose sensitivity $\partial e^{-\sigma t}/\partial\sigma=-t\,e^{-\sigma t}$ grows with
horizon), not from a single step, where $\partial e^{-\sigma\Delta t}/\partial\sigma=-\Delta
t$ is small because the step is short. The sign-free control then explains \emph{why the guarantee matters
precisely here}: on nearly-conservative data the likelihood surface around $\sigma=0$ is almost
flat, so an unconstrained port wanders to small negative rates, harmless at short horizon but
compounding at $10^5$ steps. The one-way port converts an ill-conditioned sign inference the
data cannot settle into a constraint physics already settles.

\subsection{Identifiability: the energy cannot absorb the dissipation}
\textbf{Result (cavity).} Clamping $\sigma_{\rm diss}$ at wrong values and letting
every shared component retrain to compensate leaves a clean fit-loss minimum at the truth
($0.25\!\to\!0.012$, $0.4\!\to\!0.004$, $\mathbf{0.5\!\to\!0.0004}$, $0.6\!\to\!0.008$,
$0.75\!\to\!0.015$), with the conservative-condition energy uncorrupted. A $\lambda$-sweep
exonerates the shrinkage penalty for the unified run's residual $\sim4\%$ low bias in
$\sigma_{\rm diss}$ (L1 contributes $-0.08$ percentage points; at $\lambda=0$,
$\sigma=0.50005$): the bias is \emph{protocol-specific} (the four-condition/auxiliary-loss
interaction; mechanism recorded as open), and a prior attribution in our own record was
corrected accordingly. Fresh-data replication corroborates: the cheap two-condition
protocol reproduces $\sigma_{\rm diss}=0.4988$ ($0.2\%$).

\why The sharp referee question (``couldn't the flexible energy quietly absorb the
dissipation, making the learned $\sigma$ decorative?'') has a structural answer made
empirical: a Hamiltonian flow conserves its own generating energy, but can
change a different reference diagnostic. The clamp experiment shows that
the tested shared components did not compensate for the specified wrong
rates under this protocol. It does not establish global uniqueness of
the conservative/dissipative decomposition.

\subsection{Fresh-data robustness and the precision boundary}
\textbf{Result (cavity).} Regenerating all training data with three fresh
\label{sec:robustness}
generation seeds: the sign-flip dissociation survives outright ($15/15$ seeds, pooled
$479/480$). The $\sigma$-identification survives \emph{at the order-of-magnitude level}:
$\sigma_{\rm cons}$ and $\sigma_{\rm diss}$ stay $\sim3.7$ orders apart on every draw, but the
strict pre-registered per-value gate ($\sigma_{\rm cons}<10^{-4}$ on every fit)
marginally fails: $7/9$ values below, maximum $1.07\times10^{-4}$; fresh-draw inference is
$\sim7\times$ looser than the shared-data run, and the worst seed's implied 50{,}000-step
spurious loss is $5.2\%$, marginally over the $5\%$ bound. We keep the gate failure as a
\boundarylbl{} (an early draft that relabeled it was itself corrected; the recorded verdict was
right, Appendix~\ref{app:integrity}).

\why Near-zero rates show greater relative estimation sensitivity in these
runs. Boundary constraints, parameterization, finite-window signal, and
optimization can all contribute, and the experiment does not isolate them.
Absolute errors and their long-horizon consequences are therefore reported
alongside relative rate accuracy.

\subsection{Label-free inference of the medium from observation}
\textbf{Result (cavity).} An encoder reads the first 100 steps and estimates
$\hat\sigma$ using a frozen core's short-rollout prediction loss, without
rate labels. In the reported five-seed experiment, every held-out
conservative estimate is below every dissipative estimate. A sliding window
tracks a medium switch from $0$ to $0.5$ with estimates $0.078$ and $0.457$.
Intermediate-rate correlation is $0.90$ against the declared $0.95$ gate,
with slope $0.775$. The conservative estimate is approximately $0.07$,
and the corresponding rollout error is $0.60$, versus $0.005$ in the
reference-rate comparison. Short and shuffled windows weaken the separation,
although a permutation-invariant cue remains in one seed.

These results demonstrate regime discrimination with residual estimator bias,
not an information-theoretic limit. Direct fitting on the same windows
achieves smaller errors, as reported below. Freezing the core separates
context estimation from content adaptation. Accuracy at two training rates
does not establish accuracy on every intermediate trajectory or state region.

\subsection{Discovering functions, not just constants}
\textbf{Result (grid; fluid).} On the grid, within the declared confining functional parameterization and supplied operators, the learned constitutive derivatives come out \emph{linear}
($r^2=1.0000$) with slopes $f_E'=1.043$, $f_B'=1.001$: the model discovers that vacuum
electromagnetism's energy is quadratic, and reports their slope ratio,
$r_{\rm slope}=a_B/a_E=0.959\pm0.017$ (reference ratio $1$).
Its interpretation as squared wave speed depends on the field-variable and
bracket normalization, which is not derived in this report.
We therefore retain it as a constitutive-slope diagnostic rather than an
independent wave-speed measurement. The approximately $4\%$ E-side discrepancy
is reported with its direction. Smoothing and discretization are plausible
contributors, but their causal contribution is not established here.
The slope-regression estimator was
pre-registered (a pointwise-ratio probe bug, which diverges near zero field, is on record with
its correction). In the fluid, the learned mass-metric and potential-derivative are linear with
slopes $1.000$ ($R^2=1.000$); since gravity is supplied, this yields an effective
coefficient $1.000\pm5\times10^{-6}$ per unit supplied gravity, not an independent
gravity measurement. \decisive{}, with the
scope stated where it belongs: the reported estimates agree with these tested truths within the supplied class.
Class membership alone does not ensure identifiability or successful optimization.

\why Why can an energy \emph{functional} be recovered from one-step data at all, and modulo
what? With the update rule pinned to the curl-form stencils, observed field increments constrain
the operator image $L\nabla E$, not $\nabla E$ itself: any admissible $C$ with $L\nabla C=0$
(Casimirs of the fixed bracket, additive constants, and any direction the curl operator
annihilates) leaves the dynamics unchanged, so $E$ is identifiable only up to that equivalence
class, and only on the states the data visit. We distinguish operator-null directions from observable quantities:
a product diagnostic removes one scaling ambiguity, and spatial derivatives
remove additive force constants. These operations do not establish a unique
underlying gauge. The gauge test below exhibits distinct admissible functions
with identical trajectories, which is why the nonlinear-recovery errors are
reported gauge-aligned (mean difference removed) alongside the raw values. The
amplitude sweep broadens the tested operating support, but does not
establish full rank of the observation-sensitivity operator. Recovery concerns
the operator-visible content on the sampled states. A high $R^2$ against
known functions is empirical agreement, not an identifiability proof.

\textbf{Gauge test (distinct admissible functions, identical trajectories).} We checked the
equivalence class directly on the oracle-content fluid model. Shifting the potential force by
a constant, $V'(h)\to V'(h)+c$ (equivalently $E\to E+c\cdot\mathrm{mass}$, a Casimir direction
of the fixed bracket), leaves every trajectory unchanged to machine precision: over $16$
episodes and $200$ steps the rollout nMSE between the shifted and unshifted models is
$10^{-27}$ for $c\in\{0.5,-0.3,2.0\}$ at both the standard ($p=2$) and polytropic ($p=3$)
truths. Shifting the mass density instead, $M(h)\to M(h)+c$, changes the flux and the
trajectories (nMSE $0.39$ to $5.8$ over the same shifts). The $V'$ intercept is therefore a
genuine non-identifiable direction, which is why only the $h$-dependence (slope and shape) of
$V'$ is the intended recovery target. The tested constant shifts of $M$
are distinguishable; this does not establish uniqueness over all admissible
functions or exclude compensating changes in other learned components.

\textbf{Nonlinear recovery, downstream prediction, and the linear comparator.}
The fluid experiment uses a polytropic truth $p=3$, with $V'(h)=h^2$ and $M(h)=h$, at
excitation amplitudes $a\in\{0.03,0.05,0.10\}$, step sizes $0.01$ and $0.005$, and three
seeds per cell, giving $18$ cells in all. A learned-content
arm is compared with an oracle-content arm (same class and integrator, exact reference
functions) and, at $\Delta t=0.01$ and every amplitude, with an explicitly linear
constitutive comparator
\begin{equation}
  M(h)=\mathrm{softplus}(\alpha)\,h,\qquad V'(h)=\mathrm{softplus}(c)\,h,
\end{equation}
trained on identical data (the tanh corrections zeroed and frozen; its two parameters need an Adam step of
$10^{-2}$ to converge within the same $30$ epochs, after which its loss is flat from epoch
$20$). Function-level comparisons use the uniform grid over $[1-3a,\,1+3a]$. Because the
additive constant of $V'$ is a Casimir direction ($V'+c$ leaves every trajectory unchanged
to $10^{-27}$), the gauge-aligned error removes the
mean difference before comparing, and the operator image is the exact derivative $dV'/dh$
that the momentum equation sees.
\begin{table}[h]
 \centering
 \caption{Nonlinear fluid truth $V'=h^2$ at $\Delta t=0.01$, means over three seeds (seed
 ranges within $\pm0.01$ for function errors and $\pm15\%$ for prediction errors).
 Gauge-aligned and operator-image errors are relative RMSE on the uniform operating range;
 ``best linear'' is the best linear approximant of the truth on that range. Prediction
 errors are nMSE on held-out episodes: the rollout at the training gravity and the $g=1.5$
 intervention.}
 \label{tab:nonlinear-recovery}
 \small
 \setlength{\tabcolsep}{4pt}
 \begin{tabular}{@{}llccccc@{}}\toprule
 $a$ & class & gauge-aligned $V'$ & best linear & $dV'/dh$ & rollout & $g{=}1.5$\\\midrule
 $0.03$ & nonlinear (learned) & $0.0036$ & $0.0024$ & $0.13$ & $1.4\times10^{-4}$ & $1.1\times10^{-4}$\\
 $0.03$ & linear comparator & $0.0024$ & $0.0024$ & $0.052$ & $2.0\times10^{-3}$ & $1.9\times10^{-3}$\\
 $0.05$ & nonlinear (learned) & $0.008$ & $0.0066$ & $0.17$ & $4.6\times10^{-4}$ & $4.5\times10^{-4}$\\
 $0.05$ & linear comparator & $0.0066$ & $0.0066$ & $0.087$ & $5.9\times10^{-3}$ & $5.7\times10^{-3}$\\
 $0.10$ & nonlinear (learned) & $0.025$ & $0.025$ & $0.24$ & $1.4\times10^{-2}$ & $1.8\times10^{-2}$\\
 $0.10$ & linear comparator & $0.025$ & $0.025$ & $0.17$ & $3.6\times10^{-2}$ & $4.2\times10^{-2}$\\
 \bottomrule\end{tabular}
\end{table}

Across the $18$ cells the raw relative RMSE of the learned $V'$ against the truth is
$0.041$/$0.053$/$0.064$ at $\Delta t=0.01$ and $0.044$/$0.058$/$0.074$ at $\Delta t=0.005$
for amplitudes $0.03$/$0.05$/$0.10$ (seed ranges within $\pm0.005$); most of it is the
unidentifiable constant (gauge-aligned $0.004$/$0.008$/$0.025$; the linear comparator
learns $V'=2.00h$ with zero intercept, so its raw error of $0.9$--$1.0$ is entirely that
constant). The fitted quadratic coefficient is $0.78$--$1.08$ (truth $1$);
learned $M$ is within $0.016$--$0.042$, whereas the linear comparator, whose class contains
$M=h$, recovers it to $10^{-4}$; the oracle-content arm reproduces the reference to
$10^{-15}$. Every cell is bounded, and the learned arm's long-horizon energy deviation
equals the oracle's and the linear comparator's to two digits ($0.0008$/$0.0021$/$0.0080$),
so stability is not where constitutive content shows. It shows in prediction: the
converged linear comparator's one-step loss is $6\times$/$8\times$/$5\times$ the
nonlinear model's at $a=0.03$/$0.05$/$0.10$ ($8.1\times10^{-4}$ against
$1.0\times10^{-4}$ at $a=0.05$), and its rollout and intervention errors are
$14$--$17\times$ larger at $a=0.03$, $13\times$ at $a=0.05$, and $2.4$--$2.6\times$ at
$a=0.10$. Function-level error on the uniform range does not separate the classes: a
linear law approximates $h^2$ to within $0.24\%$/$0.7\%$/$2.5\%$ there, the nonlinear
model's gauge-aligned error is no lower than that floor, and its derivative error is
larger ($0.13$--$0.24$ against $0.05$--$0.17$; the linear slope estimator likewise returns
$R^2\ge0.993$ on the quadratic truth, so slope agreement alone is not a nonlinearity
test). The nonlinear content that the data constrain is therefore established at the
level of curvature and of prediction on the sampled distribution, not at the level of
uniform-range function error. The step-size check separates numerics from content: at
$\Delta t=0.005$ the intervention error is $3.6\times10^{-5}$/$1.9\times10^{-4}$/$3.3\times10^{-3}$
against $1.1\times10^{-4}$/$4.5\times10^{-4}$/$1.8\times10^{-2}$ at $\Delta t=0.01$, three-
to five-fold lower at essentially unchanged constitutive error, a numerical contribution
that must be held fixed in any constitutive-error-to-prediction comparison.

\textbf{Observation-window inference: encoder versus a direct per-window fit.} On the
identical test windows and the identical frozen core, a direct per-window fit of
$\sigma$ (grid then local refinement over the window prediction error) reaches a
conservative-rate floor of $0.009$ (window $K=25$), $0.011$ ($K=50$), and $0.007$
($K=100$) on clean windows, rising to $0.029$--$0.039$ at $2\%$ observation noise, with
rate correlation $0.90$--$0.98$ across the trained grid; the amortized encoder on the same
inputs floors at $0.051$/$0.058$/$0.070$ with correlation $0.54$/$0.76$/$0.90$, and is
brittle off its training window length (absolute error at $\sigma=0.5$ of $0.40$ at
$K=25$ against $0.02$ for the direct fit). The $0.07$ floor is therefore predominantly
amortized-encoder error in this comparison. Direct fitting achieves errors
near $0.01$ clean and $0.03$--$0.04$ at $2\%$ noise; these are estimator outcomes,
not information-theoretic lower bounds. The grid's approximately $4\%$ slope-ratio discrepancy is a separate
systematic recovery error. Smoothing and discretization are plausible
contributors; the reported agreement does not uniquely establish their
causal contribution.

\subsection{Discovery rung 2: selecting the general form}
The discoveries above identify content \emph{within} a fixed general form; selecting \emph{among}
forms begins to answer who designs the structure. In the lumped one-degree-of-freedom harness
(a Hamiltonian core, a non-negative dissipation port, and an optional explicit drive; learned
inertia, stiffness, and $\sigma$) we score the candidates
$\{H,\ H{+}\text{port},\ H{+}\text{port}{+}\text{drive}\}$ by a pre-specified selection statistic, the
Bayesian information criterion on the one-step fit, $\mathrm{BIC}=N\ln\mathrm{MSE}+p\ln N$. Under a
hardened protocol (eight worlds spanning a $\sigma\!\to\!0$ dissipation ladder and a
$\text{drive}\!\to\!0$ ladder, five seeds each, with $1\%$ observation noise) the minimum-BIC
candidate is the true form on all $40$ runs (conservative$\,\to H$; dissipative$\,\to H{+}$port;
driven$\,\to H{+}$port${+}$drive), a \emph{non-nested} wrong form (dissipation on the wrong variable)
is never selected, and the correct form's one-step error separates from the next-simpler candidate
by $1.1$ to $3.1\times$, the separation shrinking toward the conservative boundary, where at
$\sigma\!\lesssim\!0.05$ selection leans on the complexity penalty rather than the fit, a
characterized near-degenerate boundary. The general model's learned diagnostics agree
($\hat\sigma=0.305$ against a true $0.30$; the drive term switches on only when driven). Scored this
way, the port-necessity and injection ablations the program already runs \emph{are} a model-selection
statistic. The same harness recovers the lumped constants to $1$--$3\%$ and performs label-free
dissipation discovery ($\hat\sigma=0.002$ conservative versus $0.252$ dissipative), the capability of
this section carried into the one-DoF form. Scope, in the same spirit as the discovery results
above: the worlds are synthetic with the truth inside the candidate set, so the claim is that the
selection statistic recovers the correct form \emph{when it is present}, the selection analog of
the ``selects the correct member'' claim of \S\ref{sec:unified}; $\sigma$ is weakly identified under
a strong drive and near the conservative boundary, disclosed (App.~\ref{app:record}). See
also the lift to the flagship cavity below (Fig.~\ref{fig:g37cav}).

\textbf{Lifted to the flagship.} The same pre-specified statistic, applied to the paper's central
system, the $16$-mode, two-particle EM cavity of \S\ref{sec:systems}, with the
$8.9$k-parameter energy network \emph{shared} across all three candidate forms so only the
structural hypothesis (does it dissipate? is it driven?) is penalized, and, per this program's
standing rule, escalated in seeds before any headline: at three seeds it selected the true form in
all three worlds; \emph{the five-seed escalation softened that} (Fig.~\ref{fig:g37cav}). Wherever
there is dissipative or driven structure to find, selection is decisive and seed-stable: the
dissipative world picks $H{+}$port by $\Delta$BIC $2257$ and the driven world
$H{+}$port${+}$drive by $1054$, with the learned resistivity recovering physics it was not told
($\hat\sigma=0.502$ dissipative, $0.502$ driven; true $0.5$). On the \emph{conservative} world the
five-seed verdict flips to $H{+}$port by $29$ BIC at $\hat\sigma=0.0126$, a fit difference of
$1\%$ of wMSE, swung by a single seed. Stating the degeneracy's scope precisely matters, because
it is a \emph{fit-horizon} artifact, not an equivalence of physics: over the $500$-step
evaluation window $\hat\sigma=0.0126$ sheds only $6\%$ of the field energy (the two candidates
are indistinguishable \emph{there}), but the cavity port is palindromic (field energy decays as
$e^{-2\hat\sigma t}$), so over the $50$k-step stability horizon this paper's stability claims live at,
that same $\hat\sigma$ dissipates $99.8\%$ of it. A rolled-out $H{+}$port selection would be
drastically wrong physics at long horizon. The five-seed verdict is $2/3$, superseding the
three-seed $3/3$ (Appendix~\ref{app:integrity}); the surviving claim is that the flagship
selector finds \emph{dissipation and drive} decisively and recovers their rates, while on
the conservative world the residual learned rate $\hat\sigma_{\rm cons}$ is an artefact of
the one-step objective whose long-horizon consequences make removing it necessary rather
than cosmetic. Adding the energy-trajectory auxiliary loss alone did not remove it
($\hat\sigma_{\rm cons}$ moved only $1.2\times$ against a $10\times$ bar, and the incorrect
selection widened), so the residual is an identifiability boundary of the objective. An
$L_1$ parsimony prior in the unified model restores the correct selection ($3/3$, with the
conservative form $H$ winning that world by $280$ BIC and an independent
profile-likelihood tie-break agreeing) and leaves the identifiable rate unbiased
($\hat\sigma_{\rm diss}=0.502$), but the residual conservative rate is unchanged at the
protocol training budget ($\hat\sigma_{\rm cons}=0.0126$), still failing the stricter
eval-horizon gate. Longer training then removes it: the residual is a finite-training
transient that shrinks by about $11.4\times$ per decade of epochs, and a pre-registered
long-training run crossed the $10^{-4}$ eval-horizon bound at $8.0$--$8.5$k epochs and
reached $\hat\sigma_{\rm cons}=5\times10^{-6}$ by $27{,}500$ epochs, below even the unified
model's $1.27\times10^{-5}$. Plain training at sufficient budget therefore closes the
question with no special mechanism, and the decay accelerates ($\approx t^{-2.5}$) while a
static-regression estimate of any true residual stays at zero throughout, so the
transient falls to meet a floor that never rises. At the selection protocol's budget the
correct selection is carried by the $L_1$ prior and the tie-break; nothing about the
conservative world remains open (App.~\ref{app:record}).

\subsection{The selection loop and its fit-gap alarm}
The selector compares a finite menu of supplied forms. The fit-gap alarm
(\S\ref{sec:falsifiers}) compares the best structured fit against a
capacity-matched unstructured reference under the stated fitting protocol.
In the reported perturbation sweep the ordering reverses at
$\beta\ge0.25$, while smaller perturbations lie in the alarm's observed
blind region. This is evidence for an empirical screening signal, not a
test that certifies whether the true dynamics belong to the menu.

The proposed workflow is to fit the candidates and reference, inspect the
fit gap, and then select or investigate further. An alarm can reflect
optimization failure, measurement error, an inadequate reference comparison,
or candidate misspecification. Its absence does not establish class
correctness. Candidate selection can therefore be wrong, including accepting
an out-of-family world. Expanding or repairing the menu is a subsequent
scientific decision, not an automatic consequence of a correct diagnosis.

The reported ingredients include $40/40$ selections on the $1$-DoF stress
ladder and strong BIC separation for dissipative and driven cavity cases
($\Delta$BIC $\ge1054$, five seeds). They also include a conservative
near-degeneracy: the five-seed $H$-versus-$H{+}$port tie-break changes even
though the small port affects evaluation-horizon behaviour. These experiments
use synthetic worlds and small menus. Automatic proposal of new forms is
not implemented; neither BIC ranking nor the separate marginal conformal
retention guarantee supplies a universal correct-form-or-refusal outcome.

\begin{figure}[t]
  \centering
  \includegraphics[width=\linewidth]{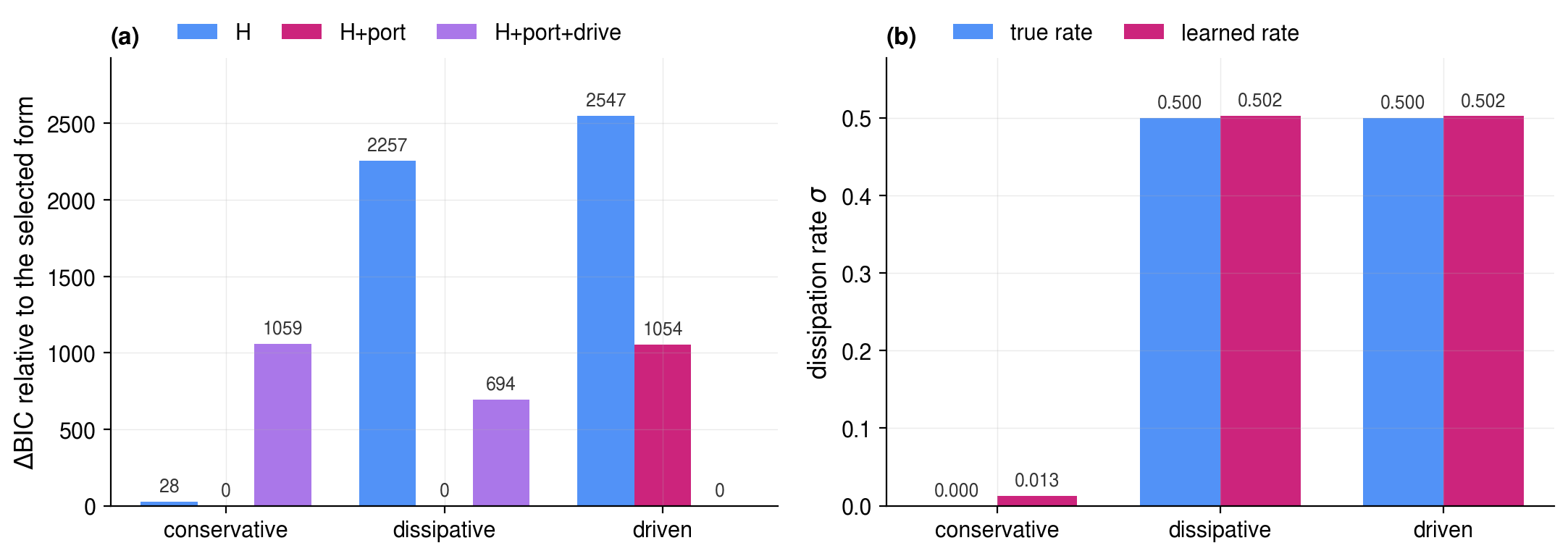}
  \caption{Discovery on the flagship electromagnetic cavity: the same selection statistic
    lifted from the one-degree-of-freedom surrogate to the $16$-mode two-particle cavity.
    (a) $\Delta$BIC of each candidate form ($H$, $H{+}$port, $H{+}$port${+}$drive)
    relative to the selected form, which has $\Delta$BIC $=0$ and so no bar. (b) True
    dissipation rate (blue) against the rate learned by the selected form (magenta). The
    margins, and the near-degenerate conservative tie-break, are discussed in the text.}
  \label{fig:g37cav}
\end{figure}

\section{Generality, noise, and the strongest competitor}
\label{sec:generality}
\label{sec:scaling}

\subsection{The same program in a different physics}
The fluid block (\S\ref{sec:systems}) is not an extra benchmark but the
generality experiment: mass replaces charge as the structural invariant, a scalar height field
replaces vector fields, and gravity and viscosity replace coupling and resistivity, while every
pillar reproduces at the same $\sim$8.7k scale: fit $7.8\times10^{-11}$ with the baseline
failing in-distribution at long horizon ($0/5$ bounded at 10k steps, median-vs-mean disclosure
on record); gravity counterfactual transfer (at the supplied value) with the non-separable control failing at $10^{10}\times$;
machine-precision mass conservation; the complete second-law suite; and the certified refusal test.
\decisive{}. \why The transfer of the \emph{whole program}, including its refusal test, is what
licenses the word ``general'' in our title: the behaviours attach to the imposed form
(conservative flow + one-way port + known drive + factored constants), not to electromagnetism,
and the criterion's predictions (which parameters transfer, which are refused) were made in advance
from the fluid's generator alone.

\section{Synthesis: which structure buys which behaviour, and why}
\label{sec:synthesis}

The findings admit a compact mechanistic summary because the program was built to
dissociate: each behaviour was tested against controls differing in exactly one structural
element. Table~\ref{tab:map} states the resulting map; the paragraphs below give the
\emph{why} in five sentences each, with the evidence rows that pin it.

\begin{table}[t]
  \centering
  \begin{tabular}{p{0.3\linewidth}p{0.35\linewidth}p{0.38\linewidth}}
    \toprule
    \textbf{Structural element} & \textbf{Buys (evidence)} & \textbf{Mechanism in one line} \\
    \midrule
    Locality + field mediation (shared per-object functions) &
    composition; object add/remove; in-distribution competence at $10^4$ params
    where $10^8$ global params fail &
    novel scenes decompose into pointwise-familiar inputs; interaction structure matches the
    generator's \\
    \addlinespace
    Factored (multiplicative) constants &
    support-moving counterfactuals: sign, magnitude, rate, gravity;
    energy-consistent coupling $\Rightarrow$ stability &
    an architectural channel carries values data cannot; one shared coefficient forces force and
    back-reaction to book energy consistently \\
    \addlinespace
    Confining learned energy &
    boundedness at $100\times$ horizon &
    conservation constrains the state only if its level sets are compact; coercive sublevels restrict the state when an energy-sublevel bound holds \\
    \addlinespace
    One-way (non-negative) port &
    quadratic-channel contraction; empirical decay and temperature checks &
    the fixed sign constrains the designated damping channel, not total entropy or arbitrary learned energy \\
    \addlinespace
    Known explicit drives &
    injection tracked, never manufactured &
    gain and loss live in separate, separately-audited components \\
    \addlinespace
    Fixed conservative flow + learned functionals &
    constitutive recovery on tested support; grid $r_{\rm slope}$; fitted rates &
    the operator exposes only its image of the learned derivatives; gauges and excitation limit recovery \\
    \bottomrule
  \end{tabular}
  \caption{The structure-to-behaviour map, with the dissociating evidence. Entries summarize the evaluated controls and their scope; they do not
  establish universal necessity of each architectural component.}
  \label{tab:map}
\end{table}

\subsection{Interpretation and limits of the map}
Locality and shared functions support composition in the evaluated models.
Supplied parameter maps prescribe responses outside the parameter training
support, provided the learned content remains accurate on the visited states.
Neither observation establishes that a generic learned model can never
acquire a useful extrapolation prior.

Architectural constraints restrict representability, but their guarantees
must be stated for the quantities they actually control. The scalar port
dissipates its quadratic diagnostic; the full learned energy and stochastic
stationary behaviour are separate empirical outcomes.
The limits also have different causes. Gauge ambiguity is a property of
the observation operator, while an encoder's residual rate error may reflect
estimation or optimization. A finite-horizon energy tail is not explained
solely by a correlation with initial energy.
The resulting practical lesson is to evaluate the supplied representation,
recovered quantities, numerical update, and downstream task separately.

\section{The stated exceptions, tested, and a suite of everyday physics}
\label{sec:zoo}

\begin{figure}[t]
  \centering
  \includegraphics[width=\linewidth]{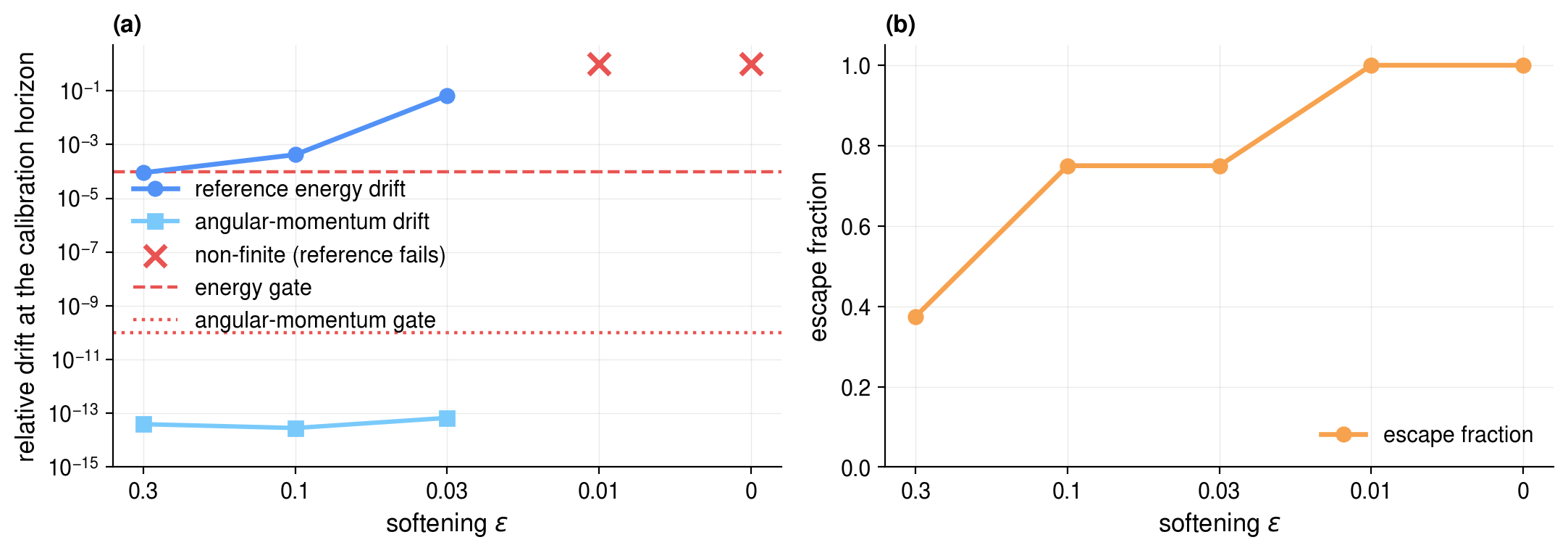}
  \caption{The bare-gravity softening ladder. (a) Relative energy drift (blue circles) and
    angular-momentum drift (light-blue squares) of the reference integrator against the
    softening $\varepsilon$, with the energy and angular-momentum gates (dashed and dotted)
    and red crosses where the reference is non-finite. (b) Fraction of episodes in which a
    body escapes, at each rung. The reading, that the ladder measures the reference's own
    numerical reliability, is in the text.}
  \label{iclr:fig-x1}
\end{figure}

\begin{figure}[t]
  \centering
  \includegraphics[width=\linewidth]{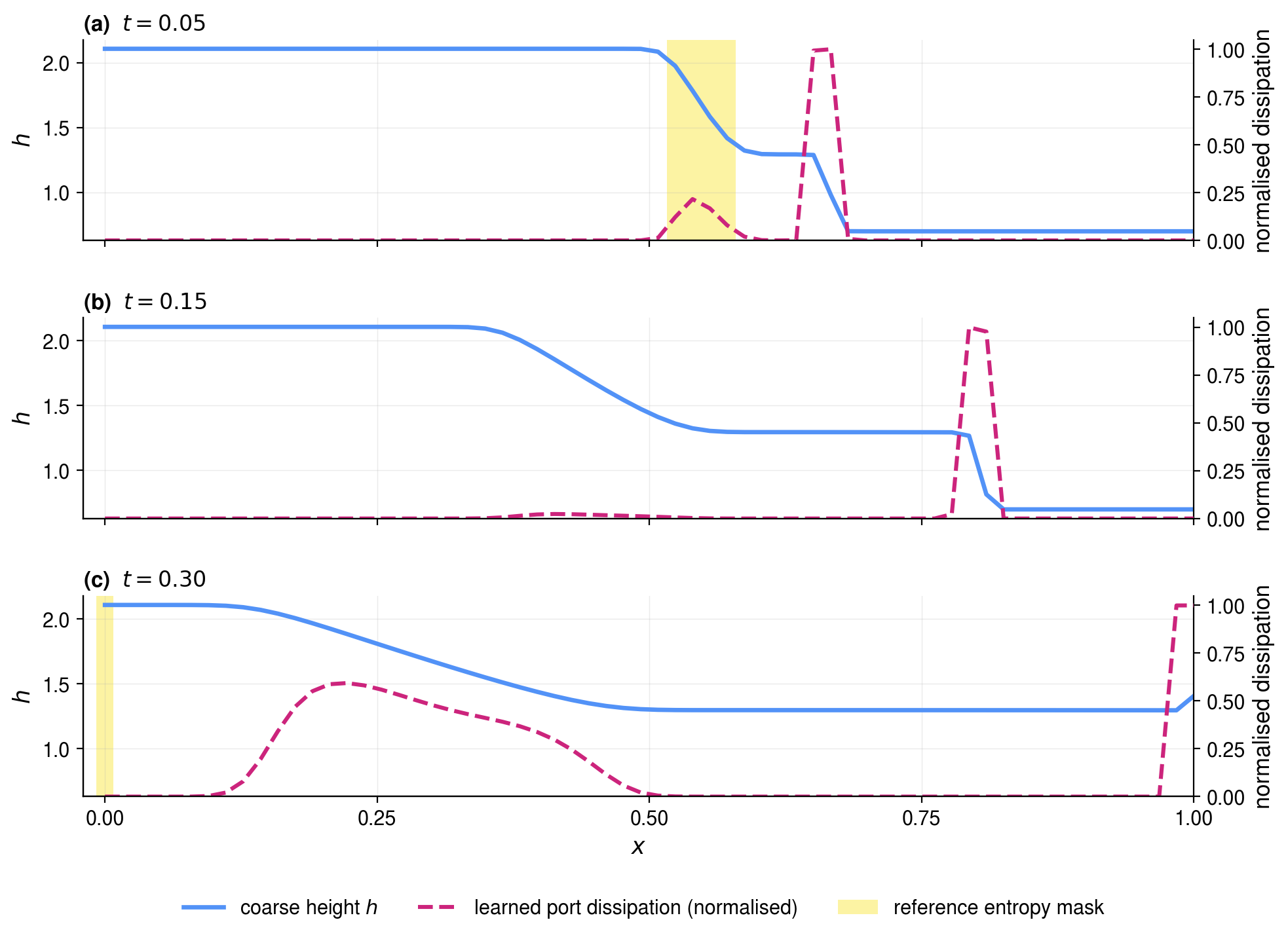}
  \caption{Shock dissipation as a learned, state-dependent, localised port on the
    dam-break fluid at three times (a)--(c). Blue (left axis): the coarse height profile
    $h$. Dashed magenta (right axis): the learned port dissipation, normalised per panel.
    Yellow: the reference entropy mask, which the model never sees. The text describes how
    the learned dissipation tracks the moving shock front.}
  \label{iclr:fig-x2}
\end{figure}

Every scope limit stated earlier in this paper is tested here: bare gravity was
\emph{excluded} by the confinement hypothesis of Proposition~\ref{prop:A}, contact by smoothness, shocks by the
smooth-regime fluid scope, and interacting baths by the earlier linear-bath disclaimer. This section pays
those notes. We test all four carried exceptions plus an eleven-system suite of everyday
classical physics, chosen to \emph{tile} a declared taxonomy (bracket $\times$ port
$\times$ smoothness $\times$ state $\times$ drive), not sampled, so the aggregate claim is
coverage of the tiling, and the taxonomy was committed before any system was built. Each system gets the full pass card of its type (conservative, dissipative,
or chaotic, the three pre-registered variants with the pre-registered constants), a certified baseline (LR
sweep, rollout-stabilizing noise), a discovery gate, and where declared, a refusal-test world.
Table~\ref{tab:zoo-cards} is the ledger; verdicts are printed by the harness, and the
pre-registration (with its numbered amendments, all committed \emph{before} the affected runs)
is in the record appendix.

\subsection{The pass-card ledger: eleven systems, twelve cards, three card types}
\textbf{Result.} All eleven common-life systems earn their pass card
(twelve cards with the pendulum's double-pendulum arm), eight clean and four with exactly one carried
boundary (the pre-registered pass rule), every boundary with a measured mechanism. Denominators used
below: the eleven suite systems plus the four exceptions are the fifteen registry worlds of the
pre-registered $\ge14/15$ selection gate; the selection matrix has sixteen system rows because the
double-pendulum card is scored as its own row, plus two out-of-menu controls. The
conservative cards hold their energy drift below the 5\% worst-case gate at $100\times$ horizons
($2.16\%$ pendulum; $1.4\times10^{-7}$ spinning top, whose Casimir is machine-exact
\emph{by construction}); the dissipative cards recover their rates and long-run states (RLC
  port rate $0.2999$ vs $0.300$; friction $\hat\mu N$ to $0.012\%$; motor power balance
$3.1\times10^{-17}$, an architectural identity, not a fit);
the chaotic cards are graded distributionally and pass climate 3/3 and counterfactual-climate
10/10 seeds while carrying the same boundary twice (the double pendulum, Lorenz-63): the structured model outperforms
the certified baseline at the sub-Lyapunov horizon by only $13\times$ (the double pendulum) or loses to it
($0.38\times$, Lorenz-63) while reproducing the climate correctly, so the split buys long-horizon
accuracy, not short-horizon accuracy.

\why The suite exists to answer whether the program's approach holds up on ordinary
physics, not only on the cavity it was developed on. The answer has a clear shape: structure helps precisely on
the axes the theory says it must (invariants, long-run states, rates, transfer) and is
weaker where it promises nothing (short-horizon trajectory error on chaotic systems).

\subsection{Refusal tests at scale: the refusal quota, and what a refusal instrument must be}
\textbf{Result.} Eleven refusal/guard entries pass across six
non-separability types (the harness tally in Table~\ref{tab:zoo-cards}; the pre-registered
quota was six across four): geometry (a pivot-geometry world hidden in the \emph{kinetic} term refuses the
fixed-kinetic class at $1.5\times10^{6}\times$ its own base floor), exponent (drag law,
$1.1\times10^{4}\times$), state-dependence (nonlinear resistance; conductivity
$\kappa(u)$ at $1.6\times10^{7}\times$; diffusivity $D(u)$), material law (static-vs-kinetic
friction), separability itself (an FPU quartic trained \emph{across} two stiffness settings,
absorbable at any single setting, refused by the factorization), and locality (a fourth-order
stiffness operator vs a nearest-neighbor class). The instrument was refined through
pre-registered amendments: refusal is \emph{within-class} (the class's floor on the refusal test
world over its floor on its own base world), and the learnability certificate is either a
learned right-structured oracle separating by $\ge10\times$ on the same world, or a hand-set
truth member of the oracle class at reference-competitive floor, because cross-class
common-base ratios conflate integrator truncation with class membership (measured: a learned
  Tao oracle read $2\times10^{5}\times$ against a common base while its hand-set truth floor sat
at $0.0031\times$). \textbf{Disclosure:} the refusal instrument was redesigned mid-block
for this measured reason; under the redesign \emph{two verdicts flipped} FAIL$\to$PASS (the pendulum's
geometry and the spring lattice's separability refusal tests, both via truth anchors), and the same redesign
initially \emph{failed} the heat rod's refusal test, forcing the bond-form oracle fix, so the redesign
moved verdicts in both directions. Both pre- and post-redesign results are in the record; the flips changed the
quota tally, not any pass-card count.

\why Three pitfalls we encountered and worked around, kept in the record because any
replication will meet them: (i) \emph{oracle classes must provably contain their refusal test
world} (a pointwise mobility $M(u)\nabla E$ does not contain a flux-form
  $\partial_x(\kappa(u)\partial_x u)$ world, since they differ at $O(\partial\kappa\,\partial u)$, and
  a two-cell pair stencil does not contain the biharmonic's boundary rows, while curvature bonds
contain it exactly); (ii) \emph{refusal-test worlds can be degenerate} (our first
  static-friction world had $\mu_s$ exactly at the drive amplitude, so every trajectory
eventually stuck forever and class and oracle floors collapsed together); and (iii) \emph{optimizer
adequacy is part of a floor instrument} (refusal test floors compare classes, so they run under
  a certified fitting procedure, $2\times$ epochs, cosine anneal, motivated by a measured three-decade
gap between a hand-set truth floor and a fixed-LR fit).

\subsection{The four exceptions}
\textbf{Contact (closed).} The guard-first protocol is the point: the smooth class must
\emph{detectably refuse} the bouncing world against its own soft-wall base before the augmented
class is credited, and it does, after an instructive false start (free fall is a
  non-stationary base world that saturates any bounded-input network; the amendment is in the
record). The complementarity class then recovers contact to machine precision:
$\hat g = 1-3\times10^{-14}$, $\hat e = 0.8-3\times10^{-12}$, per-bounce energy ratio
$\hat e^{2}$ exact, settling time exact, and the augmented class's own restitution is clamped
$\hat e<1$ \emph{architecturally}, the $\sigma\ge0$ lesson at contact, so the
energy-gaining violator lives in the world and is flagged by the meter, not representable by
the model.

\textbf{The interacting thermal bath.} The earlier linear-bath scope lifts: with soft-Coulomb
interactions \emph{on}, the drift is fitted and the noise is \emph{measured} (the measure-the-noise discipline), giving $\hat\gamma$, $\hat T=\hat s^{2}/2\hat\gamma$, and a two-sector
equipartition meter that stays quiet on the FDT bath and flags a sector-asymmetric violator at
$66\%$ mismatch. The magnetic-sector mechanism check survives interactions: a workless force
must be implemented worklessly (a discrete force-kick $B(p\times\hat z)$ heats at
$(hB)^{2}|p|^{2}$ per kick, so reference and model both carry Boris rotations), and
$\hat B$ is recovered with equipartition preserved. (Two thermostat-grade integrator upgrades,
  BAOAB and Boris, were forced by the system's \emph{own} calibration gates, which is what
calibration-first is for.)

\textbf{Bare gravity: a numerical-reliability ladder.}
The reference is evaluated at softening values
$\varepsilon\in\{0.3,0.1,0.03,0.01,0\}$ with an adaptive reference integrator.
Only $\varepsilon=0.3$ meets the stated energy gate, with drift
$9.0\times10^{-5}$ against $10^{-4}$. At $0.1$ the drift is
$4.2\times10^{-4}$, at $0.03$ it is $6.7\%$, and at $0.01$
the numerical integration becomes non-finite. The original escape diagnostic
reports $100\%$ there; this cannot be interpreted as verified physical escape.
Reported angular-momentum residuals near $10^{-14}$ do not restore energy
accuracy or validate a non-finite trajectory.

At the accepted reference setting, the fitted pair function has
$r^2=0.9999999$, the reported baseline contrast is $1.3\times10^6$,
and the $G\to2G$ error is $0.007$ of the reported oracle error.
The worst-case energy deviation is nevertheless $813\%$.
These summaries describe a limited numerical test, not a transition in
physical confinement. Softening removes the collision singularity but
does not make attractive gravity globally coercive.
Symplecticity of an adaptive update requires a specified construction;
it is not established by the integrator label in this record.

\textbf{Shocks and the localized port.} The hardest case is shock dissipation, which
is not a constant rate but a \emph{state-dependent, spatially localized} one. Against a
certified Rusanov reference (mass $4\times10^{-16}$; fine-grid energy strictly decreasing;
Stoker front to $0.8\%$), coarse-grained dam breaks are fit by one flux family whose three
variants differ only in the port: none, constant, or $\lambda_\theta(\text{local features})
\ge0$. The pre-registered localization score (dissipation mass inside the true entropy mask
over the mask's width fraction) measures whether the learned port \emph{finds the front};
the pre-registration commits to either outcome, score $\ge5$ or the representational limit
of coarse local flux stencils on shocks, characterized. Table~\ref{tab:zoo-cards} carries the
verdict.

\subsection{One menu, fifteen worlds, two controls}
\textbf{Result.} The selection matrix (Table~\ref{tab:zoo-matrix}) runs the
\emph{same} menu ($\{H,\ H{+}\mathrm{port},\ H{+}\mathrm{port}{+}\mathrm{drive},\
\mathrm{port\ only},\ (\pm\mathrm{drive})\}$ plus each system's declared bracket variant)
across every world with the frozen selector (min-BIC over \emph{structural} channels, L1 on
  port/drive amplitudes, the clamp-CI tie-break, and the budget rule that refuses to
trust a conservative call whose port sibling has not drained). Both out-of-menu controls fire:
the anti-dissipative world (architecturally outside every member, since $\sigma=\mathrm{raw}^2$
cannot anti-damp) and the delay oscillator (non-Markovian) both return \emph{none-of-these}
via the reference-free alarm. The pre-registered $\ge14/15$ gate is \textbf{missed}: 13/16
rows are correct as reported (12/16 under the strict ``accepted only if'' reading of the
budget rule, which drops bare gravity's self-flagged row, both numbers printed by the harness), and
the miss is recorded in the integrity ledger.

\why The matrix's errors are as informative as its hits, and all three are structural
failures of the \emph{alarm's} scope, not the selector's: on the contact world the smooth
member outright \emph{outperforms} the flexible 4-step-history reference ($0.030$ vs $0.404$,
misfit concentrating in rare event steps, where flexibility cannot collect); on friction the
reference is better but only by $2.2\times$, far under the $10\times$ inversion the alarm needs;
and on the coarse shock world the reference is itself incompetent at this budget. A
floor-ratio alarm requires dense misfit \emph{and} a competent reference; impulsive and
shocked worlds offer neither. Rare-event misspecification belongs to the guard-first protocol,
which fired for all three systems at card level ($3.5\times10^{4}\times$ on friction).
Writing that division of labor down (alarms for dense misfit, matched-structure guards for
events, budget flags for fake-conservative calls) is the section's methodological
yield.

The report-only zoo matrix is superseded as a headline by one canonical selector
evaluation, which the sealed development-then-confirmation protocol
(\S\ref{iclr:program}) freezes before any confirmation world is drawn. The frozen
selector is then scored across four separately implemented generator families,
with the per-family four-way accuracy ($0.84$--$0.89$), true-candidate coverage
($0.727$--$0.900$), selective error among one-candidate calls ($6.8\%$--$15.6\%$),
and out-of-family non-rejection ($2\%$--$24\%$) reported in
Table~\ref{tab:selector-full}. This is cross-implementation empirical validation
with visible selective risk. Table~\ref{tab:selector-full} reports the per-family measures.
Only the true-candidate coverage is the quantity Proposition~\ref{prop:fourway} bounds
(marginally, at $1-\alpha$); the error among one-candidate decisions and out-of-family
non-rejection are measured endpoints that the proposition does not predict, and an empty
prediction set is not a certificate that the truth lies outside the menu, since a true
candidate is excluded with probability up to $\alpha$.

\begin{table}[h]
  \centering
  \caption{Frozen selector confirmation by generator family.}
  \label{tab:selector-full}
  \begin{tabular}{@{}lcccc@{}}
    \toprule
    Measure & Continuous & Discrete & MuJoCo & Adversarial \\
    \midrule
    Four-way accuracy & 0.860 & 0.890 & 0.870 & 0.837 \\
    One-candidate rate & 0.533 & 0.480 & 0.487 & 0.390 \\
    Error among one-candidate decisions & 0.156 & 0.090 & 0.116 & 0.068 \\
    True-candidate coverage & 0.900 & 0.873 & 0.860 & 0.727 \\
    Ambiguity recall & 0.700 & 0.880 & 0.760 & 0.860 \\
    Fitting-failure recall & 1.000 & 1.000 & 1.000 & 1.000 \\
    Out-of-family non-rejection & 0.240 & 0.160 & 0.120 & 0.020 \\
    \bottomrule
  \end{tabular}
\end{table}

\begin{table}[t]
  \centering
  \footnotesize
  \caption{The pass-card ledger. \textsuperscript{b} marks the single boundary the
    pre-registered pass rule allows a system to carry, each with a measured mechanism
    (Appendix~\ref{app:record}). Competence is the rollout error relative to the certified
    history-based baseline; the refusal column is the within-class refusal ratio.}
  \label{tab:zoo-cards}
  \setlength{\tabcolsep}{5pt}
  \renewcommand{\arraystretch}{1.25}
  \begin{tabular}{@{}p{0.3\linewidth}p{0.38\linewidth}p{0.12\linewidth}p{0.215\linewidth}@{}}
      \toprule
      system & verdict & competence ratio & refusal/guard \\
      \midrule
      Pendulum & \textbf{PASS} & 4.23e+04$\times$ & refuse 1527337.4x \checkmark \\
      Double pendulum & \textbf{PASS}\,\textsuperscript{b} (competence vs subLyapunov) & 13.4$\times$ &  \\
      Spring lattice & \textbf{PASS} & 1.78e+04$\times$ & refuse 244.0x \checkmark \\
      RLC circuit & \textbf{PASS}\,\textsuperscript{b} (counterfactual-vs-oracle) & 9.53e+04$\times$ & class 27317.2x \checkmark \\
      Heat rod & \textbf{PASS} & 1.9e+07$\times$ & refuse 16246015.1x \checkmark \\
      Advection--diffusion & \textbf{PASS} & 7.06e+07$\times$ & refuse 2096997.6x \checkmark \\
      Plucked string & \textbf{PASS} & 2.79e+04$\times$ & refuse 3333.8x \checkmark \\
      Drag projectile & \textbf{PASS} & 2.23e+03$\times$ & class 10862.8x \checkmark \\
      Spinning top & \textbf{PASS} & 5.04e+11$\times$ &  \\
      Lorenz-63 & \textbf{PASS}\,\textsuperscript{b} (competence vs subLyapunov) & 0.376$\times$ &  \\
      Dry friction & \textbf{PASS}\,\textsuperscript{b} (counterfactual vs oracle) & 1.47e+06$\times$ & 34683.2x \checkmark \\
      DC motor & \textbf{PASS} & 4.17e+05$\times$ & class 56576.9x \checkmark \\
      Interacting thermal bath & BOUNDARY (stationary-stability, equipartition-meter) &  &  \\
      Bare gravity (ladder) & \textbf{PASS}\,\textsuperscript{b} (worst-case-stability) & 1.31e+06$\times$ &  \\
      Bouncing contact & \textbf{PASS} & 7.01e+14$\times$ & refuse 3512.4x \checkmark \\
      Shallow-water dam break & BOUNDARY (constant-port-guard, no-port-guard, dissipation-rate, competence-ratio, discrete-gravity, counterfactual-vs-oracle) & 31$\times$ & refuse 50.2x $\to$ carried \\
      \bottomrule
  \end{tabular}
\end{table}

\begin{table}[t]
  \centering
  \caption{The selection matrix: one menu, fifteen worlds, two controls
    (the frozen selector = min-BIC over structural channels + L1 + clamp-CI tie-break;
    alarm = reference-free misspecification inversion vs a 4-step-history MLP). Rows print the
  selector's output; errors are kept and mechanised, never relabeled.}
  \label{tab:zoo-matrix}
  \footnotesize
  \setlength{\tabcolsep}{4pt}
  \begin{tabular}{lllll}
    \toprule
    world & truth row & selected & alarm & correct \\
    \midrule
    Pendulum & H & H & -- & \checkmark \\
    Double pendulum & H/H(nonsep) & H(nonsep) & -- & \checkmark \\
    Spring lattice & H & H & -- & \checkmark \\
    RLC circuit & H+port+drive & H+port+drive & -- & \checkmark \\
    Heat rod & port-only & port-only & -- & \checkmark \\
    Advection--diffusion & bracket(skew+port+drive) & bracket(skew+port+drive) & -- & \checkmark \\
    Plucked string & H+port & H+port & -- & \checkmark \\
    Drag projectile & H+port & H+port & -- & \checkmark \\
    Spinning top & bracket(LP) & bracket(LP) & -- & \checkmark \\
    Lorenz-63 & bracket(metriplectic) & bracket(metriplectic) & -- & \checkmark \\
    Dry friction & none-of-these & H+port & -- & $\times$ \\
    DC motor & bracket(portH) & bracket(portH) & -- & \checkmark \\
    Bare gravity & H & H & -- & \checkmark \\
    Shallow-water dam break & none-of-these & port-only & -- & $\times$ \\
    Bouncing contact & none-of-these & H & -- & $\times$ \\
    Interacting thermal bath & H+port & H+port & -- & \checkmark \\
    Out-of-menu control A & none-of-these & none-of-these & fired & \checkmark \\
    Out-of-menu control B & none-of-these & none-of-these & fired & \checkmark \\
    \midrule
    \multicolumn{5}{l}{systems: 13/16 correct; controls: 2/2 fired} \\
    \bottomrule
  \end{tabular}
\end{table}

\section{Discussion: what structure buys, what it costs, where it ends}
\label{sec:discussion}

\subsection{What the experiments support}
Within the supplied representations, the models recover useful constitutive
content and fitted rates on the tested support. Quadratic-channel
contraction is a mathematical property; complete-model temperature,
energy fidelity, and counterfactual prediction remain empirical tests.
The fluid coefficient is reported per unit supplied gravity, not as
independently discovered gravity.

\subsection{What remains conditional}
A wrong prior can fit observed data and fail after intervention.
An empirical fit-gap alarm can also fail or react to optimization error;
the confirmation selector has measured false acceptance.
The nonnegative port excludes the opposite sign of that channel,
but does not exclude driven or stochastic energy increases.
The achieved precision of rate estimators is protocol-dependent, not
an information lower bound. Functional recovery is limited by operator
nullspaces, excitation, and state support. These restrictions motivate
the supplied-versus-learned accounting rather than universal discovery claims.

\subsection{The boundary map}
Stated once, together, as results: (1) the cavity's Stability, worst-case tail, bounded, trend-free,
characterized to its correlate (initial kinetic energy), both candidate levers refuted, $5\%$
bound carried; the grid closes it. (2) State-support envelopes for extrapolation fidelity:
grid magnitude/flip rollouts degrade out-of-band (grid-specific; the cavity holds $|q|=2$ to
horizon 1000 and added coverage changes nothing). (3) Perception: single-frame momentum
invisibility, located, with the minimal multi-frame fix quantified; from-pixels binding and
closed-loop pixel rollouts open. (4) Stochastic thermodynamics beyond one $(T,\sigma)$ and a
charge-0 linear bath: open. (5) The universal scaling question: our scaling-claim evidence is an
existence result over four families, five rungs, and a certified protocol; whether some
training procedure manufactures stable black-box competence at scale, and whether its implicit
bias then locks in or flips, is open, pre-registered, and twice answered
``inconclusive'' by our own escalation rules. (6) Regimes: 2D/shock fluids, contact, and
singular attraction are future physics for this program.

\subsection{Threats to validity}
We audit ourselves the way we expect referees to. \emph{Shared-data seeds}: all headline claims
were replicated on fresh independent data draws; remaining intervals measure optimization
variance and say so. \emph{Metric power}: every counterfactual verdict is conditioned on
trained-setting reliability (the competence-first lesson, applied as a gate). \emph{Survivor bias in
aggregates}: diverged seeds are counted, not dropped, and worst cases are reported beside
means. \emph{Undertrained giants}: the scaling study's competence rungs carry learning-rate
sweeps, plateau evidence with published curves, unlimited-data and longer-horizon retries, and
disclosed compute caps, and where competence still failed we labeled the verdict inconclusive
rather than victorious. \emph{Experimenter degrees of freedom}: thresholds pre-registered,
verdict text generated directly from the pre-registered constants, and the full ledger of corrections (including
  two overturned verdicts, one retired metric, one superseded statistic, and one refuted
mechanism attribution) published in Appendix~\ref{app:integrity}.

\subsection{Outlook}
Three rungs order the future work. \emph{Up the discovery ladder}: from one learned bit (does
this world dissipate?) toward selecting which symmetries and which bracket structure apply, since
the fluid result suggests the program's form survives regime changes, so the selection problem
is the bottleneck. \emph{Down the perception stack}: the momentum boundary is located here and its
minimal fix quantified, since a single frame is velocity-blind and a short window recovers
momentum by differencing (\S\ref{app:perception}); grounding physical state from raw pixels,
including joint learned detection and binding, remains the open direction. \emph{Across the
scaling question}: the experiment our own verdicts demand, a black-box family engineered for
\emph{stable} competence at multiple scales, then the same flip, would either finally earn the
universal leg or discover a generic bias that substitutes for one clause of the criterion;
both outcomes are progress, and the measurement program published here is built to decide
between them.

\subsection{Three distinct boundaries}
First, in-support observations do not identify off-support truth over an
unrestricted function class. Supplied or learned restrictions can prescribe
a continuation; the tested scaling results do not rule out all such learning.
Second, failure to ground the tested frozen-token pipelines is an empirical
limitation, not an impossibility for token-based representations.
Third, a static renderer depending only on position maps different
velocities at that position to the same image. It cannot identify arbitrary
velocities without additional information. Statistical image predictors may
still exploit restricted support or correlations, while multiple frames
supply motion information directly. These are different mathematical and
empirical limitations and should not be combined into three universal
impossibility claims.

\section{Verdict discipline and the integrity ledger}
\label{app:integrity}

\subsection{The rules}
(1) Metrics and thresholds are pre-registered before results exist; printed verdict text is
generated directly from the pre-registered constants rather than transcribed by hand, so every ``PASS'' in a results table is the
output of a function whose thresholds are the registered ones. (2) Every diagnostic ships with a
positive control that must trip; a diagnostic that has never flagged anything is treated as
untested, not as good news. (3) Near-binary outcomes use seed-level exact binomial statistics;
episodes within a seed are correlated and are never pooled as independent. (4) Counterfactual
verdicts require metric power: the model under test must get the \emph{trained} setting right on
held-out data, else its counterfactual score is uninformative and excluded. (5) Headline
verdicts require an escalation (ten seeds, fresh data draws) before any thesis-level guard
moves. (6) A FAIL is reported as a FAIL; a refuted attribution is corrected at its source with a
dated marker, and every claim that a correction was applied program-wide is accompanied by the search that verifies it.

\subsection{The ledger}
Chronological; each entry names what changed, why, and where the full record lives.

\textbf{Two integrator defects (cavity build).} First-order operator splitting (secular
$O(\Delta t)$ energy error) and, introduced by its fix, a tensor-view aliasing bug ($30\times$
worse). Both caught by pre-registered conservation tests before any training; fixed
(palindromic composition; snapshotting). Lesson exported: the halve-$\Delta t$ scaling test
distinguishes integrator orders instantly.

\textbf{A self-force in the PIC build (grid).} A stationary charge felt its own field;
root-caused in two layers (B-spline order pairing with charge-conserving deposition; lattice
self-energy gradient, smoothed $40\times$). Caught before training; without this, every
second-law number on the grid would have carried a phantom dissipation channel.

\textbf{A statistic superseded (grid flip).} Per-episode circulation means with $t$
intervals were retired when a broken control scored a perfect 32/32 episode block, because
episodes within a seed are one correlated draw. All headline flips were re-scored at the seed level with
exact binomials and got \emph{stronger} (cavity 10/10 with controls below chance).

\textbf{A metric retired by its own control (magnitude counterfactual).} The
pre-registered gyration-frequency criterion failed on the oracle itself (4/5 seeds; chaotic
phase makes the estimator ill-posed). Retired as a failed pre-registration \emph{of a metric};
the verdict rests on the surviving nMSE arm.

\textbf{An estimator bug corrected (functional discovery).} A pointwise relative-error
probe diverges near zero field and mis-scored the discovery as a failure; the pre-registered
slope-regression estimator is the instrument of record ($r_{\rm slope}=0.959\pm0.017$, the $\pm$
a $95\%$ Student-$t$ interval over three grid seed estimates $0.951$, $0.963$, $0.964$;
a computed interval, not the observed min--max range). Lesson: measure slopes with
regressions, never pointwise ratios near zeros.

\textbf{A control upgraded (unified model).} The mixed-data baseline was re-run
\emph{informed} (given the condition index, drive/field features, and the auxiliary loss)
so that its failure isolates structure rather than information asymmetry. It still fails.

\textbf{An attribution refuted and propagated (stability tail).} The ``close-encounter''
mechanism carried since the early cavity runs was disproved by direct correlation test (wrong
sign; the correlate is initial kinetic energy) and the coverage lever refuted ($67\times$
oversampling, no effect). Corrected at every occurrence with dated markers; the grep is in the
record.

\textbf{A gate failure kept against temptation (fresh-data $\sigma$).} The strict
per-value $\sigma_{\rm cons}<10^{-4}$ gate fails on 2/9 fresh fits (max
$1.07\times10^{-4}$). An early draft relabeled the recorded ``NOT confirmed'' outcome as a bug; the
correction was reversed (the boolean was right, the prose was wrong) and the runner's prose
is now generated from the pre-registered constant. The order-of-magnitude identification stands; the
strict gate is carried as a boundary.

\textbf{Scaling verdict \#1 overturned (FNO ladder).} ``Extrapolation at chance across
$100\times$ scale'' was retracted by our own review: no rung was competent, the fit-improves
prediction had failed under the fixed-epoch protocol, and a diverged rollout's flip is a coin
toss. Relabeled \directional{}; superseded by the competence-first design.

\textbf{Scaling verdict \#2 overturned (competence-first scaling study).} A 3-seed EARNED
(two competent rungs below chance) did not survive the pre-registered 10-seed fresh-data
escalation: one rung's trained-sign reliability proved seed-fragile (mean $0.867<0.90$) and
dropped out; the recorded verdict is \textsc{inconclusive-single-anchor}. The thesis guard stayed
up; the main text's scaling claim (\S\ref{iclr:scaling}) is the one the surviving evidence
supports. Every stale table was marked superseded; the pipeline that produced each number is
labeled.

\textbf{Flagship selection verdict softened by its own escalation (flagship cavity,
$3/3\to2/3$).} The three-seed flagship lift read DISCOVERED ($3/3$, every runner-up $\ge278$ BIC),
its smallest margin being exactly the conservative cell whose near-degeneracy we had already
characterized. The five-seed repeat flipped that cell to $H{+}$port by $29$ BIC with
$\hat\sigma=0.0126$ (a $1\%$ wMSE difference swung by one seed); the
recorded verdict is $2/3$ (BOUNDARY). The surviving claim is stated in \S\ref{sec:discovery}:
dissipative/driven selection is decisive and seed-stable ($\Delta$BIC $\ge1054$) and the rates are
recovered; the conservative tie-break is fit-degenerate but \emph{not} horizon-degenerate, since
$\hat\sigma=0.0126$ sheds $6\%$ of the field energy over the $500$-step window but $99.8\%$
over the $50$k-step stability horizon ($E\propto e^{-2\hat\sigma t}$, palindromic port), an
arithmetic that corrected our first ``same physics'' framing; the corrected
reading (\S\ref{sec:discovery}) makes removing this residual necessary. Every stale $3/3$/$\ge278$ site was corrected at source, including the figure, whose
annotations are now generated from the verdict JSON.

\textbf{The response-level $\omega$ statistic is a POST HOC promotion, disclosed.} The
odd/even decomposition was pre-registered as a \emph{mechanism probe} to explain the
high-flip anchor seeds, not as a scaling-claim verdict statistic. Its outcome, a clean, seed-robust
separation between the structured model and every trained black box, led us to promote $\omega$
to the mechanism-level half of the scaling-claim evidence. We flag the promotion as post hoc: the
\emph{claim} it supports (black boxes do not acquire the charge-flip symmetry) is exactly the
pre-registered scaling hypothesis, and $\omega$ is an additional measurement of it, not a moved
goalpost; the accompanying random-init control (also post hoc) is what fixes its interpretation
(0.5 is the parity floor, not ``half the symmetry'').

\subsection{Why we publish this}
The ledger is calibration. Each entry is an instance where the
program's rules produced a less flattering number than the first analysis offered, which is
precisely the property a reader needs a measurement program to have before trusting the
numbers that survived. 

\section{Propositions}
\label{app:theory}

The two propositions below carry the boundedness and transfer claims; their scope
clauses are stated inline, and the port and selector propositions are in the main
text with proofs in App.~\ref{sec:coverage}.

\subsection{Energy-consistent coupling and boundedness}
\begin{proposition}[Energy-consistent coupling and boundedness]
\label{prop:A}
\emph{Setting.} The imposed flow class: state $=$ particles $\oplus$ field; dynamics generated
by
\begin{equation}
  H = T(p) + U_\phi(\text{field}) + q\,C_\psi(\text{particle},\text{field})
\end{equation}
through a fixed
symplectic (or symplectic-plus-port) integrator, with $U_\phi$ confining (bounded below,
radially unbounded on the field sector) and the coupling \emph{factored}: one shared learned
coefficient serves both the force on particles and the current into the field.

\emph{Claim.} (i) The exact \emph{conservative} flow (no port, no drive) conserves $H$. If in
addition the total $H$ is coercive, its level sets are compact and rollout energy is bounded for
all time. Coercivity of $H$ is an assumption here, not a consequence of $U_\phi$ confining and
$T$ bounded below: with an unbounded indefinite coupling those do not suffice, since
\begin{equation*}
  T=p^{2},\qquad U=f^{2},\qquad C=-2pf\qquad\Longrightarrow\qquad H=(p-f)^{2},
\end{equation*}
whose sublevel sets are unbounded along $p=f$. We therefore require a
coercivity-preserving bound on the interaction $q\,C_\psi$, or coercivity of $H$
directly. For exact flows, local well-posedness and
regularity on the relevant compact sublevels are also required.
With a drive or a port not aligned to $H$, finite-horizon state bounds
require control of all positive energy contributions, including port work, and the discrete symplectic integrator bounds oscillation of a
shadow energy only under the usual step-size, regularity, and domain conditions (constants in the
formal version). (ii) If the coupling is \emph{not} factored (force
and back-reaction learned independently) there exist arbitrarily small function-approximation
mismatches whose closed-loop work is nonzero per cycle, that is, a phantom power source; no
confinement of $U_\phi$ bounds the resulting physical energy injection.
\end{proposition}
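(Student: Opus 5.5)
I will handle the two claims separately, since (i) is a standard conservation-plus-compactness argument and (ii) is an existence construction.

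\textbf{Part (i).} The plan is to write the conservative flow as $\dot z=J\nabla H(z)$ with $J$ the fixed antisymmetric (canonical, or Poisson on the field sector) operator. Antisymmetry gives $\tfrac{d}{dt}H(z(t))=\nabla H^\top J\nabla H=0$ for any $C^1$ solution, so $H$ is constant along trajectories. Under the assumed coercivity of $H$, the sublevel set $\{H\le H(z_0)\}$ is closed, since $H$ is continuous, and bounded. It is therefore compact. Local well-posedness on this compact set, via Lipschitz continuity of $J\nabla H$ there, plus confinement of the trajectory to it, gives global existence by the standard continuation argument: a maximal solution that stays in a compact set cannot blow up. Hence the state, and in particular every energy term, is bounded for all time.

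Next I verify the counterexample. With $T=p^2$, $U=f^2$ and $C=-2pf$, we get $H=(p-f)^2$, which vanishes on the unbounded line $p=f$. This shows that confinement of $U_\phi$ and a lower bound on $T$ do not by themselves yield coercivity, so coercivity must be assumed. A sufficient condition worth recording is $|q\,C_\psi|\le \tfrac12(T+U_\phi)+c$, which keeps $H\ge \tfrac12(T+U_\phi)-c$ coercive.

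For the driven/port and discrete statements I only need to state conditions, not prove them. With a port and a drive, $\dot H=-(\text{port work})+\nabla H^\top B u$, so a finite-horizon bound follows from Gr\"onwall's inequality once all positive contributions are integrable. For the symplectic integrator, I will cite backward error analysis (Hairer et al.): under analyticity and a step-size restriction, a modified energy $\tilde H=H+O(\Delta t^r)$ is conserved up to exponentially small error over exponentially long times. Provided the orbit stays in the domain where the expansion is valid, coercivity of $H$ then transfers to $\tilde H$ on bounded sets. This step is cited rather than reproved.

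\textbf{Part (ii).} Here I construct an explicit mismatch. With the factored coupling, the force on the particles is $F=-q\nabla_x C$ and the field back-reaction is $q\,J_f\nabla_f C$. These derive from one function, so the power exchanged cancels identically. Suppose instead the field current is learned as $q\,J_f(\nabla_f C+\epsilon\,w)$, where $w$ is a small smooth non-gradient perturbation, for instance $w$ such that the 1-form pairing it with particle velocity has nonzero curl in the $(x,f)$ plane. Then
\begin{equation*}
  \dot H_{\rm true}=\epsilon\,q\,\langle\nabla_f U,J_f w\rangle .
\end{equation*}
I would choose $w$ so that this integrand has nonzero average over a periodic orbit of the unperturbed dynamics. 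A rotational field in a two-dimensional slice gives, by Stokes' theorem, a work integral $\oint w\cdot d\ell=\epsilon\cdot\text{area}\ne0$, with the orbit itself fixed beforehand. The perturbation is $O(\epsilon)$ in sup norm, so it is an arbitrarily small approximation error, yet its per-cycle work is nonzero. Because the injection enters the true energy and not the learned $U_\phi$, confining $U_\phi$ places no constraint on it.

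\textbf{Main obstacle.} The hard step is making (ii) rigorous as a long-horizon statement. After the perturbation, the orbit itself drifts, so the per-cycle work is computed along a changing path. I would first settle for a first-order averaging statement, showing that the cumulative work is $\epsilon N W_0+O(\epsilon^2 N)$ over $N$ cycles with $W_0\ne0$, rather than a global divergence claim.
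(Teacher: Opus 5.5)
Your argument follows the paper's proof. Part (i) is conservation of $H$ under the antisymmetric flow plus compactness of the sublevel sets of a coercive continuous $H$; your continuation step only makes the regularity hypothesis explicit. Part (ii) rests on the fact that the energy injected per cycle is the circulation, along a periodic orbit, of the non-exact part of the force/current mismatch. The paper allows independent errors $\delta_1,\delta_2$ on both ends and asserts that nonzero circulation is generic. Your current-only perturbation ($\delta_1=0$) is a legitimate special case, since the claim is existential, and it gives an explicit witness. One side remark in (i) needs an extra hypothesis: $|qC_\psi|\le\tfrac12(T+U_\phi)+c$ yields coercivity only if $T+U_\phi$ is itself coercive on the whole state (e.g.\ $T$ radially unbounded in $p$, positions on the compact ring). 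A lower bound on $T$ does not give this.

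Two steps of (ii) need correcting.

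\textbf{The energy rate.} The particle terms cancel, but the coupling still enters $\nabla_f H$, so
\begin{equation*}
  \dot H_{\rm true}=\epsilon q\,\langle\nabla_f H,J_f w\rangle
  =\epsilon q\,\langle\nabla_f U+q\nabla_f C,\,J_f w\rangle ,
\end{equation*}
not $\epsilon q\langle\nabla_f U,J_f w\rangle$. This is not cosmetic. Your Stokes step relies on the identity $\langle\nabla_f H,J_f w\rangle=-\langle J_f\nabla_f H,w\rangle=-\langle\dot f,w\rangle$ along the unperturbed orbit. That identity is what makes the first-cycle work equal to $-\epsilon q\oint_{\gamma_f}w\cdot df+O(\epsilon^2)$ over the field projection $\gamma_f$ of the orbit. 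With $\nabla_f U$ alone the identification fails whenever $\nabla_f C\ne0$.

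Relatedly, with $\delta_1=0$ the mismatch is paired with the field velocity, not the particle velocity. The correct condition is the paper's: the joint one-form must be non-exact on $(x,f)$ space. In particular, even an $x$-dependent $f$-gradient $w=\nabla_f\Phi(x,f)$ injects energy.

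\textbf{The averaging remainder.} Your bound is too optimistic. Because the trajectory leaves the reference orbit (at least through an $O(\epsilon t)$ energy drift), the cumulative work after $N$ cycles is generically $\epsilon NW_0+O(\epsilon^2N^2)$, not $\epsilon NW_0+O(\epsilon^2N)$. The decoupled single-mode case already shows the quadratic term. Take $U=\tfrac12(A^2+E^2)$ with canonical $J_f$ and $w=(-E,A)$. Then $J_f w=(A,E)$ is an exact anti-damping term, and the mode energy grows as $e^{2\epsilon q t}$.

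Linear growth in $N$ is therefore established only for $N\ll1/\epsilon$. To go further, you must also show that the averaged energy flux keeps its sign over the energy range the drift reaches, as it does in that example. The paper's sketch asserts linear growth without addressing the drift at all. Flagging it is therefore a real improvement, provided the remainder is stated correctly.
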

\begin{proof}[Proof sketch]
(i) is conservation of $H$ by the exact flow together with compactness of the
sublevel sets of a coercive continuous function. (ii) Let the force be
$F=-\partial_x(qC)+\delta_1$ and the current $J=\partial_\phi(qC)+\delta_2$ with
independent learned errors $\delta_1,\delta_2$; the power exchanged at the interface
is $\dot x\cdot F+\dot\phi\cdot J$, whose non-conservative part
$\dot x\cdot\delta_1+\dot\phi\cdot\delta_2$ has nonzero circulation along a
periodic orbit whenever $\delta_1$ and $\delta_2$ are not derivatives of one
function, which holds for generic independent approximations of arbitrarily small
norm; the integrated work then grows linearly in the number of cycles regardless of
the confinement of $U_\phi$.
\end{proof}

\emph{Measured instantiations.} (i): the confinement dissociation (cavity and fluid).
(ii): the $9\times$ coupling power imbalance measured at the interface and the
non-factored control's divergence under identical confinement on cavity and grid. The
fluid is the joint instantiation of both propositions rather than a third divergence: its non-separable
control keeps the energy-consistent flux coupling (so Proposition~\ref{prop:A} predicts boundedness;
measured: $5/5$, $9.2\times10^{-5}$) while black-boxing only the $g$-entry (the reported implementation fails the counterfactual test by $10^{10}\times$).

\emph{Scope.} Excludes energies unbounded below (gravitational/bare-Coulomb attraction),
physics' own exception, inherited; vacuum-field stability is the claim.

\subsection{Supplied-map transfer and unrestricted extrapolation}
\begin{proposition}[Supplied-map transfer and unrestricted extrapolation]
\label{prop:B}
Let the true and learned generators be
\begin{equation}
  G_*(x;\theta)=\theta\,g_*(x)+h_*(x),\qquad
  G_\vartheta(x;\theta)=\theta\,g_\vartheta(x)+h_\vartheta(x).
\end{equation}
If both learned components equal their true counterparts on $\mathcal R$,
then $G_\vartheta(x;\theta')=G_*(x;\theta')$ there for every admissible
$\theta'$. This follows by substitution.
With the same initial condition and uniqueness of the differential equations,
the continuous trajectories agree while they remain in $\mathcal R$.
Numerical predictions retain their separate integration error.
Separately, an unrestricted class of functions of $(x,\theta)$ contains
functions agreeing on the observed support and differing outside it, so
in-support data alone do not determine off-support dynamics over that class.
\end{proposition}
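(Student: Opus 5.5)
The statement has three parts, and I would prove them in order: pointwise agreement of the generators, agreement of trajectories, and non-identifiability over an unrestricted class.

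\textbf{Generator agreement.} Fix $x\in\mathcal R$ and an admissible $\theta'$. By hypothesis $g_\vartheta(x)=g_*(x)$ and $h_\vartheta(x)=h_*(x)$. Substituting into the factored form gives
\begin{equation*}
G_\vartheta(x;\theta')=\theta' g_*(x)+h_*(x)=G_*(x;\theta').
\end{equation*}
The only point to stress is that $\theta'$ enters through the supplied map and not through the learned content. No training at $\theta'$ is therefore needed. The same remark covers any known nonlinear map $\Phi(\theta)$ in place of $\theta$.

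\textbf{Trajectory agreement.} Let $x_*(t)$ and $x_\vartheta(t)$ solve $\dot x=G_*(x;\theta')$ and $\dot x=G_\vartheta(x;\theta')$ from the same $x_0\in\mathcal R$.
\begin{itemize}
\item Define $\tau=\sup\{t: x_*(s)\in\mathcal R \text{ for all } s\le t\}$.
\item On $[0,\tau)$, the curve $x_*$ also satisfies the learned equation, because the two vector fields coincide along it.
\item Uniqueness, for instance from local Lipschitz continuity of $G_\vartheta$, then gives $x_\vartheta=x_*$ on that interval.
\end{itemize}
This is the step I expect to be the main obstacle. It requires $\mathcal R$ to be such that the curve is actually governed by $\mathcal R$, or else the Lipschitz condition must hold on a neighbourhood of $\mathcal R$. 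I would state this regularity as an explicit assumption and not try to derive it.

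Numerical integrators applied to identical vector fields on $\mathcal R$ give identical discrete maps there. Any residual difference is therefore the integration error against the exact flow, which the statement explicitly leaves separate.

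\textbf{Unrestricted-class limitation.} Let $S$ be the observed support in $(x,\theta)$ space, assume it closed, and choose a point $(x_1,\theta_1)\notin S$. Take a smooth bump $\beta$ that vanishes on $S$ and is nonzero at $(x_1,\theta_1)$. Then $G_*$ and $G_*+\beta$ are both members of the unrestricted class. They produce identical observations on $S$ but different dynamics at $(x_1,\theta_1)$.

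Any estimator that uses only in-support data must return the same answer for both. So it cannot be correct off-support for both, in line with the ambiguity argument of Prop.~\ref{prop:separation}.
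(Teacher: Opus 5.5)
Your proposal is correct and follows the paper's own inline argument: substitution into the factored form, uniqueness of the learned ODE applied to the true trajectory up to its exit time from $\mathcal R$, and a perturbation of $G_*$ that vanishes on the closed observed support. You additionally state the regularity and closedness assumptions that the paper leaves implicit; the one loose point is your side remark that integrators give identical discrete maps on $\mathcal R$, which requires every stage or implicit-solve evaluation to stay in $\mathcal R$, but nothing in the proposition depends on it.
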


This is a sufficient supplied-map construction. More generally, a known
nonlinear map can transport accurate content without multiplicative separability.
The nonidentifiability statement is not a converse
to the sufficient construction, and does not exclude transfer under
additional known or identifiable restrictions.
Parameter-support shift defines our extrapolation tests; it is not a
necessary condition for counterfactual answerability or a causal positivity
assumption.

\subsection{Restriction of the passive channel}
For the same isolated scalar coordinate-damping channel and positive
quadratic diagnostic, imposing $\sigma\ge0$ excludes the anti-damped
channel $\sigma<0$. Representing that sign reversal requires relaxing
the constraint or changing the model. This does not forbid external
work, thermal energy injection, or other driven dynamics.

\section{The complete per-experiment record}
\label{app:record}

This appendix reports one block per finding, in program order, and each block follows
the same structure: the setup, the gates fixed in advance, the result with its per-seed
statistics, the controls, and the verdict class with any boundary the finding carries.
Negative results, refutations, and retired metrics are reported on equal footing with
successes. The per-seed measurements, model checkpoints, and run logs behind these
numbers are provided as supplementary material; the values here are final and should not
be re-rounded.

\subsection*{Cavity reference calibration}
Ring cavity, palindromic symplectic integrator; 16 episodes/config, 50{,}000 steps. Gates:
relative energy drift $<10^{-4}$; Gauss residual $<10^{-10}$ and constant. Result: drift
$9.3\times10^{-6}$ (same-sign) / $1.38\times10^{-5}$ (mixed; binding case, $7.25\times$ margin);
Gauss $\le2.0\times10^{-14}$, exactly conserved. Build defects caught pre-training: first-order
splitting (secular; fixed palindromic), tensor-view aliasing (fixed by snapshotting).
\decisive{} gate; recalibration required on any configuration change.

\subsection*{Cavity behaviour suite: the first pass and its informative failure}
4 models $\times$ 5 seeds. one-step competence: structured recovers content to $0.2\%$. Counterfactual, charge-sign flip:
structured 0.95 vs non-factored 0.33, baseline 0.24 (circulation match). reset-kick ratio
$0.995\pm0.14$; the intervention test $\sim1$. \textbf{Stability FAIL}: all learned models exceed the 5\%
energy gate, since unconstrained learned field energy flattens away from the training data (probed directly: tanh-MLP
saturation where divergences escape), deviations to $2.9\times10^{7}$. Pre-registered overall
verdict FAIL, reported as such; mechanism identified in the confinement result below. Also seeded the counterfactual
informative-fail that became the criterion.

\subsection*{Composition and extrapolation dissociate}
Two counterfactual types on identical models. Unseen mixed-charge configuration (marginals
in-support): passed by both field-mediated models, failed by the baseline. Charge-sign flip
(support-moving): passed only by the factored model. Assigns composition to locality and
extrapolation to factoring; formalized as the structural criterion (\S\ref{sec:criterion}).
\decisive{} (upgraded statistics in the seed-level flip result).

\subsection*{Confinement restores stability; the coupling condition appears}
Confining per-mode energy (quadratic, learned stiffness on per-mode-scaled inputs $+$ bounded
correction). Gates: 4/5 pass (one-step competence; counterfactual primary+composition; reset-kick@1.0 $0.85\pm0.05$; the intervention test
$0.88\pm0.43$). Stability transformed: $0/5\to5/5$ bounded, slope $1.1\times10^{-7}$, but worst-case
episodes $5$--$12\%$ vs the 5\% bound, carried (characterized in the stability-tail result). Same-magnitude
field-bump channel misses the $\le2$ gate ($2.72\pm0.95$), disclosed. \textbf{Non-factored
control NOT rescued} by confinement (4/5 unbounded): stability needs the coupling condition too.
Parameterization lesson: learned coefficients in units where targets are $O(1)$.

\subsection*{The coupling power imbalance, measured}
Instrumented force/current interface under identical confining energy. Non-factored model
injects $\sim9\times$ the net energy per step of the factored model. Also self-corrected the first-pass
account (the original failure had two components: in-band pumping \emph{and} the flat tail).
\decisive{}; the pool-mean is a mechanism localizer, not a rollout number.

\subsection*{Magnitude counterfactual (cavity)}
Train $q=+1$, test $|q|=2$; 5 seeds. nMSE arm: structured $0.096\pm0.035$ vs oracle
$0.088\pm0.036$ (ratio $\approx1.09$); baseline $7.7\times10^{3}$; non-factored $4.2$ with
gyration-frequency ratio $-0.46$ (rotation reversed). Pre-registered frequency-within-5\%
criterion RETIRED: the oracle itself fails it 4/5 seeds (chaotic phase; ill-posed estimator), a
failed pre-registration of a metric. \decisive{} on the surviving arm.

\subsection*{Rate counterfactuals; guarantee vs representation}
Ports trained at $\sigma=0.5$ (label given), tested at $\sigma=0.25$ and $-0.5$; 3 seeds.
Learned scale $1.0045\pm0.0007$ ($+0.45\%$ systematic, truth outside interval). Rate transfer:
error $0.0017\pm0.0034$. Anti-dissipation: guarantee port clamps (rate error $1.03$, a
by-construction refusal); sign port tracks the genuinely divergent truth (energy growth
$\sim22{,}000\times$/2{,}000 steps) to $\sim1\%$; amplification ratio $2.15\pm0.83$ (the
pre-registered $>10\times$ instability prediction NOT confirmed, reported). No stability claim
for anti-dissipation; representation fidelity only. \decisive{}.

\subsection*{Multi-step training does not close the stability tail}
Pushforward training (K-step windows): widens seed variance (worst cases to $0.28$--$0.32$),
mildly costs fit; the structured model remains the reported configuration. Mechanism: training window $\ll$
excursion timescale. Negative, kept.

\subsection*{Deterministic second law PASS}
Dissipative + driven cavity; 5 seeds; port model. $\sigma=0.502\pm0.0003$ (truth 0.5);
field-energy decay-rate error $0.0038\pm0.0021$ (gate $\le0.10$); terminal ratio $0.995$;
driven tail ratio $0.998\pm0.003$; injection $6.04$ vs true $6.25$. Production diagnostic:
engineered violator flagged 5/5; structured 0/5; baseline itself flagged. Metric scope: rates on
field energy (kinetic undamped). Identification note: port under-fit at shared learning rate;
$10\times$ LR group fixed it ($41\%\to2.7\%$ override diagnostic). \decisive{}; deterministic
scope (stochastic half below).

\subsection*{Symbolic regression positioning}
SINDy (STLSQ), generous disclosed dictionary. Clean data: most accurate as positioned (worst-family
coefficient error $0.07\%$, zero false positives). Observation noise: collapses between $0.5\%$
and $2\%$ (particle-force family first; derivative amplification). Dictionary/dimensionality
boundary stated for the grid. A weak-form (integral, test-function) variant with the identical library is far
more robust (worst-family error $0.007/0.028/0.102$ at $0.5/2/5\%$ noise against $0.02/0.23/0.94$ for the
central-difference form). Both sides are reported.

\subsection*{Structural advantages survive observation noise}
Noise sweep over $0.5/2/5\%$ (training-time, inputs and targets). At $5\%$: sign-flip match $1.0$;
one-step fit $\sim$unchanged; stability envelope loosens gracefully ($0.045\to0.285$ at 10k steps);
baseline at chance throughout. Scope: eval-time estimation is the perception test. \decisive{}.

\subsection*{The criterion's refusal test (cavity)}
Non-separable parameter: particle-shape width $\sigma_s$ ($0.10\to0.16$), inside a Gaussian in every
gather/deposit. Pre-registered: (a) structured fails $\ge10\times$; (b) oracle-kernel control
succeeds $\le3\times$. Result: structured $4.78\pm0.02$ = $111\times$ in-distribution
(CONFIRMED); size-matched oracle $0.0607\pm0.013$ = $1.4\times$ (CONFIRMED; a first half-budget
control missed its bar, confound found, re-run size-matched, on record). \decisive{}.

\subsection*{Grid reference calibration}
Yee/Boris/Esirkepov PIC, $64^2$, staggered-consistent gather, binomial smoothing $\times3$.
Gates at 50{,}000 steps: drift $\le4.6\times10^{-5}$ (gate $10^{-2}$); Gauss residual
$3.1\times10^{-13}$ (gate $10^{-10}$). Numerical heating did not materialize. Build defects
caught pre-training: plane-wave \emph{test} direction (simulator right); static self-force
(two-layer root cause; smoothing $40\times$). \decisive{} gate; per-configuration scope.

\subsection*{The criterion reproduces on the grid}
3 seeds plus the external-field arm; equal capacity (8{,}711/8{,}591/11{,}306). One-step competence: structured one-step
$1.6\times10^{-9}$, rollout nMSE $0.044$; baseline broken (LR-sweep certified: best
$3.3\times10^{-4}$, still $\sim2\times10^{5}\times$ worse). Composition: $0.979/0.896/0.469$
(non-factored barely meets the 0.90 gate, QUALIFIED). Stability: structured 3/3 bounded, max
$0.0039$, flat; both controls diverge. reset-kick $1.14$; the intervention test $1.03$. $|q|=2$: out-of-band
divergence beyond $\sim T{=}75$ (boundary; resolved grid-specific below). Flip: superseded
statistics below. \decisive{} pattern.

\subsection*{Functional discovery: vacuum Maxwell}
Learned constitutive derivatives, slope-regressed over $\pm1.5$ field-std (pre-registered
estimator; the pointwise probe bug is on record): $f'_E$ slope $1.043$, $f'_B$ $1.001$, both
$r^2=1.0000$; $r_{\rm slope}=0.959\pm0.017$ (truth 1), a reported $\sim4\%$ E-side slope discrepancy
(discretization), disclosed with direction. In-band scope; linear (vacuum) constitutive law.
\decisive{}.

\subsection*{Grid sign-flip transfer at the seed level}
10 seeds, exact binomial, majority-per-seed. Structured 10/10 (pooled 315/320, $p=0.001$);
baseline 4/10; non-factored 5/10. Transfer is orbit-sense at trajectory level (nMSE $\sim3$ on
non-divergent seeds), resolved further below. \decisive{} (regime).

\subsection*{One model, four worlds (ten seeds)}
Unified cavity model: per-condition $\sigma_c=\mathrm{raw}_c^2$, shared content; L1 $10^{-5}$;
$10\times$ port LR; energy-trajectory auxiliary loss (50-step windows; targets from data;
readout uses the reference energy functional, disclosed known-physics supervision).
$\sigma_{\rm cons}=1.2\times10^{-5}$ (range $3\times10^{-6}$--$2.5\times10^{-5}$; implied 50k
loss $0.6\%$); $\sigma_{\rm diss}=0.481$ ($\sim4\%$ low, protocol-specific per the identifiability result);
$\sigma_{B_0}=$ exact 0. One checkpoint per seed, scored as-is: composition $1.0$; flip 10/10
($p=0.001$); stability bounded 10/10, slope $3.6\times10^{-7}$, but the pre-registered $5\%$ energy-deviation threshold is not met on 6 of 10 seeds, so boundedness and fidelity are reported separately (per-seed max
  $[0.333,0.663,0.052,0.036,0.031,0.044,0.107,0.286,0.057,0.037]$; worst $5.5\times$ the
specialist's, same characterized tail); second-law rate error $3.7\%$, injection $6.32/6.25$,
violator 10/10, clean. Controls: no-port fails dissipation (0.99); informed baseline (condition
$+$ drive $+$ $B_0$ features $+$ aux loss) still fails (rollout $\sim10^{6}$; second-law error
$\sim7$); sign-free port drifts negative on 3/5 seeds ($-1.6$ to $-3.2\times10^{-4}$).
\decisive{} at behaviour level; bound carried.

\subsection*{Cavity sign-flip transfer at the seed level, with controls below chance}
10 seeds. Structured 10/10 (pooled 318/320, $p=0.001$); baseline 2/10, non-factored 1/10,
\emph{below} chance: trained-rotation lock-in (configuration-scoped). $|q|=2$ arm: the
circulation metric is non-discriminating for sign-preserving moves (controls pass it 7--9/10
without the intended mechanism), scope correction applied program-wide; magnitude claims rest on the fidelity arms. \decisive{}.

\subsection*{What the grid flip transfers, horizon-resolved}
Phase-insensitive observables on existing checkpoints (10 seeds, medians): total energy $0.8\%$
at all horizons; speed error $4.6\%@25\to12\%@50\to33\%@100\to36\%@200$; controls fail at every
horizon ($8\times$--$67\times$). Mechanism: out-of-band drift (the $|q|$ envelope), not phase
decorrelation. Upgrades ``orbit-sense only'' to ``energy exactly; kinematics in-envelope.''

\subsection*{The stochastic second law in full}
Thermal cavity (charge-0 linear bath, FDT-matched noise; $T=0.5$, $\sigma=0.5$); 5 seeds;
temperature from the last 3{,}000 of 6{,}000 steps, cold start. Reference calibration first:
6/6, Gibbs $T$ to $0.6\%$. Model: $T=0.502\pm0.007$ ($0.3\%$; gate $10\%$); sectors
$0.502/0.502$ (magnetic sector thermalizes only via the learned reversible coupling);
Wasserstein $0.016\pm0.014$ (gate $0.10$); learned $\sigma=0.497$. Controls: FDT violator
(noise variance doubled, same model) settles $\approx2T$, flagged 5/5; clean 0/5; baseline runs
hot $\sim10^{6}\times$ (finite). Convergence note: an early $40$-epoch run showed a spurious
$28\%$ magnetic over-heat that was pure under-training and disappeared by $120$ epochs;
we record it so that a training-horizon artefact is not mistaken for physics. \decisive{}; one $(T,\sigma)$, known-reservoir scope.

\subsection*{Fluid reference calibration}
1D shallow-water, spectral implicit-midpoint. 50{,}000 steps ($250\times$ training horizon):
mass $3\times10^{-15}$; inviscid drift $1.2\times10^{-7}$ (linear) / $7.5\times10^{-5}$
(amplitude 0.1); wave speed $\le0.6\%$ worst mode (gate 1\%); viscous decay matches $2\nu$ to
$10^{-10}$; 18 tests. Smooth-regime scope.

\subsection*{Four behaviours in the fluid + the gravity counterfactual}
5 seeds, $\sim$8.7k params. One-step competence: structured $7.8\times10^{-11}$; baseline in-distribution
median $0.28$ (per-seed $[0.28,428,0.17,6.1,0.22]$, mean dominated by one seed; disclosed)
and 0/5 bounded at 10k steps ($h<0$, $|u|\to2400$). counterfactual gravity: train $g=1.0$, test $1.5$:
$6.8\times10^{-11}$ vs in-dist $7.7\times10^{-11}$ (exact); non-separable control
$3.2\times10^{-5}$ in-dist $\to1.06$ extrapolated; baseline $2.5\times10^{3}$. Stability: 5/5 vs 0/5;
max $7.1\times10^{-5}$. Mass: $4.5\times10^{-16}$ vs baseline $1.09$. Metric note: absolute
nMSE primary (smooth, non-chaotic; oracle ratios inflate at $10^{-10}$ base). \decisive{}.

\subsection*{Fluid second law and the viscosity counterfactual}
Second law: decay-rate error $1.8\times10^{-8}$; no-port control error $1.0$ (port necessary);
violator flagged 5/5; wind injection factor $17.4=$ truth. Viscosity transfer
$\nu:0.5\to0.25$, error $2.9\times10^{-8}$; anti-friction clamps to exactly 0 (guarantee).
Diagnostic on fluctuation energy (constant $\tfrac12gh_0^2L$ background excluded, lesson).
\decisive{}.

\subsection*{Fluid discovery: functional content + unified port}
$M(h)$ slope $1.000$ ($R^2=1.000$); $V'(h)$ slope $1.000$; effective coefficient
$1.000\pm5\times10^{-6}$ per unit supplied $g$ (gravity supplied, not discovered).
Unified (condition-index only): $\sigma_{\rm inviscid}=1.4\times10^{-4}$,
$\sigma_{\rm viscous}=0.50004$; sign-mode control shows no spurious anti-dissipation
($\approx-6\times10^{-6}$). The reported agreement concerns the tested class and support, not a general identifiability or optimization guarantee.
\decisive{}.

\subsection*{The fluid refusal test, certified}
Pressure exponent $\gamma$ inside $h^\gamma$; trained $\gamma=2$ (in-dist $1.0\times10^{-10}$):
refused at $\gamma=1.7$ ($0.75$; $7.4\times10^{9}\times$) and $\gamma=2.3$ ($0.27$;
$2.7\times10^{9}\times$). Learnability certified by the $\gamma$-oracle (same class trained on
each refused law): $1.3\times10^{-6}/3.7\times10^{-7}$, $>5$ orders below the refusal.
Registration scope: direction-level pre-registration, with the oracle control added later (dated).
\decisive{}.

\subsection*{The $|q|$ boundary is grid-specific}
Cavity, $B_0$, 5 seeds, horizon 1000 ($5\times$ the grid's): $|q|=2$ speed error
$0.027\%@25\to1.9\%@1000$; energy-vs-truth $0.12\%$; model drift $0.57\%$; trajectory
decorrelates to nMSE $0.45$ (bounded, phase-only; comparable to in-envelope $|q|=1.5$).
Widening training to $|q|\in\{1,1.5\}$ at equal volume: no change ($1.00\times$), nothing to
fix. Grid-side mechanism (grid-scale coupling at speed) remains a hypothesis, stated.

\subsection*{The grid refusal test (directional)}
Smoothing width $n_{\rm smooth}:3\to5$; 3 seeds. Structured refusal $7.0\times$
in-distribution (pre-registered bar $10\times$: NOT met, \directional{}); oracle-kernel
control $0.34\times$ (learnable, certified); baseline broken ($2\times10^{8}\times$). Milder
than the cavity's $111\times$ (gentler generator perturbation); farther width untested.

\subsection*{Identifiability, and a corrected attribution}
Clamp/profile: retrain everything else at $\sigma_{\rm diss}\in\{0.25,0.4,0.5,0.6,0.75\}$,
fit loss $0.012/0.004/\mathbf{0.0004}/0.008/0.015$: clean minimum at truth; conservative energy
uncorrupted. $\lambda$-sweep: at $\lambda=0$, $\sigma=0.50005$; at the default,
$0.4997$, so L1 contributes $-0.08$pp; the unified model's $\sim4\%$ bias is protocol-specific (mechanism
open); its caveat corrected (dated). \decisive{}.

\subsection*{The stability tail characterized, and an attribution refuted}
Diagnostic across 400 episodes: closest-approach vs worst-case deviation $r=+0.13$ (predicted
$\le-0.5$: wrong sign); attracting pairs reaching separation $\approx0$: $r=-0.06$; strongest
correlate: initial kinetic energy ($+0.24$). Lever: $67\times$ close-encounter oversampling
(fraction $0.087$ vs natural $0.0013$): worst-case $0.124\pm0.079$ vs $0.091\pm0.030$, no
closure; no fit/counterfactual regression. Both pre-registered predictions resolved against the
hypothesis; the attribution corrected program-wide with dated markers. The tail is a
high-initial-energy content boundary, carried.

\subsection*{Scaling probe 1}
$100\times$ FNO ladder (10k/100k/1M), certified 3-LR/20-epoch protocol. No rung competent
(rollouts diverge everywhere; one-step MSE worsens up the ladder under fixed epochs, so
the pre-registered fit-improves prediction failed); flip 6/10--2/10--4/10 (uninformative:
powerless metric on broken models). Self-retracted headline; superseded by the
competence-first design. Lesson: competence is the power condition.

\subsection*{Label-free regime identification}
Frozen unified core; self-supervised $\hat\sigma$ encoder (100-step window; loss = short-rollout
match; no rate labels). Separation: all held-out conservative $\hat\sigma$ ($0.070\pm0.008$)
below all dissipative ($0.449\pm0.008$), 5/5 seeds, $z=7.2$. Mid-rollout switch re-identified
($0.078\to0.457$). the second-law test with $\hat\sigma$: $9.8\%$ mean (per-seed $[9.7,11.3,8.3,9.5,10.0]\%$,
disclosed straddle). Boundaries: intermediate tracking $r=0.90$ (gate $0.95$),
\directional{}; $\hat\sigma_{\rm cons}$ floor $\approx0.07$ (finite-window limit), so
conservative rollouts degrade ($0.60$ vs $0.005$); controls reduce but don't zero the signal
(K=5: $z=1.3$; shuffled: sign-cancelling, one seed retains $z=3.6$). Gate text generated from
the pre-registered constants (an earlier misprint corrected, on record).

\subsection*{The perception boundary mapped}
\label{app:perception}
Lossy 1-D rendering (position blob + field traces; velocity never rendered); corruption sweep;
best-case estimator per level; frozen core. Single frame: position $0.034$ rad; $v_x$ $0.165$
(floor $0.290$), $v_y$ $0.242$ (floor $0.261$, at floor); energy-shell miss $8.8\%$, kinetic
($0.082$) not field ($0.025$). Three strided frames: $v_x$ $0.052$, $v_y$ $0.043$; shell
$3.5\%$; continuation $22.8\to4.5$ (oracle $0.013$). Field-trace ablation: $v_x$
$0.165\to0.297$ (to floor), so the single-frame signal is the field imprint. Per-component
floors are reported here; an earlier combined floor of 0.391 overstated the margins and is corrected.
\boundarylbl{}, measured; from-pixels binding open.

\subsection*{Fresh-data replication, with one gate that fails}
Three fresh generation seeds, disjoint. Flip: 15/15 seeds (pooled 479/480; controls 5/15,
1/15). $\sigma$: orders preserved on every draw ($\sigma_{\rm diss}=0.4988$, $0.2\%$;
corroborates the identifiability result); strict per-value gate $\sigma_{\rm cons}<10^{-4}$ FAILS marginally (7/9;
max $1.07\times10^{-4}$); fresh inference $\sim7\times$ looser than shared-data (worst-seed
implied 50k loss $5.2\%$, over the 5\% bound). An intermediate draft relabeled the recorded FAIL
as a bug; reversed, the boolean was right (ledger). Boundary kept.

\subsection*{Scaling probe 2: does model capacity buy the counterfactual?}
\label{app:scaling-probe}

\begin{figure}[t]
  \centering
  \includegraphics[width=\linewidth]{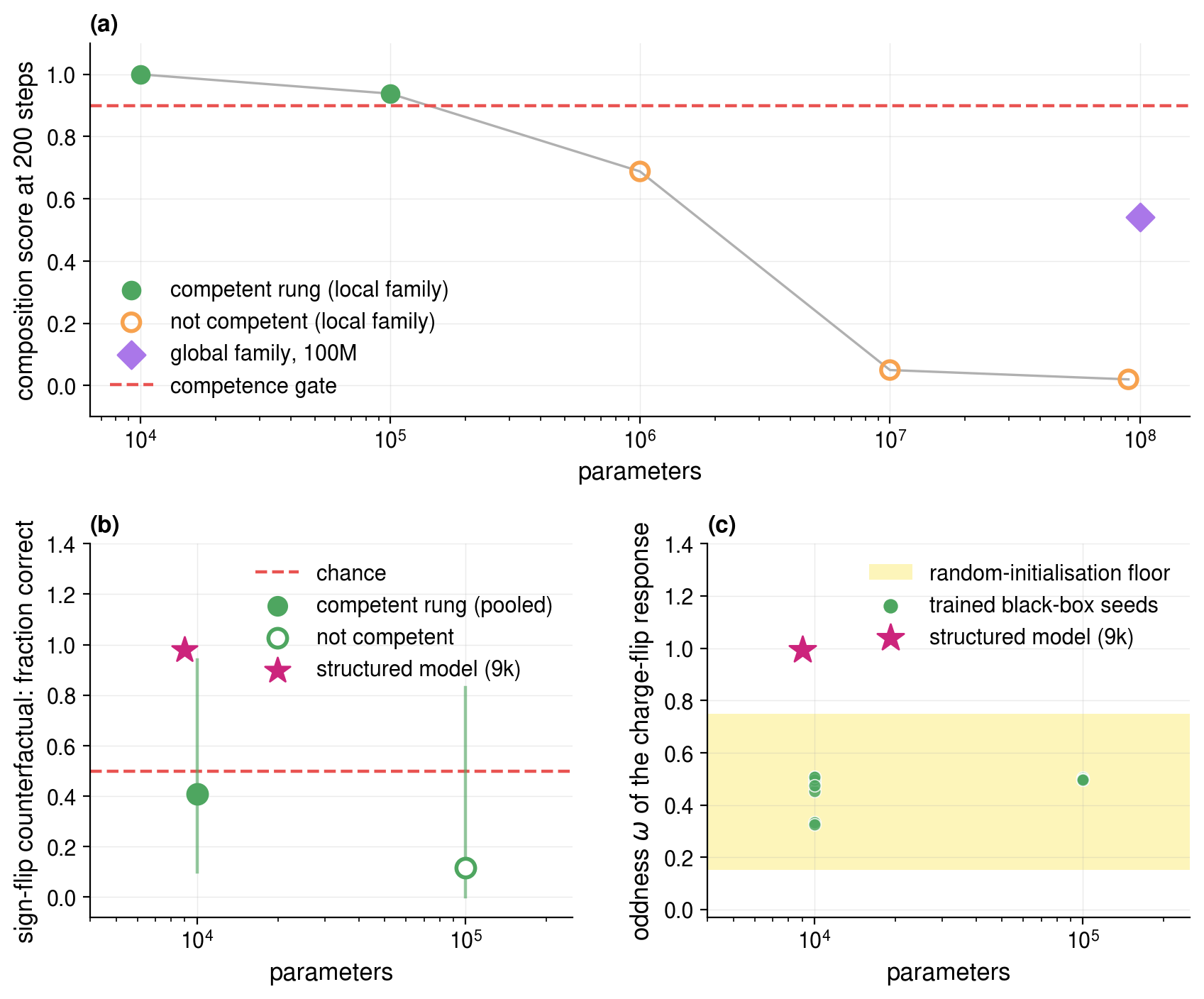}
  \caption{Scale does not replace structure (supporting study). (a) Composition score
    against parameter count for the local convolutional family (filled green: competent;
    open orange: not) and a global family (purple diamond), with the competence gate
    (dashed). (b) Fraction of correct sign-flip counterfactuals at the two competent rungs,
    with chance (dashed) and the small structured model (magenta star). (c) Oddness
    $\omega$ of the charge-flip response for trained black-box seeds (green) and the
    structured model (star), against the random-initialisation band (yellow). The reading
    is given in this appendix.}
  \label{fig:scale-scissors}
\end{figure}

Four families $\times$ five rungs ($10^4$--$10^8$), plateau training, a pre-registered three-branch
verdict, MIN\_COMPETENT $=2$. 100M-first gate: FNO/100M composition $0.542$ (decorrelated;
worse than freezing the state at $0.562$); CNN/100M intractable ($\sim$1.7h/epoch), unstable;
10k local anchor composition $1.000$ $\to$ competence is bought by locality; the study
then pivoted to the tractable local ladder (dated amendment). First test, composition
within the CNN family: $1.000\to0.938\to0.688\to$ divergence; an unlimited-data probe at 1M
improves the fit ($0.014\to0.004$) but not composition ($0.646$), a scale or optimisation
effect rather than data starvation (compute cap disclosed). Second test (3-seed): both
competent rungs below chance
($46/144$, $0/144$), giving a provisional pass reported as ``pending escalation.'' \textbf{Escalation
(10 seeds, fresh data): OVERTURNED}, since 100k trained-sign reliability $0.60$--$1.00$ (mean
$0.867<0.90$; drops out); 10k flip pooled $196/480=0.41$ (``intermediate''); verdict
\textsc{inconclusive-single-anchor}; guard stays up. Structured reference, same harness:
$141/144=0.979$. Incompetent-rung artifacts handled by the power gate (a $10^8$ FNO at $0.76$
pooled with reliability $0.53$ is a powerless metric, excluded by design). Directional signal
kept: competent-seed flips $0.00$--$0.17$ (lock-in); competence-by-locality; erosion with
scale. The main text's scaling claim is exactly what survives.

\subsection*{Odd/even response decomposition}
Evaluation-only on the twenty trained CNN flip seeds. Feeding identical states at $\pm q$ (charge
feature flipped, fields and kinematics held) and splitting the particle velocity update into odd and
even parts in $q$: the oddness fraction is $\omega=0.33$--$0.51$ across black-box seeds against
$0.994$ for the 9k structured model (metric validated). Per-seed $\omega$ does not predict per-seed
flip rate, Spearman $\rho=-0.25$ ($p=0.29$) over twenty seeds; anchor-only $\rho=0.57$ ($p=0.08$,
not significant). Pre-registered outcome (a): the three high-flip anchor seeds are sampling
variation, not an implicit odd-in-$q$ bias. A random-init control shows
$\omega\approx0.5$ is the no-preferred-parity floor (an untrained CNN gives mean $\omega=0.45$,
seeds $0.03$--$0.94$) and trained CNNs sit at it ($10^4$ $0.45$, $10^5$ $0.50$), not near
$0.994$: training acquires \emph{none} of the symmetry, it only concentrates $\omega$ at the floor.
$\omega$ is thus a decisive response-level separator (structured $0.994$ vs every \emph{trained}
black box $0.33$--$0.51$, no overlap) where the rollout-based scaling study is inconclusive.
This closes the open question flagged in \S\ref{app:scaling-probe} and supplies the
mechanism-level half of the evidence that scale does not substitute for structure.

\subsection*{Quartic fit-gap experiment}
The one-degree-of-freedom quadratic-plus-bounded implementation is fitted
to $V(x)=\tfrac12x^2+\tfrac14x^4$ and compared with a quadratic control.
Its reported fit error is $2.7\times10^{-3}$ versus $9.0\times10^{-8}$,
and fitted curvature at $x=2$ is $-4.5$ versus the reference $13.0$.
The independent $\beta$ sweep reports fit-floor ratios
$1,16.8,150,2040,4.6\times10^4,1.8\times10^5,9.1\times10^5$
for $\beta=0,0.1,0.25,0.5,1,2,4$.
These are empirical fitting failures. Bounded corrections can match quartic
behaviour on compact domains, so the result does not prove that the
admissible class cannot represent the observed truth. Coercivity also
does not forbid negative local curvature. The relative contributions of
capacity, correction bounds, and optimization remain unresolved.

\subsection*{Discovery rung 2: selecting the general form}
Candidate forms $\{H,\ H{+}\text{port},\ H{+}\text{port}{+}\text{drive}\}$ in the lumped $1$-DoF
harness (Hamiltonian core, non-negative dissipation port, optional explicit drive; learned inertia,
stiffness, $\sigma$), scored by a pre-specified selection statistic, $\mathrm{BIC}=N\ln\mathrm{MSE}+p\ln N$.
On each generating world the minimum-BIC candidate is the true form. Hardened protocol (amendment of 2026-07-31):
eight worlds on a $\sigma\!\to\!0$ and drive$\to\!0$ ladder, five seeds, $1\%$ noise, $40/40$
correct; a non-nested wrong form (dissipation on the wrong variable) never selected; correct-form
one-step error separates from the next-simpler candidate $1.1$--$3.1\times$ (penalty-assisted only at
$\sigma\!\lesssim\!0.05$, a characterized near-degenerate boundary). The general model's learned
diagnostics agree ($\hat\sigma=0.305$ vs true $0.30$; the drive switches on only when driven). The
same harness recovers the lumped constants to $1$--$3\%$ and does label-free dissipation discovery
($\hat\sigma=0.002$ conservative vs $0.252$ dissipative), the \S\ref{sec:discovery} capability in the
one-degree-of-freedom form. Scope: synthetic worlds with the truth inside the candidate set, so the
claim is that the statistic recovers the correct form when it is present; $\sigma$ is weakly
identified under a strong drive and near the conservative boundary, disclosed.

\emph{Flagship lift, with its five-seed escalation.} The identical BIC statistic
was applied to the paper's central system, the $16$-mode, two-particle EM cavity
(\S\ref{sec:systems}, f64 symplectic core), with the $8.9$k-parameter energy network
\emph{shared} across all three candidate forms, so BIC penalizes only the structural hypothesis,
not capacity. Three worlds (conservative/dissipative/driven), $400$ epochs, per-dimension
standardized one-step fit. At three seeds: $3/3$, every runner-up $\ge278$ BIC worse. The
pre-committed five-seed escalation \emph{softened} the conservative cell: dissipative and driven
remain decisive and seed-stable ($H{+}$port by $\Delta$BIC $2257$; $H{+}$port${+}$drive by
$1054$), but the conservative world flips to $H{+}$port by $29$ BIC, with
$\hat\sigma=0.0126$; the underlying fit difference is
$1\%$ of wMSE, swung by one seed. The pre-registered five-seed verdict is $2/3$, a boundary case, superseding the
three-seed $3/3$; ledgered in Appendix~\ref{app:integrity}. The port form recovers the
resistivity it was not told on every world: $\hat\sigma=0.013$ conservative ($\approx\!0$),
$0.502$ dissipative, $0.502$ driven (true $0.5$). Two reproducibility notes: the fit metric must
be the per-dimension standardized (1/std-residual) weighted MSE, since the port's per-step signal is
$\sim\!0.1\%$ of a state's residual, so the raw full-state MSE is diluted and selects the simplest
form regardless; and the conservative $H$-vs-$H{+}$port tie-break is degenerate at the fit but
\emph{not} at evaluation horizons: $\hat\sigma=0.0126$ sheds $6\%$ of the field energy over the
$500$-step window yet $99.8\%$ over the $50$k-step stability horizon ($E\propto e^{-2\hat\sigma t}$),
so the flipped selection carries drastically wrong long-horizon physics, the near-degeneracy
characterized on the $1$-DoF ladder, now demonstrated to flip a selection on the distributed
system, and the reason removing this residual is registered as a follow-up study.

\subsection*{The selection loop and its guard}
No new runs: the min-BIC form selector presupposes the true form is in its candidate set; the misspecification alarm's
reference-free detector (structured-vs-capacity-matched-unstructured floor, inverting at
$\beta\ge0.25$ with no ground truth) is the fallible diagnostic for that presupposition. Composed
(\S\ref{sec:discovery}, Fig.~\ref{iclr:fig-loop}): propose forms $\to$ fit under one protocol
$\to$ guard (alarm $\Rightarrow$ investigate fit failure and candidate adequacy) $\to$ select $\to$ recover the
form and rates. Every edge is backed by a recorded run: the selector at $40/40$ on the $1$-DoF
stress ladder, and on the flagship cavity decisive wherever dissipative/driven structure exists
($\Delta$BIC $\ge1054$ at five seeds; the conservative tie-break's five-seed flip, at $2/3$,
is the disclosed boundary, superseding the earlier three-seed $3/3$). Scope: this is a first
version on synthetic worlds with small candidate sets, and loop closure that auto-proposes new
forms is not yet built.

\subsection*{Energy-auxiliary unification: a pre-registered negative result}
The misfit-residual hypothesis, that the residual
conservative $\hat\sigma$ absorbs one-step objective error, curable by the unified model's energy-trajectory
auxiliary (which reached $\hat\sigma_{\rm cons}=1.27\times10^{-5}$ on this same cavity), was
pre-registered and \emph{failed under its pre-registered thresholds}: with the
exact energy meter and the unified model's window protocol, $\hat\sigma_{\rm cons}$ moved $0.0126\to0.0107$
($1.2\times$ against a $10\times$ bar), every fit degraded, and the conservative flip widened to
$750$ BIC. The unified model's L1-on-$\sigma$ was deliberately omitted (it would confound the test); how its
$10^{-5}$ decomposes among its levers (L1, energy aux, joint multi-world training) is now an open
attribution question, stated as such. The one-step $\sigma$ floor is an objective-level
identifiability boundary. It suggests a \emph{trusted-meter principle}: an energy supervisor
must be exact or computed from grounded data only, since a learned functional supervising its
own trajectories opens a self-consistency escape instead of constraining the physics.

\subsection*{Decomposition of the training procedure}
The L1 arm ($\lambda=10^{-5}$, the unified model's
value): selection \emph{restored} to $3/3$ (conservative $\to H$ by $280$ BIC; the paired
  profile-likelihood clamp tie-break, per-seed $H$-vs-$H{+}$port wMSE differences,
$-2.0\times10^{-3}\pm4.4\times10^{-3}$, CI includes $0$, independently selects $H$), with the
identifiable rate unbiased ($\hat\sigma_{\rm diss}=0.502$; identifiability reconfirmed in-harness). But
$\hat\sigma_{\rm cons}=0.0126$ is unmoved, failing the pre-registered eval-horizon gate
($\le10^{-4}$): the L1 stops the port from converting this residual into a \emph{fit advantage}
without removing the residual itself. The pre-registered verdict for this arm is FAIL on that gate, since selection integrity
is repaired while long-horizon negligibility-when-selected is still open. The second
arm (L1 $+$ aux at the lowered $W=0.01$) is strictly worse, since every fit degrades $3$--$4\times$
and even the dissipative selection misses ($2/3$) with $\hat\sigma_{\rm cons}$ still unmoved,
confirming the energy-aux negative at a tenth of the weight.

\subsection*{The residual near-zero rate}
(a) The pre-registered
$\sigma^{*}$ prediction, the static regression of the trained $H$ core's one-step error onto
the port decay direction, gives $4.1\times10^{-4}$, $30\times$ below the $400$-epoch residual rate:
this was not predicted, because the residual rate at protocol budget is not a property of the converged core's residual
field. (b) The misfit ladder \emph{refutes the infeasibility framing we expected}:
$\hat\sigma_{\rm cons}$ drains as ${\sim}1/\text{epochs}$ ($0.069/0.0126/0.0035/0.00026$ at
$40/400/1600/6400$; log-log slope $-1.055$), reaching the $10^{-4}$ eval-horizon bound at an
extrapolated ${\sim}2.7\times10^{4}$ epochs, feasible, a transient rather than a wall. Post
hoc synthesis, flagged: the asymptote matches $\sigma^{*}$'s magnitude, consistent with the
residual being a \emph{co-training transient} that relaxes toward the converged core's
static-regression floor, explaining at once why the energy aux could not drain it (wrong
lever), why L1 restores selection without draining (it suppresses the transient's fit advantage,
not the transient), and why training alone eventually does. This also dissolves the unified model's
attribution question: no special lever is needed, only budget.

\subsection*{Removing the residual rate with longer training}
A pre-registered continuous run to the budgeted horizon:
$\hat\sigma_{\rm cons}(27{,}500)=5.1/4.1\times10^{-6}$ (2 seeds; bar $10^{-4}$, P-cap-1
CONFIRMED, the crossing at $8.0$--$8.5$k epochs, so the residual-rate price was $\sim3\times$ conservative), and
the measured decay steepens to $\approx t^{-2.5}$ (slopes $-2.49/-2.55$, outside the
  pre-registered $1/t$ band, P-cap-2 NOT CONFIRMED as pre-specified; the ladder's early points were not
asymptotic and its exponent is retired). The mechanism's floor signature held cleanly:
$\sigma^{*}(t)$ stays at $|\sigma^{*}|\le6\times10^{-5}$, essentially zero, from epoch
$5{,}000$ onward (one seed's $2{,}500$ checkpoint reads $1.64\times10^{-4}$, still $12\times$
under the pre-registered gate) while $\hat\sigma(t)$ relaxes monotonically down to it, terminating
\emph{below} the unified model's $1.27\times10^{-5}$: the question is closed empirically. Nothing about
the conservative cell remains open.

\subsection*{Wave 1, the ODE zoo (8 systems, full pass cards)}
Pre-registered card variants with the pre-registered constants; every verdict printed by the harness.
The pendulum PASS (one-step competence $2.2\times10^{5}\times$; worst energy bound $0.93\%$; $\hat g$ to $0.3\%$;
  geometry refusal test refuses at $1.5\times10^{6}\times$ with a hand-set truth anchor at
$0.03\times$ base). The double pendulum PASS\textsuperscript{b} (climate 3/3, counterfactual climate 10/10 seeds; one-step-competence sprint
$13.4\times$ under the $10^{3}$ gate, carried). The RLC circuit PASS\textsuperscript{b} ($\hat\sigma=0.2999$ vs
$0.300$; $\hat\omega^{2}=1.0004$; the counterfactual test $6.5\times$ carried). The drag projectile PASS (clean twice; violator
prints DIVERGED and is flagged a fortiori). The spinning top PASS under the gauge-free gates (one-step competence
$5\times10^{11}\times$; Casimir machine-exact; $\kappa$-differences in-tol; uniform-scale counterfactual).
Lorenz-63 PASS\textsuperscript{b} (climate 3/3, the counterfactual test 10/10; one-step competence $0.38\times$ carried, the split buys the long
game). Dry friction PASS\textsuperscript{b} (guard $3.5\times10^{4}\times$; $\hat\mu N$ to $0.012\%$; material
  refusal test passes by truth anchor; counterfactual carried on a stick-degenerate target world, mechanism
recorded). The DC motor PASS (power balance $5.6\times10^{-17}$ architectural; saturation refusal test).

\subsection*{Wave 2, 1D fields (4 systems; exact-propagator references)}
All references calibrate at machine precision against analytic mode solutions
($10^{-16}$-level; mass drift $4\times10^{-14}$), so gate failures are the model's, never the
integrator's. The spring lattice, the heat rod, advection--diffusion, and the plucked string all PASS. Highlights: the heat rod's flux-form refusal test refuses at
$1.6\times10^{7}\times$ with the bond-form mobility oracle separating at $7600\times$; advection--diffusion's
mass conservation is architectural ($1.8\times10^{-14}$ on model rollouts) and its two
instrument bugs (a sign convention the model exposed by fitting $-w$ exactly; a projection
instrument that conflates transport with decay, replaced by the FFT magnitude) are on record;
the plucked string's locality refusal test is certified by the curvature-bond class that represents the
biharmonic exactly.

\subsection*{The instrument findings}
(1) the drive-time bug (batch-mean drive times poisoned every driven system while sparing the
undriven ones); (2) the Lie--Poisson $\kappa$ gauge (Casimir multiples are flow-null; measured
as $5\times10^{11}$ one-step competence beside a $6\times10^{8}$ axis-wise counterfactual failure); (3) certified floor-fitting
procedure (a hand-set truth floor at $2.8\times10^{-10}$ vs a fixed-LR stall at
$1.4\times10^{-2}$); (4) the amendment-5 refusal instrument (within-class refusal + oracle
separation or truth anchor; common-base ratios retired as truncation-confounded); (5) oracle
classes must provably contain their refusal-test worlds (pointwise vs flux-form mobility;
pad-2 stencils vs curvature bonds); (6) refusal-test worlds can be degenerate ($\mu_s$ at the
drive amplitude sticks forever); (7) violator meters need a divergence branch; (8) event-time
quantization bounds every instrument near events (calibration, model substeps, and the
one-step floors of \emph{all} classes including alarm references); (9) reference-grade
thermostats are part of the instrument (BAOAB; Boris rotations for workless magnetic forces);
(10) architectural guarantees relocate violators from the class to the world ($\hat e<1$,
$\sigma\ge0$).

\subsection*{The four exceptions}
bouncing contact CLOSED (guard-first fires vs the soft-wall base; contact recovered to machine precision:
$\hat g=1-3\times10^{-14}$, $\hat e=0.8-3\times10^{-12}$, oracle floor $10^{-24}$). The interacting thermal bath
characterized boundary (all discoveries and both FDT meters pass, incl.\ the sector-asymmetric
  violator at $66\%$ and the magnetic sector via Boris; stability and equipartition-B fail through the
single quantified $\hat T$-bias chain). The shallow-water dam break closed on the characterized-limit arm (guards
  refuse $36$--$40\times$; the localized port finds the fronts, since localization passes with rich
features at the measured cost of the $\hat g$ discovery gauge; coarse-stencil limit measured).
Bare gravity: the softening ladder is the deliverable, using an adaptive reference with
per-rung numerical calibration.

\subsection*{The selection matrix}
One menu across every world; the frozen selector in its final form (structural-channel BIC + L1 +
drained-channel collapse + CI-window clamp tie-break + the budget flag). Every
declared bracket variant is selected correctly (Lie--Poisson, metriplectic, skew-transport,
port-Hamiltonian); both out-of-menu controls fire; bare gravity's correct-but-undrained row self-flags.
The pre-registered $\ge14/15$ gate is \emph{missed} at $12/15$ registry systems: the three misses
(dry friction, bouncing contact, the shallow-water dam break) share one mechanism, since floor-ratio alarms require dense misfit and a competent
reference, and impulsive/shocked worlds offer neither; their misspecification was caught by
the guard-first protocol at card level instead. The division of labor (alarms for dense
misfit, matched-structure guards for events, budget flags for fake-conservative calls) is the
matrix's methodological yield, and the three selector passes with their per-pass results are
all in the record.

\section{Extended related work and the novelty audit}
\label{app:related}

\begin{table}[h]
  \centering\footnotesize\setlength{\tabcolsep}{3pt}
  \caption{Closest structured methods: what each is supplied, what it learns, what changes at
    test time, and what is actually guaranteed. The present work's advance is the explicit
    supplied-versus-learned ledger with estimation error carried into downstream predictions,
    not any ingredient individually.}
  \label{tab:related-compare}
  \begin{tabular}{@{}p{0.19\linewidth}p{0.2\linewidth}p{0.19\linewidth}p{0.17\linewidth}p{0.19\linewidth}@{}}
    \toprule
    Method & Supplied & Learned & Changes at test & Guaranteed \\
    \midrule
    HNN / SymODEN \citep{greydanus2019hnn,zhong2020symoden} & symplectic form & $H$ & evaluated inputs & Hamiltonian-form continuous dynamics; solver-dependent discrete behaviour \\
    D-HNN \citep{sosanya2022dhnn} & $H$ + Rayleigh split & $H$, dissipation & friction coefficient (evaluated) & none on the sign \\
    Port-HNN \citep{desai2021phnn} & port-Hamiltonian form & $H$, forcing, dissipation & inputs & passivity of the port \\
    Compositional pH \citep{neary2023compositional} & pH interconnection & subsystem $H$, ports & composition & pH structure \\
    GFINNs \citep{lee2021gfinn} & GENERIC with degeneracy & $L,M,E,S$ & none & degeneracy laws \\
    Comparative evaluation \citep{zhong2021benchmark} & various & various & none & benchmark only \\
    This work & discretization, field kernels, coupling channel, confining class & constitutive functions, fitted rates, inferred context & factored parameter values & quadratic-energy dissipation; marginal conformal coverage \\
    \bottomrule
  \end{tabular}
\end{table}

This appendix records the extended related-work comparison, so the main text's differentiator
claims are checkable line by line.

\subsection{The nearest neighbors, adversarially read}
\textbf{Dissipative HNNs (D-HNN)} \citep{sosanya2022dhnn} learn an additive
Hamiltonian-plus-Rayleigh decomposition and can, on the system they are fit to,
exhibit a small dissipative component on conservative data. This is the
phenomenon-level precedent closest to our $\sigma\to0$ result, which we discuss
openly here. The differences that matter are as follows. D-HNN fits one system at a
time, with no cross-condition discovery on shared content, no condition-index
protocol, and no label-free variant. Its dissipative head is unconstrained in sign,
which means it has no second-law guarantee. This is precisely the degree of freedom
our sign-free control shows drifting negative on ambiguous data. D-HNN does
evaluate prediction at unseen friction coefficients, so it is not correct to say it
lacks counterfactual evaluation. What it does not include is the long-horizon,
violator-control, and label-free context-inference program, together with the
explicit supplied-versus-learned accounting and downstream error propagation that
accompany the decomposition here (Table~\ref{tab:related-compare}).

\textbf{Dissipative SymODEN} \citep{zhong2020dissipative} and \textbf{port-Hamiltonian NNs}
\citep{desai2021phnn} learn dissipation and control terms with physically-motivated structure
(PSD damping and port interconnection, respectively). Both share the same
single-system scope as D-HNN: they perform identification rather than discovery,
include no counterfactual criterion, and run no measurement program.

\textbf{GENERIC/metriplectic learning} \citep{hernandez2021,lee2021gfinn,gruber2023} imposes
the same reversible/irreversible skeleton we use, with degeneracy conditions enforced
architecturally, the closest structural ancestry for our port. These works establish relevant structure-preserving learning methods.
Our comparison concerns the combination of shared-content rate estimation,
supplied-versus-learned accounting, and recovery-linked prediction tests,
not absence of individual capabilities from all earlier work.

\subsection{Adjacent lines}
Learned simulators (GNS, MeshGraphNets, FNO) supply our scale frontier and the locality lesson;
system identification (SINDy and variants) supply the clean-data competitor we position
against; DeLaN and pH system identification supply the robotics-side
friction-learning precedent (single-system, torque-supervised); and
multi-environment/context adaptation lines (e.g.\ CoDA-style) share the ``shared content,
per-condition parameter'' protocol shape without physical structure or behavioural targets.

\subsection{The central claim, restated}
The paper has one central claim, the one its main text is organized around: \emph{within
explicitly supplied physical representations, data recover useful constitutive functions
and dissipation parameters, and the recovered content, its identifiability limits, and its
estimation error are carried into the downstream predictions that use it}. The
cross-condition dissipation decision, the quadratic-energy dissipation guarantee, and the
behavioural program are supporting evidence for that claim, not a second contribution of
record; the guarantee is scoped to the quadratic diagnostic energy
(Prop.~\ref{prop:discrete2law}), and the label-free inference carries its measured rate
floor into every prediction it feeds.

\end{document}